\documentclass{alt2027}
\usepackage{times}
\usepackage{xurl}
\usepackage{microtype}
\usepackage{mathtools}
\usepackage{booktabs}
\usepackage{enumitem}
\usepackage{longtable}
\usepackage{array}
\usepackage{ragged2e}
\usepackage{pdflscape}
\usepackage{makecell}

\usepackage{latexsym}
\usepackage{verbatim}
\usepackage{bbm}

{\begin{list}               
    {$\bullet$ \hfill}{
        \setlength{\leftmargin}{\parindent}
        \setlength{\parsep}{0.04\baselineskip}
        \setlength{\itemsep}{0.5\parsep}
        \setlength{\labelwidth}{\leftmargin}
        \setlength{\labelsep}{0em}}
    }
{\end{list}}

\providecommand{\cref}[1]{Chapter~\ref{chap:#1}}

\providecommand{\R}{\ensuremath{\mathbb{R}}}

\renewcommand{\vec}[1]{\ensuremath{\boldsymbol{#1}}}

\newcommand{\E}{\mathbb{E}\,}

\DeclareMathOperator{\tr}{tr}
\DeclareMathOperator{\diag}{diag}

\providecommand{\vm}{\vec{m}}

\allowdisplaybreaks
\hypersetup{pdfauthor={}}

\newcommand{\range}{\operatorname{range}}
\DeclareMathOperator{\rank}{rank}

\title[Finite-Sample Trajectories of Regularized Lin-RFM]
{Exact Dynamics and Finite-Sample Trajectory Recovery\\
 of Linear Recursive Feature Machines}

\altauthor{
Andrew Cheng\textsuperscript{1},
Bobak T. Kiani\textsuperscript{2},
Yue M. Lu\textsuperscript{1},
Adityanarayanan Radhakrishnan\textsuperscript{3}
\\[0.5em]
\textsuperscript{1}Harvard University \qquad
\textsuperscript{2}Bowdoin College \qquad
\textsuperscript{3}Massachusetts Institute of Technology
}

\begin{document}
\maketitle
\begin{abstract}

Recursive feature machines (RFMs) learn representations of data by alternating between fitting a predictor to a dataset and updating features of that predictor using the average gradient outer product (AGOP). Connections between AGOPs and feature learning in neural networks motivate linear RFMs as a simple setting for analyzing how representations evolve during training. 
Here, we study the dynamics and statistics of linear RFM in noisy multi-output regression with isotropic sub-Gaussian input data and targets generated by a low-rank teacher matrix of dimension $d$. 
We extend the known connection between linear RFM and iteratively
reweighted least squares from the interpolating setting to 
ridge-regularized multi-output regression with noise.
We show that the learned feature matrix remains close to its infinite-data ideal counterpart at every iteration. Namely, for $n$ samples, we show the error in the feature matrix decays as $O(\sqrt{d/n})$ with high probability.
Experiments on real-world text and single-cell gene-expression data
illustrate the features learned by this simple linear model.
\end{abstract}
\begin{keywords}
recursive feature machines, iteratively reweighted least squares,
feature learning, trajectory stability, multi-output regression
\end{keywords}
\section{Introduction}
\label{sec:introduction}

Feature learning seeks representations that expose the structure relevant to a prediction task. In neural networks, this takes the form of adapting intermediate representations during training, raising the question of which features are learned and how they emerge.~\cite{radhakrishnan2024mechanism} proposed a connection between the directions emphasized by trained neural networks and their average gradient outer products (AGOPs)~\citep{hardle1989investigating,hristache2001structure,trivedi2020expected}. The AGOP measures a predictor's sensitivity to input directions marking directions most relevant to a given prediction task.

This line of work motivated the development of a training algorithm that explicitly uses AGOPs called recursive feature machines (RFMs). RFMs alternate between fitting a predictor and updating the representation using its AGOP~\citep{radhakrishnan2025linear}. Recent work has found remarkable applications for RFM including the monitoring and steering of AI models~\citep{beaglehole2026universal, hsu2026contextual, davarmanesh2026efficient}, state-of-the art tabular methods~\citep{beaglehole2026xrfm, gedon2024uncertainty}, feature learning of neural networks~\citep{boixadsera2026fact, mallinar2025emergence}, and preconditioning for 
optimization~\citep{depavia2026faster}.

Studying RFM in the linear setting provides a tractable setting to better understand the mechanism behind feature learning and its implications. \cite{radhakrishnan2025linear} introduces linear RFM (lin-RFM) and connects it to low-rank matrix recovery through
iteratively reweighted least squares (IRLS)~\citep{daubechies2010irls, mohan2012iterative, lai2013improved, nikolova2005analysis} in the noiseless and interpolating setting where the data can be perfectly fit.

Here we study regularized
lin-RFM in noisy multi-output regression. Each iteration fits a ridge
predictor and updates the representation using a spectral
transform of its AGOP. Although the end-to-end predictor remains linear, the evolving representation changes the penalty or weighting assigned to different input directions.
This iterative procedure generates a trajectory of
feature-reweighting matrices whose evolution depends on
the data, noise level, and regularization parameters.
Because every iteration reuses the same finite dataset,
estimation errors may be amplified by subsequent updates.
It is therefore unclear whether these errors remain
controlled throughout the trajectory.

Our focus here is to understand rigorously how representations evolve in the linear RFM algorithm.
We address three questions:
\begin{enumerate}[leftmargin=2em]
\item \textbf{Exact dynamics, finite-time behavior, and rates}: \emph{What are the exact dynamics of the learned representation and predictor at each finite time step?}
\item \textbf{Finite-sample trajectory recovery}: \emph{How much data are required for the empirical representation and predictor to follow the exact trajectory?}
\item \textbf{Convergence results:} \emph{Does the RFM algorithm converge, and if so, can we characterize its limiting point?}
\end{enumerate}

\paragraph{Main contributions.}
We answer the three questions above. We study a student-teacher setting where the linear RFM algorithm is trained to predict noisy data $y=Ex+\xi$ generated by a low rank teacher matrix $E$ and noise $\xi$. 
\begin{enumerate}[leftmargin=2em]
\item \textbf{Exact dynamics, finite-time behavior, and rates.}
Proposition~\ref{prop:population-dynamics} characterizes the exact learned
representation and predictor at every finite iteration when trained on isotropic data. The first update identifies the task-dependent subspace exactly. Later updates preserve this subspace and only reweigh the individual directions according to an explicit
one-dimensional spectral recursion. We show this spectral recursion is of the type analyzed in the framework of half-quadratic minimization methods~\citep{nikolova2005analysis}. Their theory studies convergence rates and establishes the convergence profile of our exact dynamics which shows linear convergence (after a finite-time burn-in period) to the optimal point. 

\item \textbf{Finite-sample trajectory recovery.}
Theorem~\ref{thm:deterministic-tracking} is our main
technical result. Together with
Corollary~\ref{cor:subgaussian-tracking}, it shows that,
under isotropic sub-Gaussian input data and bounded
teacher rank, the entire empirical representation and
predictor paths remain uniformly close to their
population counterparts with high probability. For fixed dimension and confidence level, the
discrepancy decreases at the usual $n^{-1/2}$ rate in the sample size. Crucially, this error does not
accumulate across iterations, even though every update reuses the same data. The exact population dynamics therefore remain informative
for the finite-data algorithm at every stage of its trajectory.

\item \textbf{Convergence path and implicit bias.}
We show that regularized lin-RFM is a specialized instance of the IRucLq-M algorithm~\citep{lai2013improved} which covers a class of regularized multivariate regression problems. Lin-RFM’s updates and its corresponding convergence  map to a specific choice of the regularization in IRucLq-M. This choice is also the regularized analogue of the lin-RFM optimization~\citep{radhakrishnan2025linear} who analyzed the interpolation setting where the training data can be perfectly fit.
Our exact dynamics analysis characterizes the path to this limiting point and our
sample complexity analysis transfers this behavior to the empirical path. Namely, the empirical path follows the population trajectory uniformly in time up to a statistical neighborhood whose radius decreases with sample size and, after a finite number of steps, inherits the classical linear convergence profile of IRLS up to this statistical neighborhood~\citep{daubechies2010irls}.

\end{enumerate}

We do not claim a novel algorithm, rather we specialize the setting of unconstrained matrix-IRLS (e.g.,\citep{lai2013improved}) to multivariate output regression and leverage this additional structure to establish our results. To the best of our knowledge, prior IRLS analyses do not combine an exact finite-time trajectory, uniform statistical control over the empirical path, and sample complexity recovery rates.

\paragraph{Relation to adjacent theory.}
The connection between RFMs and feature learning in neural networks was established by~\citep{radhakrishnan2024mechanism}. Subsequent work has developed both sides of this connection: \citep{boixadsera2026fact} derives a first-principles characterization of neural-network feature geometry at convergence and relates it to AGOP, while \citep{zhu2025reweighted} shows that iterative gradient-based reweighting enables kernel methods to efficiently learn sparse hierarchical functions. Gradient outer-product matrices also have a longer history in dimension reduction and active-subspace methods~\citep{constantine2015active}. More recently, related gradient-outer-product geometry has been used for optimization: \citep{depavia2026faster} shows that reparameterizing adaptive methods in the eigenbasis of an expected gradient outer-product matrix can improve convergence under spectral decay. This connects to a broader literature on curvature-aware and matrix-preconditioned optimization for neural networks~\citep{vyas2025soap,gupta2018shampoo,martens2015kfac}.

Our work is also related to iteratively reweighted least-squares (IRLS) methods. Sparse-vector IRLS was analyzed by Daubechies, DeVore, Fornasier, and G\"unt\"urk~\cite{daubechies2010irls}, while subsequent work developed IRLS methods for sparse recovery and low-rank matrix problems~\citep{mohan2012iterative,lai2013improved}. \citep{radhakrishnan2025linear} subsequently connected lin-RFM to IRLS in the interpolating setting and interpreted the evolving weight as a learned feature transformation. Recent high-dimensional trajectory theory for related reweighted algorithms uses fresh independent batches across iterations~\citep{kaushik2024precise}, while statistical guarantees for low-rank regression characterize the accuracy of regularized estimators~\citep{negahban2011estimation,bunea2011optimal}. In contrast, our analysis controls each empirical iterate relative to its population counterpart uniformly over all iterations while reusing a single fixed dataset.

\paragraph{Organization.}
Section~\ref{sec:algorithm} introduces the regularized lin-RFM recursion, its streaming implementation, and its fixed-data relationship to matrix IRLS. Section~\ref{sec:implicit_bias} then gives the complementary optimization perspective and characterizes the fixed-data limiting point. Section~\ref{sec:population-rate} derives the exact finite-time population trajectory and its eventual linear convergence rate. Section~\ref{sec:sample-complexity} proves uniform non-asymptotic finite-sample trajectory recovery under isotropic sub-Gaussian data. Complete proofs and additional experiments are deferred to the supplement.

\section{Regularized lin-RFM}\label{sec:algorithm}
We consider supervised multivariate regression from training pairs
$(x_i,y_i)_{i=1}^n$, where $x_i\in\mathbb R^d$ and
$y_i\in\mathbb R^p$ and train a predictor of the form
\[
f(x)=ABx=:Fx.
\]
Here $B\in\mathbb S_{++}^d$ is the learned representation,
$A\in\mathbb R^{p\times d}$ is the linear output layer, and
$F=AB$ is the resulting end-to-end predictor. Although $F$
remains linear in the original input, this factorization separates
feature learning from prediction. Here, the matrix $B$ determines the linear
features $Bx$ presented to the final linear layer. Assuming $B$ is invertible, the factorization does
not enlarge the class of end-to-end linear predictors. Rather, it
changes the representation in order to emphasize task-sensitive feature directions and down-weight irrelevant directions.

Let $X\in\R^{d\times n}$ contain the input vectors as columns and let $Y\in\R^{p\times n}$ contain the corresponding responses. Define the empirical covariance and input--output cross-moment
\[
S_n:=\frac1nXX^\top,
\qquad  E_n:=\frac1nYX^\top.
\]

Starting from the identity representation, regularized lin-RFM
alternates between fitting the linear output layer on the current representation and
using the resulting predictor to update the representation:

\begingroup
\setlength{\algomargin}{1.2em}
\begin{algorithm2e}[H]
\caption{Regularized lin-RFM}
\label{alg:regularized-lin-rfm}\label{eq:feature-learning-recursion}
\DontPrintSemicolon
\LinesNumbered
\KwIn{Training matrices $X\in\R^{d\times n}$ and
$Y\in\R^{p\times n}$; parameters $\lambda,\epsilon>0$;
$T\in\mathbb{N}$.}
$B_0\gets I_d$\;
\For{$t=0,\ldots,T$}{
    $\displaystyle A_t\gets
    \underset{A\in\R^{p\times d}}{\arg\min}
    \left\{\frac{1}{2n}\|AB_tX-Y\|_F^2
    +\frac{\lambda}{2}\|A\|_F^2\right\}$\;
    \tcp{fit the ridge readout}
    $F_t\gets A_tB_t$\;
    \If{$t<T$}{
        $B_{t+1}\gets
        \bigl(F_t^\top F_t+\epsilon I_d\bigr)^{1/4}$\;
        \tcp{update the features}
    }
}
\end{algorithm2e}
\endgroup

\paragraph{Streaming implementation.}
The complete trajectory of Algorithm~\ref{alg:regularized-lin-rfm} depends on the data only through the sufficient statistics $(S_n, E_n)$. These matrices can be accumulated in one pass, so the full sample matrices need not be stored. If each input has at most $k\ll d$ nonzero entries and the response is one-hot, then one sample updates $S_n$ in $O(k^2)$ operations and $ E_n$ in $O(k)$ operations. Once the moments have been formed, every RFM update is independent of $n$; the state occupies $O(d^2+pd)$ memory. The method therefore removes the sample-size bottleneck in the $n\gg d$ regime, although the remaining dense linear algebra has runtime dependent on the admissible feature dimension.

\paragraph{Regularized lin-RFM as matrix IRLS.}
Iteratively reweighted least squares (IRLS) alternates between solving a
weighted least-squares problem and updating the weights from the current
estimate. The method has a long history in robust
regression~\citep{holland1977robust} and was later analyzed for sparse-vector
recovery~\citep{daubechies2010irls}. Matrix IRLS applies the same reweighting
principle to low-rank estimation: directions associated with smaller singular
values of the current estimate receive stronger quadratic penalties in the
next fit. Constrained matrix-recovery formulations were analyzed by
\citet{mohan2012iterative} and \citet{fornasier2011lowrank}, with each iterate
required to fit the observations exactly. \citet{lai2013improved} instead
study penalized formulations, where a quadratic residual term allows
imperfect data fit. Their fixed-smoothing, Schatten-one matrix update is the
one we specialize to multivariate regression below.

At the level of the update map, regularized lin-RFM is exactly a
reparameterization of the fixed-smoothing, Schatten-one
matrix IRLS method of~\cite{lai2013improved}, specialized to multivariate regression. 
To make this explicit, one can write Algorithm~\ref{alg:regularized-lin-rfm} in terms of the evolution of the predictor $F_t$:

\begin{equation}
\boxed{
\begin{aligned}
&\textbf{Initialize:}
&&B_0=I_d.\\[1mm]
&\text{For } t=0,1,\ldots. \\
&\textbf{Predictor fit:}
&&F_t= E_n B_t\bigl(B_tS_nB_t+\lambda I_d\bigr)^{-1}B_t,\\
&\textbf{Feature update:}
&&B_{t+1}=\bigl(F_t^\top F_t+\epsilon I_d\bigr)^{1/4}.
\end{aligned}}
\label{eq:main-recursion}
\end{equation}
 In the unconstrained 
setting studied by Lai, Xu, and Yin, the observations need not be fitted
exactly; instead, the residual enters through a quadratic data-fit term. Their
matrix objective is
\begin{equation}
\mathcal L_q(F;\varepsilon,\lambda)
=
\operatorname{tr}\!\left(
(F^\top F+\varepsilon^2 I_d)^{q/2}
\right)
+
\frac{1}{2\lambda}
\|\mathcal A(F)-\mathbf b\|_2^2.
\label{eq:lxy_objective}
\end{equation}
To obtain multivariate regression, set $q=1$, identify
$\varepsilon^2=\epsilon$, and take
\[
\mathcal A_n(F)
:=
\frac{1}{\sqrt n}\operatorname{vec}(FX),
\qquad
\mathbf b_n
:=
\frac{1}{\sqrt n}\operatorname{vec}(Y).
\]
Since
\[
\|\mathcal A_n(F)-\mathbf b_n\|_2^2
=
\frac1n\|FX-Y\|_F^2,
\]
the specialized objective is, up to multiplication by a positive constant,
\begin{equation}
\mathcal J_{n,\epsilon}(F)
=
\frac{1}{2n}\|FX-Y\|_F^2
+
\lambda\operatorname{tr}\!\left(
(F^\top F+\epsilon I_d)^{1/2}
\right).
\label{eq:smoothed-trace-objective}
\end{equation}
The second term is a smoothed trace-norm penalty and therefore favors
low-rank predictors. At the current predictor $F_t$, the corresponding IRLS step forms the weight
\[
P_t
:=
(F_t^\top F_t+\epsilon I_d)^{-1/2}
\]
and solves
\begin{equation}
F_{t+1}
=
\underset{F\in\mathbb R^{p\times d}}{\arg\min}
\left\{
\frac1{2n}\|FX-Y\|_F^2
+
\frac{\lambda}{2}\operatorname{tr}(P_tF^\top F)
\right\}
=
E_n(S_n+\lambda P_t)^{-1}.
\label{eq:weighted-regression-irls}
\end{equation}

The connection to regularized lin-RFM follows from its feature update:
\[
B_{t+1}
=
(F_t^\top F_t+\epsilon I_d)^{1/4},
\qquad\text{so that}\qquad
P_t=B_{t+1}^{-2}.
\]
Substituting this identity into the IRLS predictor step gives
\[
F_{t+1}
=
E_n
\bigl(S_n+\lambda B_{t+1}^{-2}\bigr)^{-1},
\]
which is precisely the next predictor update in
\eqref{eq:main-recursion}. Thus the two formulations use the same matrix in complementary ways:
\[
\underbrace{P_t}_{\text{IRLS penalty weight}}
=
\underbrace{B_{t+1}^{-2}}_{\text{inverse learned feature weighting}}.
\]
In the IRLS interpretation, $P_t$ adaptively reweights the quadratic
regularizer. In the RFM interpretation, $B_{t+1}$ is the learned feature map,
and $B_{t+1}^2$ records how strongly the algorithm weights different feature
directions. Directions assigned larger feature weights are penalized less in
the next regression step. 

This correspondence identifies the objective and its limiting
point, but it does not describe the trajectory at
finite time. Our contribution is to characterize that trajectory exactly in
population, establish its non-asymptotic convergence rates, and prove that the empirical
representation and predictor track it uniformly even when the same sample is
reused at every iteration. We return to the fixed-data endpoint in
Section~\ref{sec:implicit_bias}.

\paragraph{Recovering lin-RFM and constrained matrix IRLS.}
Algorithm~\ref{alg:regularized-lin-rfm} is the unconstrained analogue of the constrained IRLS-1 and lin-RFM
correspondence established by~\citet{radhakrishnan2025linear} specialized to the regression setting. Furthermore, the limiting point of the algorithm recovers known implicit bias results from the analysis of linear neural networks and matrix factorization~\citep{gunasekar2017implicit,gunasekar2018implicit,lawrence2021implicit,arora2019implicit,bai2024connectivity}. This is made precise in Remark~\ref{remark:implicit-bias}.

\section{Optimization endpoint and implicit bias}
\label{sec:implicit_bias}

Section~\ref{sec:algorithm} showed that, after matrix-IRLS framework of Lai, Xu, and Yin~\citeyearpar{lai2013improved} to multivariate regression, its
predictor update coincides with regularized lin-RFM under the identification $P_t=B_{t+1}^{-2}.$ This correspondence tells us where the recursion converges on any fixed
dataset, independently of the teacher model or the distribution of the data.

\begin{proposition}[Fixed-data endpoint]
\label{prop:implicit-bias}
Fix $X\in\R^{d\times n}$ and $Y\in\R^{p\times n}$, let
$\lambda,\epsilon>0$, and let $(B_t,F_t)$ follow
\eqref{eq:main-recursion}. Then
\[
F_t\longrightarrow F_\star,
\]
where $F_\star$ is the unique minimizer of the smoothed trace-norm objective
\begin{equation}
F_\star
:=
\underset{F\in\R^{p\times d}}{\arg\min}
\left\{
\frac1{2n}\|FX-Y\|_F^2
+
\lambda\operatorname{tr}\!\left(
(F^\top F+\epsilon I_d)^{1/2}
\right)
\right\}.
\label{eq:fixed-data-objective}
\end{equation}
Moreover,
\[
B_t\longrightarrow
B_\star
:=
\bigl(F_\star^\top F_\star+\epsilon I_d\bigr)^{1/4}.
\]
\end{proposition}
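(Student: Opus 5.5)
The plan is to treat \eqref{eq:main-recursion} as the IRLS (majorize--minimize) scheme \eqref{eq:weighted-regression-irls} for the strictly convex objective $\mathcal J:=\mathcal J_{n,\epsilon}$ in \eqref{eq:fixed-data-objective}. I will run a monotone-descent argument in the style of \citet{daubechies2010irls} and \citet{lai2013improved}, rather than invoking their theorem as a black box.

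\textbf{Existence and uniqueness of $F_\star$.} The map $F\mapsto \operatorname{tr}((F^\top F+\epsilon I)^{1/2})$ is the sum over the singular values of $F$ (padded with zeros to length $d$) of $\sqrt{\sigma^2+\epsilon}$. This is a convex, unitarily invariant function, so it is convex in $F$. It is also coercive, since it is at least $\|F\|_*$. To get strict convexity I will use that its Hessian is positive definite: for $\phi(s)=\sqrt{s+\epsilon}$, the map $M\mapsto\operatorname{tr}\phi(M)$ is operator concave, and its composition with $F\mapsto F^\top F$ has Hessian bounded below by $c\|H\|_F^2$ on bounded sets. If that route is awkward, I will use instead that $(F^\top F+\epsilon I)^{1/2}$ is the matrix-valued square root of a strictly convex quadratic plus a constant. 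Continuity plus coercivity then give a minimizer, and strict convexity gives uniqueness.

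\textbf{Surrogate and monotone descent.} Define the joint functional
\[
\mathcal G(F,P)=\tfrac1{2n}\|FX-Y\|_F^2+\tfrac{\lambda}{2}\bigl[\operatorname{tr}(PF^\top F)+\epsilon\operatorname{tr}P+\operatorname{tr}P^{-1}\bigr],\qquad P\succ0 .
\]
For fixed $F$, minimizing over $P$ reduces to the scalar inequality $a p+1/p\ge 2\sqrt a$ applied eigenwise. This gives the minimizer $P=(F^\top F+\epsilon I)^{-1/2}$ and $\min_P\mathcal G(F,P)=\mathcal J(F)$. For fixed $P$, the minimizer over $F$ is $E_n(S_n+\lambda P)^{-1}$. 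By the identification $P_t=B_{t+1}^{-2}$, this is exactly the $F$-update \eqref{eq:weighted-regression-irls}. The initialization $B_0=I$ corresponds to $P_{-1}=I$.

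Alternating minimization then gives $\mathcal J(F_{t+1})\le\mathcal G(F_{t+1},P_t)\le\mathcal G(F_t,P_t)=\mathcal J(F_t)$. Coercivity keeps the iterates in a compact sublevel set $K$. On $K$ the weights satisfy $P_t\succeq cI$ for some $c>0$ uniformly, and $P_t\preceq\epsilon^{-1/2}I$.

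Since $\mathcal G(\cdot,P_t)$ is $\lambda c$-strongly convex, the descent is quantitative:
\[
\mathcal J(F_t)-\mathcal J(F_{t+1})\ge \tfrac{\lambda c}{2}\|F_t-F_{t+1}\|_F^2 .
\]
Summing over $t$ gives $\|F_{t+1}-F_t\|_F\to0$.

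\textbf{Limit points are $F_\star$.} This is the main obstacle. Take a convergent subsequence $F_{t_k}\to\bar F$. Then $F_{t_k+1}\to\bar F$ as well, and by continuity $P_{t_k}\to\bar P=(\bar F^\top\bar F+\epsilon I)^{-1/2}$. Passing to the limit in the fixed-point equation $F_{t+1}(S_n+\lambda P_t)=E_n$ gives
\[
\bar F S_n-E_n+\lambda\bar F(\bar F^\top\bar F+\epsilon I)^{-1/2}=0 .
\]
The delicate step is to verify that $\nabla_F\operatorname{tr}((F^\top F+\epsilon I)^{1/2})=F(F^\top F+\epsilon I)^{-1/2}$. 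I will check this through the spectral calculus $d\operatorname{tr}\phi(M)=\operatorname{tr}(\phi'(M)\,dM)$ with $M=F^\top F$. With this identity the limiting equation is exactly $\nabla\mathcal J(\bar F)=0$. Strict convexity then forces $\bar F=F_\star$. Since every subsequential limit of a bounded sequence equals $F_\star$, we get $F_t\to F_\star$.

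Finally, $B_t=(F_{t-1}^\top F_{t-1}+\epsilon I)^{1/4}$, and $M\mapsto M^{1/4}$ is continuous on $\mathbb S^d_{+}$. Hence $B_t\to B_\star$.
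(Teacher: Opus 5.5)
Your overall route is the paper's own. The paper invokes the convergence argument of Lai--Xu--Yin for boundedness of $(F_t)$ and stationarity of its accumulation points, then uses strict convexity of the objective to identify every accumulation point with $F_\star$. Your surrogate $\mathcal G$, the descent bound $\mathcal J(F_t)-\mathcal J(F_{t+1})\ge\frac{\lambda c}{2}\|F_t-F_{t+1}\|_F^2$, and the limit in $F_{t+1}(S_n+\lambda P_t)=E_n$ correctly reconstruct that cited step in a self-contained way. The one step that does not hold up is your justification of strict convexity. You genuinely need it: without a unique minimizer you only learn that accumulation points are minimizers, not that $F_t$ converges.

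Neither of your two suggested justifications works.
\begin{itemize}
\item \emph{The concavity route points the wrong way.} Let $\Psi(M):=\operatorname{tr}\phi(M)$, which is concave. Along $F+\theta H$ the second derivative is $D\Psi(M)[2H^\top H]+D^2\Psi(M)[\dot M,\dot M]$, with $M=F^\top F$ and $\dot M=H^\top F+F^\top H$. Concavity makes the second term nonpositive. A lower bound $c\|H\|_F^2$ therefore needs an exact cancellation you have not carried out; a concave function of the matrix-convex map $F\mapsto F^\top F$ has no convexity in general.
\item \emph{The fallback fails similarly.} Taking a (matrix) square root of a convex quantity does not preserve convexity.
\end{itemize}

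A clean fix is to write
\[
\operatorname{tr}\bigl((F^\top F+\epsilon I_d)^{1/2}\bigr)=\|G(F)\|_*,\qquad G(F):=\begin{pmatrix}F\\ \sqrt{\epsilon}\,I_d\end{pmatrix}.
\]
Convexity is then immediate, since $G$ is affine. Strictness follows from the equality case of the triangle inequality:
\begin{itemize}
\item Suppose $F_0\neq F_1$ but equality holds at the midpoint. A subgradient $U$ of $\|\cdot\|_*$ at $G(\tfrac{F_0+F_1}{2})$ then satisfies $\langle U,G(F_i)\rangle=\|G(F_i)\|_*$ for $i=0,1$.
\item Each $G(F_i)$ has full column rank, so its nuclear-norm subdifferential is the single polar factor. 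Hence $G(F_i)=U\,|G(F_i)|$ for both $i$, where $|A|:=(A^\top A)^{1/2}$.
\item Reading off the bottom block gives $\sqrt{\epsilon}\,I_d=U_{\mathrm{bot}}|G(F_0)|=U_{\mathrm{bot}}|G(F_1)|$, with $U_{\mathrm{bot}}$ invertible.
\item So $|G(F_0)|=|G(F_1)|$, hence $G(F_0)=G(F_1)$ and $F_0=F_1$, a contradiction.
\end{itemize}
Alternatively, cite Lewis's theorem: a strictly convex, absolutely symmetric function of the singular values yields a strictly convex matrix function.

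A smaller point concerns your $P$-step. $P$ and $F^\top F$ need not commute, so the eigenwise scalar inequality $ap+1/p\ge2\sqrt a$ does not literally apply. With $M:=F^\top F+\epsilon I_d$, use instead the identity $\operatorname{tr}(PM)+\operatorname{tr}(P^{-1})-2\operatorname{tr}(M^{1/2})=\|P^{1/2}M^{1/2}-P^{-1/2}\|_F^2$. It gives $\min_P\mathcal G(F,P)=\mathcal J(F)$ and shows the minimizer $P=M^{-1/2}$ is unique. With these two repairs your argument is complete.
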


\begin{proof}
By the correspondence in Section~\ref{sec:algorithm}, the
convergence argument of
\citep[Lemma~3.4 and Theorem~3.1]{lai2013improved} shows that $(F_t)$
is bounded and every accumulation point is stationary for
the objective in~\eqref{eq:fixed-data-objective}.
This objective is strictly convex so every accumulation point equals its unique minimizer
$F_\star$. Hence $F_t\to F_\star$. The convergence of $B_t$ follows from
$B_{t+1}=(F_t^\top F_t+\epsilon I_d)^{1/4}$
and continuity of the positive matrix fourth root.
\end{proof}

This identifies both the limiting predictor and the corresponding
learned representation. We do not claim the objective
\eqref{eq:fixed-data-objective}, or convergence to its minimizer, as new. The endpoint identifies the limiting point but does
not describe the finite-time path, how quickly the algorithm approaches the
endpoint, or how sampling error affects the trajectory. Proposition~\ref{prop:population-dynamics} gives the exact population path and
its eventual linear convergence, while
Theorem~\ref{thm:deterministic-tracking} shows that the empirical path remains
uniformly close to it.

\begin{remark}[Connection to IRLS and constrained nuclear norm]\label{remark:implicit-bias}
The fixed point $F_\star$ satisfies
\[
F_\star S_n-\widehat E_n
+\lambda F_\star
\left(F_\star^\top F_\star+\epsilon I\right)^{-1/2}
=0,
\]
which is the first-order condition for the smoothed trace-norm regression problem in~\eqref{eq:fixed-data-objective}. Proposition~\ref{prop:implicit-bias} is the noisy regression analogue of the IRLS-$1$ and lin-RFM analysis studied in \citep{mohan2012iterative,radhakrishnan2025linear}.
Namely, in the noiseless case where the data can be perfectly fit, dividing by $\lambda$ gives
\[
\frac{1}{2n\lambda}\|FX-Y\|_F^2
+
\operatorname{tr} \left((F^\top F+\epsilon I)^{1/2}\right),
\]
and the limit $\lambda\downarrow 0$ yields the constrained smoothed trace-norm problem
\[
\min_F \operatorname{tr} \left((F^\top F+\epsilon I)^{1/2}\right)
\qquad
\text{subject to}
\qquad
FX=Y.
\]
If in addition $\epsilon\downarrow 0$, this reduces further to the constrained nuclear norm minimization
\[
\min_F \|F\|_*
\qquad
\text{subject to}
\qquad
FX=Y.
\]
This implicit bias to the nuclear norm has been recovered in various related settings including analysis of linear neural networks and matrix factorization \citep{gunasekar2017implicit,gunasekar2018implicit,lawrence2021implicit,arora2019implicit,bai2024connectivity}.
\end{remark}


\section{Exact finite-time population dynamics}\label{sec:population-rate}
Fix $\lambda,\epsilon>0$. For our theory, we specialize to
square outputs $p=d$ and the isotropic teacher model
\begin{equation}
y=Ex+\xi,
\qquad
\E[x]=0,
\qquad
\E[xx^\top]=I_d,
\qquad
\E[\xi\mid x]=0.
\label{eq:teacher-model-main}
\end{equation}
We work in the bounded-rank regime
\[
1\le r:=\rank(E)\le r_0,
\]
where $r_0$ is fixed independently of $n$ and $d$.
We also assume $\E\|\xi\|_2^2<\infty$, so that
$\E[yx^\top]=E$.
A bar denotes the population recursion obtained by
replacing $(S_n,E_n)$ with $(I_d,E)$, with initialization $\bar B_0=I_d$.
All matrix norms in this section are operator norms.

Let $E^\top E=V\diag(\mu_1,\ldots,\mu_d)V^\top$, where
$\mu_1\ge\cdots\ge\mu_r>0$ and $\mu_{r+1}=\cdots=\mu_d=0$.
For $\mu\ge0$, define
\begin{equation}
b_0(\mu)=1,
\qquad
b_{t+1}(\mu)
=\left[\epsilon+\mu\left(
\frac{b_t(\mu)^2}{b_t(\mu)^2+\lambda}
\right)^2\right]^{1/4},
\label{eq:scalar-recursion-award}
\end{equation}
and write
$b_t(E^\top E):=V\diag(b_t(\mu_1),\ldots,b_t(\mu_d))V^\top$.
In the population setting, the feature matrix and predictor evolve according to the recursion in $b_t$.

\begin{proposition}[Exact population path]
\label{prop:population-dynamics}
For every $t\ge0$,
\[
\bar B_t=b_t(E^\top E),
\qquad
\bar F_t=E\bar B_t^2(\bar B_t^2+\lambda I_d)^{-1}.
\]
\end{proposition}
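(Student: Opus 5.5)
We argue by induction on $t$, carrying the stronger hypothesis that $\bar B_t$ is a spectral function of $E^\top E$: $\bar B_t = V\diag(\beta_1,\ldots,\beta_d)V^\top$ with $\beta_i=b_t(\mu_i)>0$. In the population recursion obtained from \eqref{eq:main-recursion} by setting $S_n=I_d$ and $E_n=E$, the predictor fit reads
\[
\bar F_t = E\bar B_t\bigl(\bar B_t^2+\lambda I_d\bigr)^{-1}\bar B_t .
\]
The base case $t=0$ holds because $\bar B_0=I_d=V\diag(1,\ldots,1)V^\top$ and $b_0\equiv1$.

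\emph{Predictor formula.} Assume $\bar B_t=b_t(E^\top E)$. Since $\bar B_t$ is symmetric positive definite, $\bar B_t$, $\bar B_t^2$ and $(\bar B_t^2+\lambda I_d)^{-1}$ are all functions of the same matrix $\bar B_t$ and therefore commute. Hence
\[
\bar B_t(\bar B_t^2+\lambda I_d)^{-1}\bar B_t=\bar B_t^2(\bar B_t^2+\lambda I_d)^{-1},
\]
which gives $\bar F_t=E\bar B_t^2(\bar B_t^2+\lambda I_d)^{-1}$. This proves the second claim at step $t$.

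\emph{Feature update.} Write $G_t:=\bar B_t^2(\bar B_t^2+\lambda I_d)^{-1}=V\diag(g_1,\ldots,g_d)V^\top$ with $g_i=\beta_i^2/(\beta_i^2+\lambda)$. Using the symmetry of $G_t$,
\[
\bar F_t^\top\bar F_t = G_t E^\top E\, G_t .
\]
Because $E^\top E=V\diag(\mu_i)V^\top$ is diagonal in the same basis $V$, this product equals $V\diag(\mu_i g_i^2)V^\top$. Adding $\epsilon I_d$ and taking the fourth root of a matrix that is diagonal in the orthonormal basis $V$ acts entrywise on the eigenvalues, so
\[
\bar B_{t+1}=V\diag\bigl((\epsilon+\mu_i g_i^2)^{1/4}\bigr)V^\top .
\]
The diagonal entries are exactly $b_{t+1}(\mu_i)$ by \eqref{eq:scalar-recursion-award}. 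This closes the induction.

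\emph{Main point of care.} The only real obstacle is the commutation step: it is what lets $E$ be factored out on the left and what lets $E^\top E$ be simultaneously diagonalized with $G_t$. This is why the spectral form of $\bar B_t$ is part of the induction hypothesis rather than just the formula for $\bar B_t$ itself. One further detail is well-definedness when $\mu_i$ has repeated values, in particular the zero eigenspace of dimension $d-r$. Since $b_t(\mu)$ depends only on $\mu$, the matrix $b_t(E^\top E)$ does not depend on which eigenbasis $V$ is chosen, so the argument is unaffected.
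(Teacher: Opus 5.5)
Your proof is correct and follows the paper's argument: induction on $t$ from $\bar B_0=I_d$, reading off $\bar F_t$ from the population ridge step, then computing $\bar F_t^\top\bar F_t$ in the fixed teacher basis $V$ and taking the fourth root eigenvalue-wise. One small clarification: the identity $\bar B_t(\bar B_t^2+\lambda I_d)^{-1}\bar B_t=\bar B_t^2(\bar B_t^2+\lambda I_d)^{-1}$ holds for any symmetric $\bar B_t$, so the induction hypothesis is only needed to diagonalize $G_t$ and $E^\top E$ simultaneously.
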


\begin{proof}
The population ridge update gives the displayed formula for $\bar F_t$.
Since $\bar B_0=I_d=b_0(E^\top E)$, the representation formula holds initially.
If $\bar B_t=b_t(E^\top E)$, then
\[
\bar F_t^\top\bar F_t
=V\diag\left(
\mu_i\left(
\frac{b_t(\mu_i)^2}{b_t(\mu_i)^2+\lambda}
\right)^2
\right)_{i=1}^dV^\top.
\]
Taking the positive fourth root in
$\bar B_{t+1}=(\bar F_t^\top\bar F_t+\epsilon I_d)^{1/4}$
gives $\bar B_{t+1}=b_{t+1}(E^\top E)$, completing the induction.
\end{proof}

Thus the population representation remains diagonal in the fixed teacher
basis $V$. Each iteration changes only the weight assigned to each direction.

\paragraph{Connection with multiplicative half-quadratic minimization.}
Fix $\mu\ge0$ and define the predictor coordinate
\begin{equation}
f_t(\mu):=\sqrt{\mu}\frac{b_t(\mu)^2}{b_t(\mu)^2+\lambda}.
\label{eq:population-predictor-coordinate}
\end{equation}
Suppressing the argument $\mu$, the scalar recursion becomes
\begin{equation}
b_{t+1}=(f_t^2+\epsilon)^{1/4},
\qquad
f_{t+1}=g_\mu(f_t),
\qquad
g_\mu(f):=
\frac{\sqrt{\mu}\sqrt{f^2+\epsilon}}
{\sqrt{f^2+\epsilon}+\lambda},
\label{eq:population-hq-map}
\end{equation}
with $f_0=\sqrt{\mu}/(1+\lambda)$.

This is the majorization--minimization algorithm, studied by Nikolova and
Ng~\citeyearpar{nikolova2005analysis}, applied to
\[
j_\mu(f):=\frac12(f-\sqrt{\mu})^2+\lambda\sqrt{f^2+\epsilon}.
\]
Indeed, specializing their objective
\citep[equations~(1.2)--(1.3)]{nikolova2005analysis} to
$A=G=1$, $y=\sqrt{\mu}$, $\beta=2\lambda$, and
$\varphi(f)=\sqrt{f^2+\epsilon}$ gives $2j_\mu$.
Their multiplicative half-quadratic formulation
\citep[equation~(2.4)]{nikolova2005analysis}, together with the weight
and minimizer formulas
\citep[equations~(2.5), (2.7)--(2.8)]{nikolova2005analysis}, gives
\[
w_{t+1}=\frac1{\sqrt{f_t^2+\epsilon}}=b_{t+1}^{-2},
\qquad
f_{t+1}=\frac{\sqrt{\mu}}{1+\lambda w_{t+1}}=g_\mu(f_t).
\]
Here $w$ denotes their auxiliary variable, to distinguish it from our
representation weight $b$.

These choices satisfy the hypotheses of their Theorem~2.3. Under this specialization, the convergence theorem
\citep[Theorem~2.3(ii)]{nikolova2005analysis} and local-rate result
\citep[Corollary~2.5]{nikolova2005analysis},
with the fixed-point Jacobian evaluated using
\citep[equations~(2.29), (2.36)]{nikolova2005analysis},
yield the following corollary.

\begin{corollary}[Population convergence and local contraction factor]
\label{cor:population-convergence}
For every $\mu\ge0$, the sequence $b_t(\mu)$ converges to the unique
fixed point $b_\star(\mu)\ge\epsilon^{1/4}$
of~\eqref{eq:scalar-recursion-award}.
Set
\[
\bar B_\star:=b_\star(E^\top E),
\qquad
\bar F_\star:=
E\bar B_\star^2(\bar B_\star^2+\lambda I_d)^{-1}.
\]
The Jacobian of the reduced map on
$(b_t(\mu_1),\ldots,b_t(\mu_r))$ at its fixed point is diagonal,
with spectral radius
\[
\kappa_\star
:=\max_{1\le i\le r}
\frac{\mu_i\lambda}
{\bigl(b_\star(\mu_i)^2+\lambda\bigr)^3}<1.
\]
For every $\rho\in(\kappa_\star,1)$, there exist a finite $T_\rho$
and $C_\rho<\infty$ such that, for all $t\ge T_\rho$,
\begin{equation}
\|\bar B_t-\bar B_\star\|+\|\bar F_t-\bar F_\star\|
\le C_\rho\rho^{\,t-T_\rho}.
\label{eq:population-geometric-award}
\end{equation}
The constants depend only on $\{\mu_i\}_{i=1}^r$ and
$(\lambda,\epsilon,\rho)$.
\end{corollary}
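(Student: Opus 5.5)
We work one eigendirection at a time and use the predictor coordinate $f_t(\mu)$ from \eqref{eq:population-predictor-coordinate} in place of $b_t(\mu)$. The two are linked by $b_{t+1}=(f_t^2+\epsilon)^{1/4}$, a smooth increasing bijection on $[0,\infty)$.

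\textbf{Step 1: global convergence and uniqueness for fixed $\mu$.} If $\mu=0$, then $f_t\equiv 0$ and $b_t=\epsilon^{1/4}$ for all $t\ge1$, so there is nothing to prove. For $\mu>0$, write $s(f)=\sqrt{f^2+\epsilon}$. The map $g_\mu(f)=\sqrt{\mu}\,s/(s+\lambda)$ is increasing in $f\ge0$ and takes values in $(0,\sqrt{\mu})$. Hence $(f_t)$, started at $f_0=\sqrt\mu/(1+\lambda)>0$, is a monotone bounded sequence: it is increasing or decreasing according to the sign of $f_1-f_0$, by induction using monotonicity of $g_\mu$. It therefore converges, and by continuity its limit is a fixed point of $g_\mu$.

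A fixed point $f=g_\mu(f)$ is equivalent to $f-\sqrt\mu+\lambda f/s(f)=0$, that is, $j_\mu'(f)=0$. The function $j_\mu$ is strictly convex, being a quadratic plus the convex map $\lambda s(f)$, so it has exactly one critical point $f_\star$. Consequently $b_t\to b_\star:=(f_\star^2+\epsilon)^{1/4}\ge\epsilon^{1/4}$. In the $b$-variable, a fixed point of \eqref{eq:scalar-recursion-award} corresponds bijectively to a fixed point of $g_\mu$, so $b_\star$ is also unique.

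\textbf{Step 2: the Jacobian.} Put $h(b)=[\epsilon+\mu q(b)^2]^{1/4}$ with $q(b)=b^2/(b^2+\lambda)$. Using $q'(b)=2b\lambda/(b^2+\lambda)^2$, the chain rule gives
\[
h'(b)=\frac{\mu\lambda\, b^3}{h(b)^3\,(b^2+\lambda)^3}.
\]
At the fixed point $h(b_\star)=b_\star$, so this reduces to $\mu\lambda/(b_\star^2+\lambda)^3$. The reduced map acts coordinatewise on $(b_t(\mu_1),\dots,b_t(\mu_r))$, so its Jacobian is diagonal with these entries.

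To show each entry is strictly less than $1$, conjugate to the $f$-variable. Because the change of variables is a smooth monotone bijection near the fixed point, $h'(b_\star)=g_\mu'(f_\star)$. The MM structure then gives
\[
g_\mu'(f_\star)=1-\frac{j_\mu''(f_\star)}{1+\lambda/s_\star},\qquad j_\mu''=1+\frac{\lambda\epsilon}{s^3}>0 .
\]
One can check this directly: both sides equal $\lambda f_\star^2/(s_\star^2(s_\star+\lambda))$ after using the fixed-point identity. This quantity lies in $[0,1)$, so $\kappa_\star<1$.

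\textbf{Step 3: eventual geometric rate and lifting to matrices.} Fix $\rho\in(\kappa_\star,1)$. Since $h$ is $C^1$, for each $i\le r$ there is a neighborhood of $b_\star(\mu_i)$ on which $|h(b)-b_\star|\le\rho|b-b_\star|$. By Step 1, there is a finite $T_\rho$ after which every $b_t(\mu_i)$, $i\le r$, lies in its neighborhood; for $i>r$ the coordinates are already exactly at their fixed point when $t\ge1$. Hence $|b_t(\mu_i)-b_\star(\mu_i)|\le C\rho^{t-T_\rho}$ for all $t\ge T_\rho$.

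Because $\bar B_t$ and $\bar B_\star$ are simultaneously diagonal in $V$, the operator-norm distance $\|\bar B_t-\bar B_\star\|$ equals the maximum of these coordinate errors. For the predictor, $\bar F_t-\bar F_\star=E\,V\diag\bigl(q(b_t(\mu_i))-q(b_\star(\mu_i))\bigr)V^\top$, and $q$ is Lipschitz on $[\epsilon^{1/4},\infty)$. This yields the bound for $\|\bar F_t-\bar F_\star\|$, with an extra factor $\sqrt{\mu_1}$. All constants depend only on $\{\mu_i\}_{i\le r}$ and $(\lambda,\epsilon,\rho)$.

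The step I expect to require the most care is Step 2, namely establishing $\kappa_\star<1$. The route through the MM identity $g'=1-j''/(\text{majorant curvature})$ is the cleanest, since it makes strict contraction a direct consequence of strict convexity of $j_\mu$. Alternatively, one can simply invoke Nikolova--Ng, Corollary~2.5.
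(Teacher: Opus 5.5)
Your proof is correct, but it follows a different route from the paper's. The paper does not argue the corollary directly. It identifies the scalar recursion, through the predictor coordinate $f_t$, with the multiplicative half-quadratic (majorization--minimization) scheme of Nikolova and Ng for $j_\mu$, checks the hypotheses of their Theorem~2.3, and then reads off three things from their results:
\begin{itemize}
\item global convergence and uniqueness of the fixed point, from their Theorem~2.3(ii);
\item the local contraction rate, from their Corollary~2.5;
\item the Jacobian value, evaluated using their equations (2.29) and (2.36).
\end{itemize}

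You instead give a self-contained argument:
\begin{itemize}
\item convergence comes from monotone iteration of the increasing, bounded map $g_\mu$;
\item uniqueness comes from strict convexity of $j_\mu$;
\item the diagonal Jacobian entries $\mu_i\lambda/(b_\star(\mu_i)^2+\lambda)^3$ come from an explicit chain-rule computation;
\item strict contraction comes from the closed form $\lambda f_\star^2/\bigl(s_\star^2(s_\star+\lambda)\bigr)\in[0,1)$;
\item the eventual geometric bound follows from continuity of $h'$ together with the convergence already established.
\end{itemize}
A small simplification is available: the equality $h'(b_\star)=g_\mu'(f_\star)$ needs no bijectivity. Write $h=\chi\circ\psi$ and $g_\mu=\psi\circ\chi$, with $\psi(b)=\sqrt{\mu}\,b^2/(b^2+\lambda)$ and $\chi(f)=(f^2+\epsilon)^{1/4}$; the chain rule then gives the equality directly.

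The citation route places lin-RFM inside the half-quadratic literature and inherits its general theory. Its cost is a black-box hypothesis check and a translation of notation. Your route can be verified line by line. It also supplies two facts the paper needs elsewhere:
\begin{itemize}
\item Monotone convergence of $b_t(\mu)$. The proof of Corollary~\ref{cor:H-eventual-contraction} in the appendix uses this, but obtains it only by combining monotonicity of $\phi_\mu$ with the cited convergence.
\item The $\mu$-uniform bound $\lambda f_\star^2/\bigl(s_\star^2(s_\star+\lambda)\bigr)\le\lambda/(\lambda+\sqrt\epsilon)$. This is exactly the scalar content of Lemma~\ref{lem:H-fixed-contraction}.
\end{itemize}
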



\section{Finite-sample trajectory recovery}\label{sec:sample-complexity}

The population analysis above assumes that the covariance and input--output cross-moment are known exactly. In practice, the algorithm replaces them with the empirical moments
$S_n$ and $ E_n$. Since these same two matrices are reused at every iteration, it is not immediate that a small estimation error will remain small: in principle, the nonlinear updates could repeatedly amplify it.

Our main stability result shows that this does not happen. Define the empirical moment error
\[
\Delta_n
:=
\|S_n-I_d\|
+
\| E_n-E\|.
\]

\begin{theorem}[Uniform stability from empirical moments]
\label{thm:deterministic-tracking}
Fix $\lambda,\epsilon>0$, an integer $r_0\ge1$, and an upper
bound $\|E\|\le M_E<\infty$ on the teacher norm.
There exist constants $c,C>0$, depending only on
$(r_0,M_E,\lambda,\epsilon)$, such that the following holds.
If $\rank(E)\le r_0$ and $\Delta_n\le c$,
then
\begin{equation}
\sup_{t\ge0}
\max\left\{
\|B_t-\bar B_t\|,
\,
\|F_t-\bar F_t\|
\right\}
\le C\Delta_n.
\label{eq:uniform-path-award}
\end{equation}
\end{theorem}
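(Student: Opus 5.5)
The plan is to compare the empirical and population recursions as two iterations of nearby maps on a fixed compact set of representations. I would then obtain a time-uniform bound by splitting the trajectory into a finite burn-in phase, controlled by a discrete Gr\"onwall argument, and a terminal phase, controlled by local contraction of the population map near $\bar B_\star$. Write $\Phi_n(B):=(F_n(B)^\top F_n(B)+\epsilon I_d)^{1/4}$ with $F_n(B):=E_nB(BS_nB+\lambda I_d)^{-1}B$, and let $\Phi$ and $F$ be the same maps with $(S_n,E_n)$ replaced by $(I_d,E)$. Then $B_{t+1}=\Phi_n(B_t)$ and $\bar B_{t+1}=\Phi(\bar B_t)$.

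\textbf{Step 1 (invariant compact set and one-step consistency).} For $\Delta_n\le c\le 1/2$ I would first show that all iterates lie in
\[
\mathcal K:=\{B\in\mathbb S^d_{++}:\ \epsilon^{1/4}I_d\preceq B\preceq \beta I_d\},
\]
where $\beta$ depends only on $(M_E,\lambda,\epsilon)$. This follows from $\|F_n(B)\|\le\|E_n\|\,\|B(BS_nB+\lambda)^{-1}B\|\le (M_E+c)/(1-c)$, since $B(BS_nB+\lambda)^{-1}B\preceq S_n^{-1}$. On $\mathcal K$, the matrix inverse and the fourth root (on spectra bounded below by $\epsilon$) are smooth, with derivatives bounded by constants depending only on $(M_E,\lambda,\epsilon)$. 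This gives two estimates:
\[
\sup_{B\in\mathcal K}\|\Phi_n(B)-\Phi(B)\|\le K_1\Delta_n,
\qquad
\|\Phi(B)-\Phi(B')\|\le L\|B-B'\|\quad\text{on }\mathcal K.
\]
The same bounds hold for $F_n$ versus $F$, so the predictor error reduces to the representation error: $\|F_t-\bar F_t\|\le L'\|B_t-\bar B_t\|+K_1'\Delta_n$. The two bounds together give $e_{t+1}\le L e_t+K_1\Delta_n$ for $e_t:=\|B_t-\bar B_t\|$, with $e_0=0$. Hence $e_t\le K_1\Delta_n\,L^{t}/(L-1)$, which is $O(\Delta_n)$ for $t\le T_0$ once $T_0$ is fixed.

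\textbf{Step 2 (local contraction of $\Phi$ at $\bar B_\star$, uniformly).} I would compute the Fr\'echet derivative $D\Phi(\bar B_\star)$ in the basis $V$. Since $\bar B_\star$ and $E^\top E$ are simultaneously diagonal, the Daleckii--Krein formula shows that $D\Phi(\bar B_\star)$ acts entrywise on a perturbation $H$. Each entry $H_{ij}$ is multiplied by a scalar $m_{ij}$ that is a combination of divided differences of $s\mapsto s^{1/4}$ and of $b\mapsto b^2/(b^2+\lambda)$, evaluated at the pair $(b_\star(\mu_i),b_\star(\mu_j))$. Entries with $i,j>r$ are killed because $E$ annihilates those directions. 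The diagonal multipliers are exactly the scalar slopes $\mu_i\lambda/(b_\star(\mu_i)^2+\lambda)^3$ from Corollary~\ref{cor:population-convergence}. The off-diagonal multipliers are divided-difference analogues of these slopes, and I expect the same algebra to bound them by the maximum of the two slopes.

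Because the action is a Schur multiplier whose entries take only finitely many distinct values, depending on at most $r_0+1$ distinct eigenvalues, its operator-norm bound is dimension-free. The slope $\kappa(\mu)$ is continuous and strictly below $1$ on the compact set $\mu\in[0,M_E^2]$, so I get a uniform $\rho<1$ and a radius $\delta>0$ such that $\|\Phi(B)-\Phi(B')\|\le\rho\|B-B'\|$ whenever $B,B'$ lie within $\delta$ of $\bar B_\star$. Uniformity of $T_\rho$ and $C_\rho$ in Corollary~\ref{cor:population-convergence} over all admissible spectra $\{\mu_i\}$ I would get by compactness: the scalar convergence $b_t(\mu)\to b_\star(\mu)$ is uniform on $[0,M_E^2]$ by Dini-type and monotonicity arguments applied to the scalar half-quadratic map $g_\mu$. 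This yields a single $T_0$, depending only on $(r_0,M_E,\lambda,\epsilon)$, after which $\|\bar B_t-\bar B_\star\|\le\delta/2$.

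\textbf{Step 3 (gluing).} Choose $c$ small enough that the Step~1 bound gives $e_{T_0}\le\delta/4$. For $t\ge T_0$, suppose inductively that $e_t\le\delta/2$, so both $B_t$ and $\bar B_t$ lie in the contraction ball. Then
\[
e_{t+1}\le\|\Phi_n(B_t)-\Phi(B_t)\|+\|\Phi(B_t)-\Phi(\bar B_t)\|\le K_1\Delta_n+\rho e_t .
\]
This gives $e_t\le e_{T_0}+K_1\Delta_n/(1-\rho)=O(\Delta_n)$, and for $c$ small this stays below $\delta/2$, closing the induction. Combined with Step~1 this proves \eqref{eq:uniform-path-award}.

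The main obstacle is Step 2: showing that the full matrix Jacobian, including the off-diagonal coupling between teacher directions and between range and null directions, is a contraction with constants independent of $d$ and of the particular spectrum of $E$. If the operator-norm bound on the Schur multiplier fails, I would instead work in the IRLS coordinates $P=B^{-2}$. There I would try to derive the contraction from the strong convexity, on the compact set $\mathcal K$, of the smoothed trace-norm objective in Proposition~\ref{prop:implicit-bias}, using majorization--minimization descent to control the distance to the fixed point.
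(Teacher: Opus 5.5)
Your Step~1 and the bookkeeping in Step~3 are sound. However, Step~2 asserts something false, and the gluing depends on it. A Schur multiplier whose entries lie in $[0,\rho]$ need not have operator-norm gain at most $\rho$: ``dimension-free'' does not mean ``less than one.''

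At the fixed point, $D\Phi(\bar B_\star)$ acts in the basis $V$ as $X_{ij}\mapsto K_{ij}X_{ij}$. Here $K_{ij}$ are the coefficients of Lemma~\ref{lem:H-Jacobian} evaluated at $\bar B_\star^2$; the factors $b_i+b_j$ coming from $B\mapsto B^2$ and from the square root cancel. The null--null block is annihilated, and zeroing a block can inflate the operator norm. Take $\lambda=1$, $\epsilon=10^{-6}$, $d=2$, and a rank-one $E$ with $\mu_1=4(1-\epsilon)$, so that $b_\star(\mu_1)^2=1$. Then $K_{11}\approx0.5$, $K_{10}\approx0.998$, $K_{00}=0$, and
\[
X=\begin{pmatrix}1/4&\sqrt{15}/4\\ \sqrt{15}/4&-1/4\end{pmatrix},\qquad \|X\|=1,
\]
is sent to approximately $\begin{pmatrix}0.125&0.966\\ 0.966&0\end{pmatrix}$, whose operator norm is about $1.03$. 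Hence for no $\rho<1$ is there a neighbourhood of $\bar B_\star$ on which $\Phi$ is a $\rho$-contraction in operator norm.

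The same example refutes your guess that the off-diagonal multipliers are at most the larger of the two diagonal slopes: $K_{10}$ exceeds both $K_{11}$ and the null slope $0$. What does hold uniformly is $0\le K_{ij}\le\bar\kappa_\star:=\lambda/(\lambda+\sqrt\epsilon)$. So $\rho$ must exceed $\bar\kappa_\star$, not the $\kappa_\star$ of Corollary~\ref{cor:population-convergence}.

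The missing ingredient is a norm in which the linearization contracts with factor $\max_{ij}K_{ij}$. The paper works with $H=B^2$ and the teacher-adapted norm $\max\{r^{-1/2}\|\Delta_{ss}\|_F,\|\Delta_{sn}\|,\|\Delta_{ns}\|,\|\Delta_{nn}\|\}$. In this norm:
\begin{itemize}
\item the signal--signal block is controlled entrywise through the Frobenius norm;
\item the signal--null multiplier is constant across null columns, so it acts as a diagonal left multiplication;
\item the null--null block is killed.
\end{itemize}
The price is that this norm is comparable to the operator norm, and submultiplicative, only up to a factor $4\sqrt r$. That is where the $C_0\sqrt r\,e_t^2$ remainder, the condition $\sqrt r\,\Delta_n\le c$, and the final $\sqrt r$ come from. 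All of these are absorbed because $r\le r_0$.

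Alternatively, you could keep the operator norm and contract over $k$ steps. The map $\bigl(D\Phi(\bar B_\star)\bigr)^k$ is the multiplier $K^{\odot k}$, whose operator-norm gain is at most $(\sqrt r+1)\bar\kappa_\star^k$. So $\Phi^k$ is a local contraction once $k$ is large enough depending on $(r_0,\lambda,\epsilon)$, using dimension-free smoothness of $\Phi$ on $\mathcal K$. Your Step~1 estimates then handle the intermediate steps.

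Your strong-convexity fallback would naturally run in Frobenius norm. There the moment error enters through $\|E_n-E\|_F$ and $\|S_n-I_d\|_F$, which can be $\sqrt d$ times larger than $\Delta_n$, so it would not give the stated bound.

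(Minor: take the lower bound $\min\{1,\epsilon^{1/4}\}$ in $\mathcal K$, so that $B_0=I_d\in\mathcal K$ also when $\epsilon>1$.)
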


The supremum over $t$ shows that the finite-sample feature learning trajectory
tracks the population without growth of the error bound over the steps of the algorithm.
intermediate representations.

\paragraph{From scalar dynamics to matrix stability.}
The population iterates $\bar{B}_t$ as discussed in Proposition~\ref{prop:population-dynamics} remain diagonal in the eigenbasis of $E^\top E$, changing only the weights assigned to these directions. The empirical moment errors $S_n - I_d$ and $E_n - E$, however, can also alter the feature directions as the empirical iterates need not remain diagonal in this basis. 
Thus, scalar convergence in the population does not immediately control all perturbation encountered by the empirical algorithm. 

To address this, our proof separates two sources of error: (i) estimating the moments and (ii) propagation of the existing trajectory error. The first effect is uniformly bounded by $C\Delta_n$ by standard analytic arguments.
The main challenge is the second effect where we use perturbation theory to control both diagonal and off-diagonal perturbations in $F_t$
with respedt to the population trajectory.
The key step is to establish that, in a suitable matrix norm, each update
is contractive in this norm after finitely many iterations.
We separately control errors in the initial iterations before and after this contraction is reached.
For sufficiently small $\Delta_n$, an induction keeps
the empirical trajectory close enough to the population
trajectory for these local estimates to remain valid.
These estimates give a bound of
order $\Delta_n$ that holds uniformly over all iterations.
The complete proof is given in Appendix~\ref{app:tracking-proofs}.

We next translate the deterministic moment condition into a sample-size requirement.
For a real-valued random variable $Z$, define its $\psi_2$-Orlicz norm by
\[
\|Z\|_{\psi_2}
:=
\inf\left\{
s>0:
\E\exp\left(\frac{Z^2}{s^2}\right)\le 2
\right\}.
\]
A mean-zero random vector $z$ is called $K$-sub-Gaussian if
\[
\|\langle u,z\rangle\|_{\psi_2}
\le K\|u\|_2
\qquad
\text{for every }u,
\]
following the standard convention in \citep{vershynin2018high}. We next control the empirical moment error $\Delta_n$ under standard
sub-Gaussian assumptions and then apply
Theorem~\ref{thm:deterministic-tracking}.

\begin{corollary}[Sub-Gaussian sample complexity]
\label{cor:subgaussian-tracking}
Fix an integer $r_0\ge1$ and a teacher-norm bound $M_E<\infty$,
and assume
\[
1\le \rank(E)\le r_0,
\qquad
\|E\|\le M_E.
\]
Assume that the pairs $(x_i,\xi_i)$ are independent across $i$.
Suppose that each $x_i$ is isotropic and $K_x$-sub-Gaussian, and that
$\xi_i$ is independent of $x_i$, mean-zero, and has covariance
$\sigma^2I_d$.
When $\sigma>0$, assume that $\xi_i/\sigma$ is
$K_\xi$-sub-Gaussian; when $\sigma=0$, take $\xi_i=0$ almost surely.

There exist constants $C_0,C_1,C>0$, depending only on
$(r_0,K_x,K_\xi,M_E,\sigma,\lambda,\epsilon)$,
such that, for every $\eta\in(0,1)$, if
\[
n
\ge
C_0\bigl(d+\log(1/\eta)\bigr),
\]
then, with probability at least $1-\eta$,
\[
\Delta_n
\le
C_1\alpha_{n,\eta},
\qquad
\alpha_{n,\eta}
:=
\sqrt{
\frac{d+\log(1/\eta)}{n}
}.
\]
Consequently,
\begin{equation}
\sup_{t\ge0}
\max\left\{
\|B_t-\bar B_t\|,
\,
\|F_t-\bar F_t\|
\right\}
\le
C\alpha_{n,\eta}.
\label{eq:alpha-award}
\end{equation}
\end{corollary}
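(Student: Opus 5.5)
The plan is to bound the two pieces of $\Delta_n$ separately with standard sub-Gaussian concentration, combine them by a union bound, and then apply Theorem~\ref{thm:deterministic-tracking}.

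\emph{Covariance term.} The bound on $\|S_n-I_d\|$ is the standard covariance estimate for isotropic sub-Gaussian vectors \citep[Theorem~4.6.1]{vershynin2018high}. With probability at least $1-\eta/2$,
\[
\|S_n-I_d\|\le CK_x^2\max(\delta,\delta^2),
\qquad
\delta=C'\alpha_{n,\eta}.
\]
Once $n\ge C_0(d+\log(1/\eta))$ we have $\delta\le 1$, so this is at most $C K_x^2\alpha_{n,\eta}$.

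\emph{Cross-moment term.} Using $y_i=Ex_i+\xi_i$, write
\[
E_n-E=E(S_n-I_d)+\frac1n\sum_i\xi_ix_i^\top .
\]
The first term is at most $M_E\|S_n-I_d\|$ and is handled by the covariance bound. If $\sigma=0$ the second term vanishes. If $\sigma>0$, I bound $\|\frac1n\sum_i\xi_ix_i^\top\|$ by an $\varepsilon$-net argument. Fix unit vectors $u,v$ and set
\[
Z_i=\langle u,\xi_i\rangle\langle v,x_i\rangle .
\]
These are independent and mean-zero, since $\xi_i$ is independent of $x_i$ and centered. Each $Z_i$ is a product of two sub-Gaussians, so it is sub-exponential with $\|Z_i\|_{\psi_1}\le \sigma K_\xi K_x$. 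Bernstein's inequality then gives
\[
\Bigl|\frac1n\sum_i Z_i\Bigr|\le C\sigma K_\xi K_x\max\bigl(\sqrt{s/n},\,s/n\bigr)
\]
with probability at least $1-2e^{-s}$. Take $1/4$-nets on $S^{d-1}$ for both $u$ and $v$, which have total cardinality at most $81^d$. Setting $s=C(d+\log(4/\eta))$ and taking a union bound yields $\|\frac1n\sum\xi_ix_i^\top\|\le C\sigma K_\xi K_x\,\alpha_{n,\eta}$ with probability at least $1-\eta/2$. Here the sample-size condition ensures $s/n\le\sqrt{s/n}$.

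\emph{Combining.} A union bound over the two events gives $\Delta_n\le C_1\alpha_{n,\eta}$ with probability at least $1-\eta$. Here $C_1$ depends on $K_x,K_\xi,\sigma,M_E$.

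To conclude, enlarge $C_0$ so that $n\ge C_0(d+\log(1/\eta))$ forces $C_1\alpha_{n,\eta}\le c$, where $c=c(r_0,M_E,\lambda,\epsilon)$ is the threshold in Theorem~\ref{thm:deterministic-tracking}. This is possible because $\alpha_{n,\eta}\le C_0^{-1/2}$. On the good event we then have $\Delta_n\le c$, so the theorem applies, and \eqref{eq:alpha-award} follows with constant $C_{\mathrm{thm}}C_1$.

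The only mildly delicate step is the cross-moment bound. It needs the product-of-sub-Gaussians $\psi_1$ estimate and the net argument over pairs $(u,v)$, and it must carry the $\sigma$-scaling so that $\sigma=0$ is covered consistently. The rest is bookkeeping of constants.
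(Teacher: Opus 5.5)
Your argument is correct. It follows the paper's overall structure: concentrate $\Delta_n$ via $E_n-E=E(S_n-I_d)+N_n$ with $N_n:=\frac1n\sum_i\xi_ix_i^\top$, then invoke the deterministic tracking theorem. Where it differs is in how you bound the noise cross-moment $N_n$.

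The paper runs no separate net or Bernstein argument. It stacks $w_i=(x_i,\xi_i/\sigma)\in\R^{2d}$ and notes that $w_i$ is mean-zero and $C(K_x,K_\xi)$-sub-Gaussian (by the triangle inequality for $\|\cdot\|_{\psi_2}$). It is also isotropic, which uses $\operatorname{Cov}(\xi_i)=\sigma^2I_d$ and $\E[x_i\xi_i^\top]=0$. A single application of \citep[Theorem~4.6.1]{vershynin2018high} to $\frac1n\sum_iw_iw_i^\top-I_{2d}$ then controls both $S_n-I_d$ and $\sigma^{-1}N_n$ as blocks of one matrix, on one event of probability $1-\eta$, with no union bound.

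Your route combines the product-of-sub-Gaussians $\psi_1$ bound, Bernstein's inequality, and a union bound over an $81^d$-point net of pairs. It is more hands-on but gives the same $\sqrt{(d+\log(1/\eta))/n}$ rate. It is also slightly more robust. It only uses $\|\langle u,\xi_i\rangle\|_{\psi_2}\le\sigma K_\xi$ and $\E[Z_i]=0$, so it would survive a non-isotropic noise covariance, or even mere conditional centering $\E[\xi_i\mid x_i]=0$. The stacking trick, as written, needs the joint vector to be exactly isotropic.

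There is one smaller difference in the final step. The paper derives the corollary from the rank-explicit Theorem~\ref{thm:rank-explicit-tracking}. It obtains the intermediate condition $n\ge C_0\,r\,(d+\log(2/\eta))$ and error $C\sqrt{r(d+\log(2/\eta))/n}$, and only then absorbs $r\le r_0$ into the constants. You apply the bounded-rank Theorem~\ref{thm:deterministic-tracking} directly, which is equivalent for the statement as posed but does not expose the $\sqrt{r}$ dependence.
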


Thus, up to confidence and problem-dependent constants,
$n\gtrsim d$ samples are sufficient for the empirical representation and predictor to track their population trajectories uniformly over all iterations. The discrepancy decays at the usual
$n^{-1/2}$ statistical rate, despite the fact that the same data-dependent moments are reused throughout the recursion. 

\paragraph{Teacher-estimation error.}
Proposition~\ref{prop:population-dynamics} gives the exact
population error
\[
\beta_t
:=
\|\bar F_t-E\|
=
\max_{1\le i\le r}
\frac{\lambda\sqrt{\mu_i}}{b_t(\mu_i)^2+\lambda}.
\]
Under the assumptions and sample-size condition of
Corollary~\ref{cor:subgaussian-tracking}, with probability
at least $1-\eta$, the triangle inequality gives,
simultaneously for all $t\ge0$,
\begin{equation}
\|F_t-E\|
\le
\|F_t-\bar F_t\|+\beta_t
\le C\alpha_{n,\eta}+\beta_t.
\label{eq:teacher-error-path}
\end{equation}
Thus the teacher-estimation error is bounded by the explicit
population error and a sampling term that does not accumulate
across iterations. Let $\beta_\star:=\|\bar F_\star-E\|$.
Since
$\beta_t\le\beta_\star+\|\bar F_t-\bar F_\star\|$,
Corollary~\ref{cor:population-convergence} yields, for every
$\rho\in(\kappa_\star,1)$ and simultaneously for all
$t\ge T_\rho$,
\begin{equation}
\|F_t-E\|
\le
C\alpha_{n,\eta}
+
C_\rho\rho^{\,t-T_\rho}
+
\beta_\star,
\label{eq:teacher-error-decomposition}
\end{equation}
where $T_\rho,C_\rho$ are as in that corollary.
The three terms bound sampling error, population iteration error,
and regularization bias, respectively.

The bias satisfies $\beta_\star\le\lambda$.
Indeed, the limiting scalar predictor
$f_\star(\mu):=\lim_{t\to\infty}f_t(\mu)$ satisfies
\[
0\le \sqrt{\mu}-f_\star(\mu)
=
\lambda\frac{f_\star(\mu)}
{\sqrt{f_\star(\mu)^2+\epsilon}}
\le\lambda
\]
by scalar optimality. Taking the maximum over the teacher's
singular directions gives the claimed bias bound.

\section{Numerical Experiments}\label{sec:empirical-motivation}

We empirically study regularized lin-RFM on large-scale text and single-cell
gene-expression data, examining how the learned features evolve over time and how well the empirical results match theoretical findings.
There are two datasets.
On OpenWebText, a corpus of English web
documents~\citep{gokaslan2019openwebtext}, we predict a center word from
a normalized bag of surrounding words, using a one-hot vector for the
target. On the core Human Lung Cell Atlas
(HLCA)~\citep{sikkema2023integrated}, we construct a leave-one-gene-out
task, where each input is a cell's TF--IDF-weighted vector of expressions with
one expressed gene set to zero, and the target is a one-hot vector
identifying that gene. Both experiments use $d=20{,}000$ coordinates, corresponding to
words or genes. These constructions generate roughly $n=10^9$
input and prediction pairs.  
These pairs are neither independent nor isotropic in distribution, so the experiments should be viewed as empirical examinations
rather than direct tests of our theory. Deatils of data preprocessing,
sampling, and data embedding are given in the supplemental materials.

Though datasets are large, experiments are feasible because the entire RFM trajectory depends
on the data only through the empirical covariance $S_n$ and
input--output cross-moment $E_n$, as described in
Section~\ref{sec:algorithm}. We calculate these two matrices in a
streaming pass without storing the full matrices $X$ and $Y$.
All updates use only $S_n$ and $E_n$, so their cost is independent
of the number of training pairs. Table~\ref{tab:streaming-scale}
summarizes runtime costs.

\begin{table}[htbp]
\centering
\small
\caption{Total runtime split between streaming and RFM implementation.}
\label{tab:streaming-scale}
\begin{tabular}{@{}lrrp{0.24\linewidth}p{0.29\linewidth}@{}}
\toprule
Dataset & $n$ & $d$ & Streaming stage & Later RFM updates \\
\midrule
OpenWebText & $10^9$ & $20{,}000$ & 2.01 hours, 16 CPU cores & 10 updates in 1.5 min on A100 \\
HLCA & $10^9$ & $20{,}000$ & 8.51 hours, 16 CPU cores & 10 updates in 1.5 min on A100 \\
\bottomrule
\end{tabular}
\end{table}

We ideally want to track the rank of feature predictors, but due to the smoothing parameter $\epsilon > 0$ in the RFM implementation, 
learned matrices will not be exactly low rank. Every eigenvalue of
$B_t$ is at least $\epsilon^{1/4}$ which we call the \emph{smoothing floor}.
To measure spectral concentration above this baseline, we
use the stable rank $\operatorname{srank}$ defined as
\[
\widetilde B_t:=B_t-\epsilon^{1/4}I_d,
\qquad
\operatorname{srank}(\widetilde B_t)
:=
\frac{\|\widetilde B_t\|_F^2}
     {\|\widetilde B_t\|_{\mathrm{op}}^2},
\qquad
\widetilde B_t\neq0.
\]

In our experiments, $\epsilon<1$, so at initialization
$\widetilde B_0=(1-\epsilon^{1/4})I_d$ and its stable rank is $d$.
Smaller values
indicate greater concentration of the excess feature weights in
dominant directions. 
Figures~\ref{fig:owt_trifigure}
and~\ref{fig:hlca-trifigure} report the stable rank of $\widetilde B_t$ and not that of $B_t$. 

\paragraph{Results.}
Figure~\ref{fig:owt_trifigure} shows how the OpenWebText representation
changes across iterations. The first update produces pronounced
spectral weights and changes the nearest-neighbor
relationships between word embeddings. Later updates continue to reshape
the representation. Word neighborhoods are obtained from the leading spectral components of $B_t$ and compared using cosine similarity. These neighborhoods at $t=0$ appear random and form an unstructured baseline for comparison.
The sample-size panel compares trajectories computed from smaller nested
prefixes with a reference trajectory computed from a much larger sample.
The observed discrepancy decreases approximately as $n^{-1/2}$.
The reference is itself a finite-sample trajectory, not the exact population
path. These comparisons motivate the theory: the real-data covariates are sparse and anisotropic, and the theoretical
assumptions need not hold.

\section{Discussion}

Our results describe not only the limiting predictor of regularized
lin-RFM, but also the representations learned along the way. Under the
isotropic teacher model, the population trajectory admits an exact spectral
description. With sufficiently many sub-Gaussian samples, the empirical
representation and predictor remain uniformly close to this trajectory,
despite reusing the same empirical moments at every iteration. This connects
the feature-learning mechanism to its statistical stability. Three directions
would extend this picture beyond the present setting.

\begin{enumerate}

\item \textbf{Non-isotropic covariance and convergence rates.}
Our scalar reduction relies on the invariant teacher basis. For a general
positive-definite covariance $\Sigma\neq I_d$, the matrices $\Sigma$ and
$E^\top E$ need not commute, so the population representation may also
rotate. Our explicit population rate and uniform-tracking
guarantees do not directly extend.

The synthetic experiments in Figure~\ref{fig:num-trifigure} suggest an
eventual geometric tail whose contraction factor depends on the covariance
spectrum. The separation among spectral families at a fixed condition number
also suggests that condition number alone does not characterize this
dependence. Can one obtain convergence and uniform-in-time tracking bounds
that account explicitly for the spectrum of $\Sigma$ and its alignment with
$E^\top E$?

\item \textbf{Nonlinear predictors and repeated data reuse.}
An important direction is to extend our analysis beyond linear predictors.
For a differentiable activation function $\sigma$, replacing the linear predictor by
$f_t(x)=A_t\sigma(B_tx)$ makes its gradient depend on the input. The
population AGOP becomes
\[
\operatorname{AGOP}(f_t)
=
\E\!\left[
B_t^\top D_{\sigma}(B_tx)A_t^\top A_t
D_{\sigma}(B_tx)B_t
\right],
\]
where $D_\sigma(z):=\diag(\sigma'(z_j))$.
In our linear setting, $D_\sigma=I_d$, so the AGOP reduces to
$F_t^\top F_t$. With nonlinear activations, the population dynamics no
longer reduce automatically to the scalar recursion studied here, and the
empirical trajectory is generally no longer determined by the two fixed
moments $(S_n,E_n)$.

Related work provides useful partial results. Zhu et
al.~\citep{zhu2025reweighted} prove finite-sample guarantees for an iterative
kernel reweighting method using fresh samples across iterations, while Zhu
et al.~\citep{zhu2026agop} show that the AGOP of a fitted kernel predictor
can recover relevant directions in nonlinear multi-index models. A natural
next question is whether the full nonlinear RFM trajectory can be
characterized and uniformly controlled when the same finite sample is reused
at every iteration.



\item \textbf{Architectural extensions.}
AGOP-based feature learning has also been developed for more structured
RFM models, including convolutional RFMs, Deep RFMs, and tree-structured
RFMs~\citep{beaglehole2023convolutional,beaglehole2024average,
beaglehole2026xrfm}.
A natural question is whether our trajectory-level analysis extends to
these settings: can one characterize their finite-time population dynamics
and uniformly control the empirical path when the same finite sample is
reused across iterations? 
\end{enumerate}

The broader goal is to understand when repeated feature learning refines
task-relevant representations without amplifying sampling error. Our analysis
establishes this connection in a tractable linear setting; extending it
requires understanding both the population evolution and the stability of
the data-dependent path.

\begin{figure}[htbp]
    \centering
    \makebox[\linewidth][c]{%
        \includegraphics[
            width=\linewidth,
            height=0.80\textheight,
            keepaspectratio,
            trim={0 1.0in 0 0},
            clip
        ]{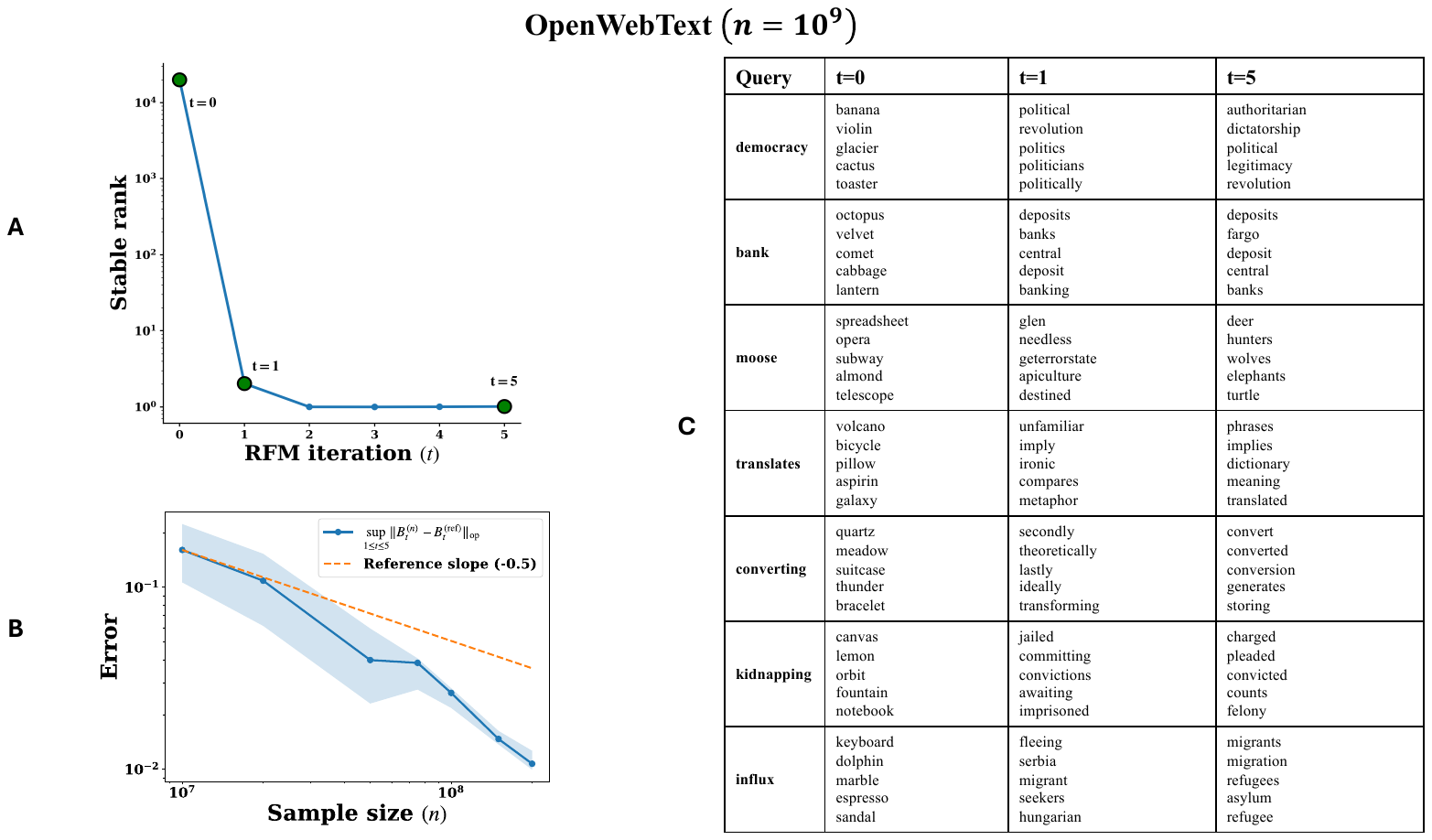}%
    }
\caption{\textbf{OpenWebText feature evolution under
regularized lin-RFM.}
\textbf{A (top-left):} Stable-rank trajectory of the
floor-subtracted feature matrix
$\widetilde B_t=B_t-\epsilon^{1/4}I_d$, highlighting
the initialization and the learned checkpoints
$t=1$ and $t=5$.
\textbf{B (bottom-left):} Maximum operator-norm discrepancy
over the evaluated iterations between finite-sample feature
matrices and a high-sample reference trajectory, plotted
against sample size $n$ and compared with an $n^{-1/2}$
reference rate.
\textbf{C (right):} Selected nearest-neighbor neighborhoods
for query words at $t=1$ and $t=5$, together with randomly
selected words at $t=0$, illustrating the task-dependent
structure that emerges across iterations.}
\label{fig:owt_trifigure}
\end{figure}

\begin{figure}[htbp]
    \centering
    \makebox[\linewidth][c]{%
        \includegraphics[
            width=\linewidth,
            height=0.80\textheight,
            keepaspectratio,
            trim={0 1.0in 0 0},
            clip
        ]{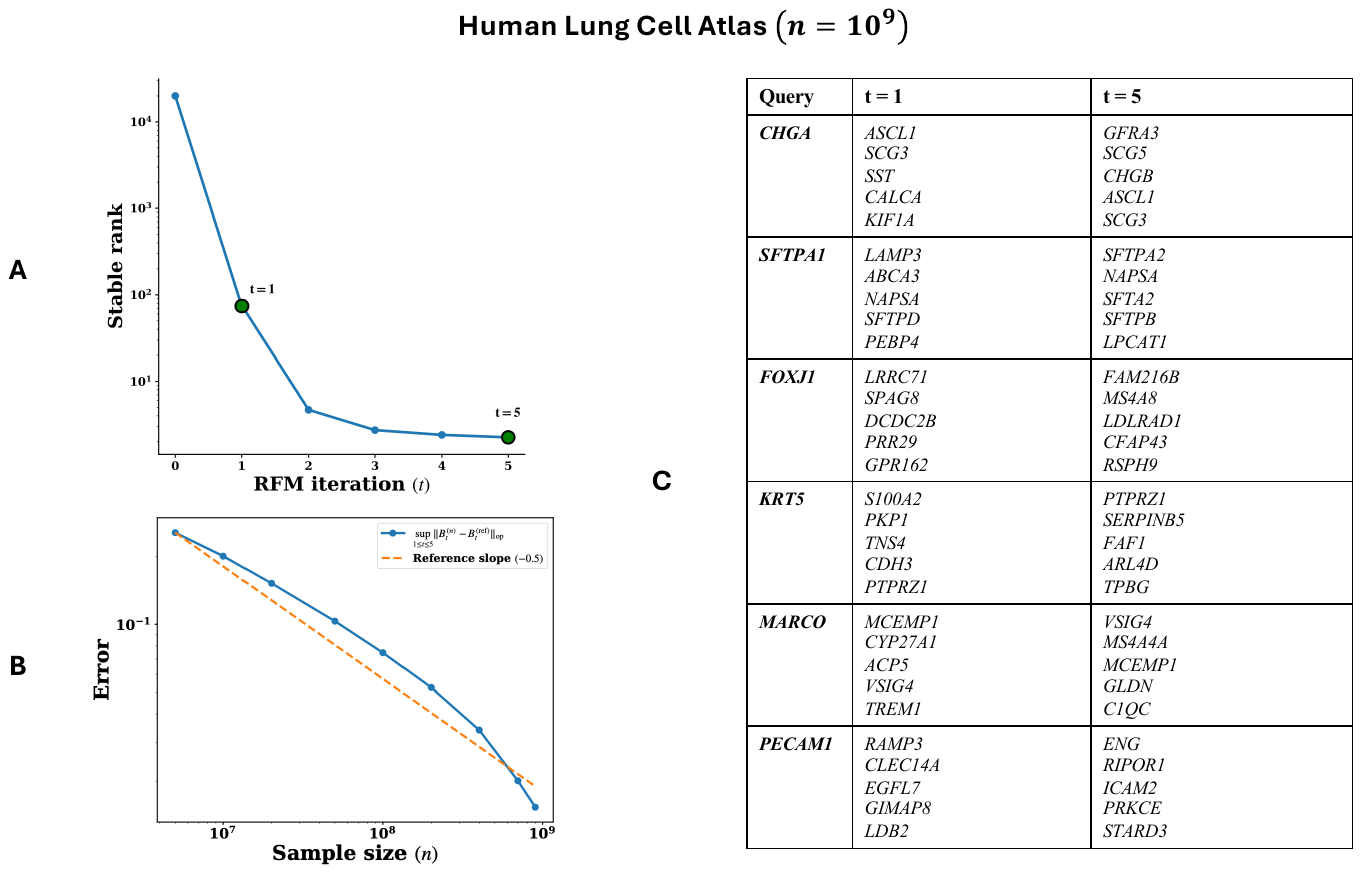}%
    }
\caption{\textbf{HLCA feature evolution under
regularized lin-RFM.}
\textbf{A (top-left):} Stable-rank trajectory of the
floor-subtracted feature matrix
$\widetilde B_t=B_t-\epsilon^{1/4}I_d$, highlighting
the initialization and the learned checkpoints
$t=1$ and $t=5$.
\textbf{B (bottom-left):} Maximum operator-norm discrepancy
over the evaluated iterations between finite-sample feature
matrices and a high-sample reference trajectory, plotted
against sample size $n$ and compared with an $n^{-1/2}$
reference rate.
\textbf{C (right):} Selected nearest-neighbor neighborhoods
for query genes at $t=1$ and $t=5$, illustrating the
task-dependent structure learned across iterations.}
\label{fig:hlca-trifigure}
\end{figure}

\begin{figure}[htbp]
\centering
\makebox[\linewidth][c]{%
    \includegraphics[
        width=\linewidth,
        trim={20bp 35bp 20bp 25bp},
        clip
    ]{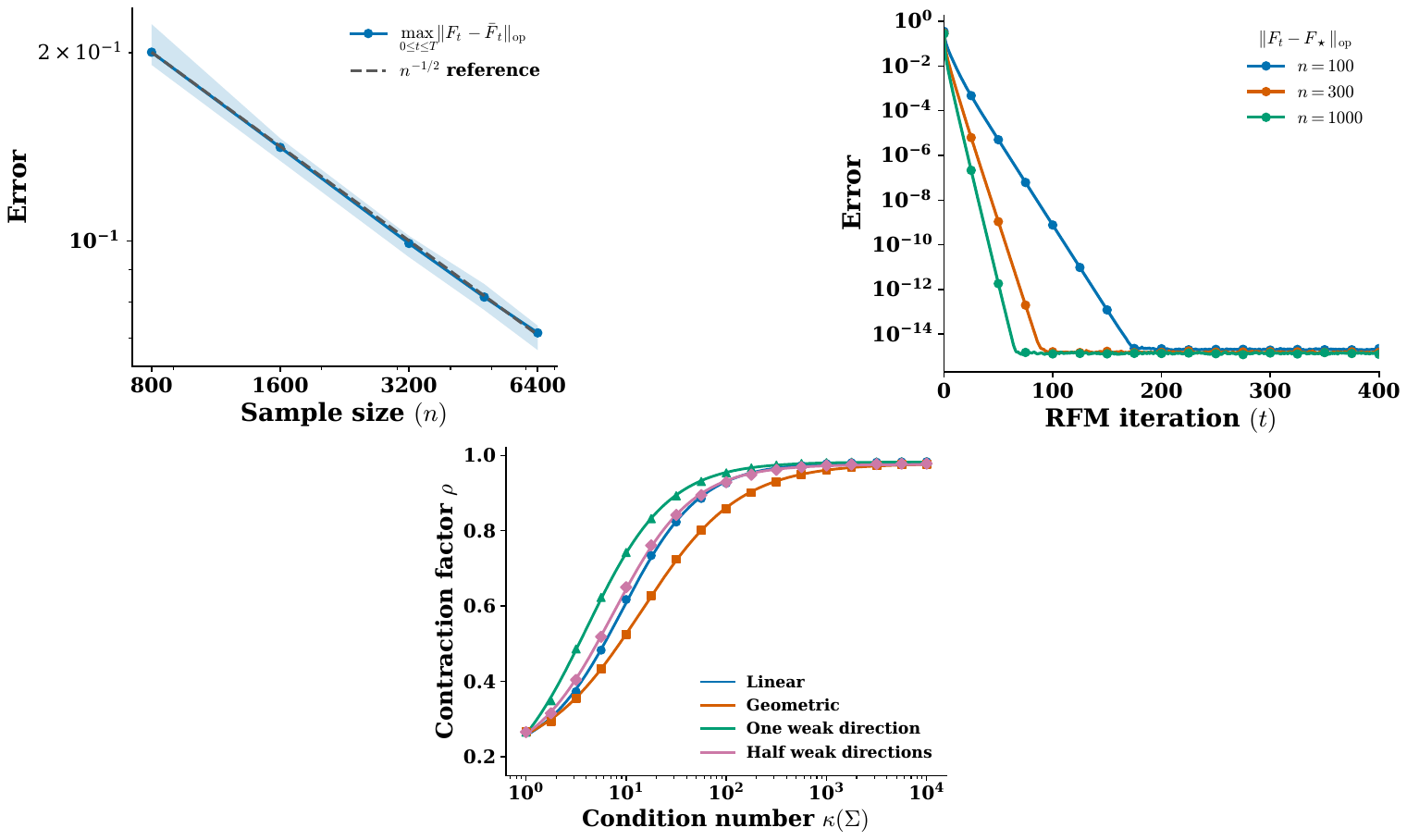}%
}
\caption{\textbf{Trajectory recovery and convergence of regularized lin-RFM.}
All panels use the model $y=Ex+\xi$ with Gaussian inputs and independent
Gaussian noise, reusing the same dataset at every update.
\textbf{Top left:}
Maximum predictor discrepancy
$\max_{0\le t\le10}\|F_t-\bar F_t\|_{\mathrm{op}}$
against sample size $n$. Points show medians over 20 independent datasets
per sample size, with a 10th--90th percentile band.
The observed decay is consistent with the $n^{-1/2}$ rate in
Corollary~\ref{cor:subgaussian-tracking}.
\textbf{Top right:}
Distance $\|F_t-F_{500}\|_{\mathrm{op}}$ to the final predictor from a
500-iteration run on the same dataset, for $n=100,300,1000$.
Curves show medians over 16 datasets per sample size and exhibit
approximately linear convergence before leveling off at numerical precision.
Both top panels use isotropic inputs, $d=p=50$, teacher rank $3$,
$(\lambda,\epsilon)=(0.3,0.04)$, and noise standard deviation $0.5$.
\textbf{Bottom:}
Estimated contraction factors $\hat\rho$ against $\kappa(\Sigma)$
for four covariance spectra: linearly spaced, geometrically spaced,
one weak direction, and half weak directions.
Each covariance is normalized to $\operatorname{tr}(\Sigma)=d$.
Rates are estimated from the late-iteration decay of the Frobenius-norm
distance to a numerical endpoint; markers show the estimates and smooth
curves are descriptive sigmoid fits.
Values closer to one indicate slower convergence. The separation between
curves shows that equal condition numbers can yield different rates
in these experiments.
Here $d=p=80$, $n=4000$, teacher rank $5$,
$(\lambda,\epsilon)=(0.01,0.001)$, and noise standard deviation $0.05$.
The covariance constructions are given in Appendix~B.3.}
\label{fig:num-trifigure}
\end{figure}

\FloatBarrier \section*{AI Disclosure}
OpenAI's ChatGPT 5.5 and Astra 6 were used to assist with manuscript
editing and exposition, including the presentation and checking of
mathematical arguments, related-work discussion, and checks of numerical
experiments. The authors reviewed and revised all AI-assisted material and
take full responsibility for the correctness and content of the paper.

\clearpage
\bibliography{main}

\clearpage
\appendix
\section*{Supplementary proofs and experiments}\label{app:supplement}

\section{Proofs for uniform finite-sample stability}
\label{app:tracking-proofs}

We prove Theorem~\ref{thm:deterministic-tracking} through the following
more general theorem. Unlike the bounded-rank statement in the main
text, this formulation allows the teacher rank to vary. The argument
is deterministic and the sub-gaussianity concentration assumptions enter only in
Section~\ref{subsec:tracking-concentration}.

\begin{theorem}[Rank-explicit uniform trajectory stability]
\label{thm:rank-explicit-tracking}
Fix $\lambda,\epsilon>0$ and $M_E<\infty$.
Let $E\in\R^{d\times d}$ satisfy
$1\le r:=\rank(E)\le d$ and $\|E\|\le M_E$.
Let $(B_t,F_t)$ follow~\eqref{eq:main-recursion} with empirical moments
$(S_n,E_n)$, and let $(\bar B_t,\bar F_t)$ follow the same recursion
with $(S_n,E_n)$ replaced by $(I_d,E)$.
Both recursions start from $B_0=\bar B_0=I_d$.
Define
\[
\Delta_n:=\|S_n-I_d\|+\|E_n-E\|.
\]
There exist constants $c,C>0$, depending only on
$(M_E,\lambda,\epsilon)$, such that
\begin{equation}
\sqrt r\,\Delta_n\le c
\quad\Longrightarrow\quad
\sup_{t\ge0}
\max\left\{
\|B_t-\bar B_t\|,
\|F_t-\bar F_t\|
\right\}
\le C\sqrt r\,\Delta_n.
\label{eq:rank-explicit-stability}
\end{equation}
\end{theorem}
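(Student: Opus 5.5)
We will compare the empirical recursion with the population one through the single-step map $\Phi_{(S,G)}(B):=\bigl(F^\top F+\epsilon I_d\bigr)^{1/4}$, where $F=G B(BSB+\lambda I_d)^{-1}B=G(S+\lambda B^{-2})^{-1}$. Then $B_{t+1}=\Phi_{(S_n,E_n)}(B_t)$ and $\bar B_{t+1}=\Phi_{(I_d,E)}(\bar B_t)$. Every iterate satisfies $B_t\succeq\epsilon^{1/4}I_d$, and $\|B_t\|$ is bounded by a constant depending on $(M_E,\lambda,\epsilon)$ as long as $\|E_n\|\le 2M_E$. This holds because $\|F\|\le\|G\|\,\|(S+\lambda B^{-2})^{-1}\|$ and $S\succeq\tfrac12 I_d$ when $\Delta_n\le\tfrac12$. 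On this compact set of admissible $B$, the map $(S,G,B)\mapsto\Phi$ is Lipschitz in operator norm, using the resolvent identity and the Lipschitz property of $A\mapsto A^{1/4}$ on $A\succeq\epsilon I$. The error splits as
\[
B_{t+1}-\bar B_{t+1}=\bigl[\Phi_{(S_n,E_n)}(B_t)-\Phi_{(I,E)}(B_t)\bigr]+\bigl[\Phi_{(I,E)}(B_t)-\Phi_{(I,E)}(\bar B_t)\bigr].
\]
The first bracket is at most $L\Delta_n$ uniformly in $t$. Any bound on $F_t-\bar F_t$ follows from one on $B_t-\bar B_t$ by the same Lipschitz estimate, so it suffices to control $e_t:=B_t-\bar B_t$.

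The core of the argument is the propagation term, that is, the derivative of the population map $\Phi:=\Phi_{(I,E)}$ along the population path. Work in the basis $V$, where $\bar B_t$ is diagonal with entries $b_t(\mu_i)$, and split a perturbation $H$ into blocks indexed by the range $R$ of $E^\top E$ (dimension $r$) and its complement $N$. On $N$ we have $\mu=0$, so $F=E(\cdot)$ annihilates $N$ to first order. The $NN$ block of $F^\top F$ is then quadratic in $H$, and $\Phi$ is flat there. The $RN$ and $NR$ blocks enter linearly through $E$, but $F^\top F+\epsilon I$ has an eigengap between the $R$-eigenvalues $\ge\epsilon+\mu_r f$ and the $N$-eigenvalues $=\epsilon$. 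A Daleckii--Krein (divided-difference) formula for $A\mapsto A^{1/4}$ shows these off-diagonal blocks are contracted by a factor that is bounded away from $1$ uniformly in $t$. The $RR$ block is diagonal-to-diagonal plus an off-diagonal divided-difference part. Its diagonal part is the scalar derivative $\partial b_{t+1}/\partial b_t$, which by Corollary~\ref{cor:population-convergence} tends to at most $\kappa_\star<1$. Its within-$R$ off-diagonal part involves divided differences of $g_\mu$ that are also controlled by the gaps.

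We therefore introduce a weighted block norm $\|H\|_w:=\max\{\|H_{RR}\|_F,\ \theta\|H_{RN}\|,\ \theta\|H_{NR}\|,\ \|H_{NN}\|\}$. The Frobenius norm on the $r\times r$ block is what produces the factor $\sqrt r$, since $\|H_{RR}\|\le\|H_{RR}\|_F\le\sqrt r\|H_{RR}\|$. Once $\theta$ is chosen, the goal is to show that $\|D\Phi(\bar B_t)\|_w\le\rho<1$ for all $t\ge T_0$. For the finitely many earlier steps, $T_0$ depends only on the scalar recursion on the interval $\mu\in[0,M_E^2]$. A compactness argument over $\mu$ bounds $T_0$, and the derivative norms for $t<T_0$, by constants depending only on $(M_E,\lambda,\epsilon)$. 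I expect this step to be the main obstacle: obtaining a contraction that is uniform over all spectra $\{\mu_i\}\subset(0,M_E^2]$, including very small $\mu_i$ and clustered eigenvalues, with constants free of $r$ and $d$. For small $\mu$ the gap closes, but the coupling through $E$ vanishes at the same rate, and I would exploit this cancellation first.

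Finally, we close the argument by induction. For $t<T_0$ the crude Lipschitz recursion gives $\|e_t\|_w\le L^{T_0}C\sqrt r\Delta_n$. For $t\ge T_0$, a Taylor expansion with second-order remainder $K\|e_t\|^2$ gives
\[
\|e_{t+1}\|_w\le\rho\|e_t\|_w+K\|e_t\|_w^2+L\sqrt r\Delta_n .
\]
If $\sqrt r\Delta_n\le c$ is small enough that $K\cdot C\sqrt r\Delta_n\le(1-\rho)/2$, the set $\|e_t\|_w\le C\sqrt r\Delta_n$ is invariant, with $C=2L/(1-\rho)$ plus the burn-in constant. This invariance also keeps the iterates inside the region where the local estimates hold. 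Converting $\|\cdot\|_w$ back to the operator norm and applying the Lipschitz bound for $F$ yields~\eqref{eq:rank-explicit-stability}.
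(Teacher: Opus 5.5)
Your architecture is the paper's: a moment error at a common input plus propagation through the population map, a teacher-basis block norm with Frobenius norm on the signal block, a Dalecki\u{i}--Kre\u{i}n derivative, a burn-in followed by contraction, and a quadratic remainder closed by induction. However, the step you flag as the main obstacle is left open, and eigengaps are not what resolve it. Because the population iterate is diagonal in the teacher basis, the derivative of the metric map is an exact Schur multiplier, $V^\top DT_\infty(H)[\Delta]V=K(H)\odot(V^\top\Delta V)$, with $K_{ij}=\lambda(\mu_is_i+\mu_js_j)/\bigl((h_i+\lambda)(h_j+\lambda)(t_i+t_j)\bigr)$. In your $B$-parametrization the coefficients pick up the factor $\bigl(b_t(\mu_i)+b_t(\mu_j)\bigr)/\bigl(b_{t+1}(\mu_i)+b_{t+1}(\mu_j)\bigr)$, which equals $1$ at the fixed point, so working with $B$ is harmless. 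Clustered $\mu_i$ cause no difficulty, and the Frobenius norm of the signal block is contracted by $\max_{i,j}K_{ij}$. Since all null eigenvalues coincide along the population path, the derivative acts on the signal--null block as left multiplication by a diagonal matrix. Blocks do not mix at first order, so no weight $\theta$ is needed there. The missing computation is the fixed-point identity $\mu_is_i=(h_i^2-\epsilon)(h_i+\lambda)/h_i$. It yields $K_{ij}(H_\star)\le\lambda/(\lambda+\sqrt\epsilon)$ for every spectrum, with $K_{i0}(H_\star)=\frac{\lambda}{\lambda+\sqrt\epsilon}(1-\sqrt\epsilon/h_i)\to0$ as $\mu_i\to0$; this is exactly the cancellation you anticipated. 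Citing Corollary~\ref{cor:population-convergence} does not suffice, because its $\kappa_\star$ depends on the spectrum. Uniformity of the burn-in then follows from monotonicity of the scalar maps and Dini's theorem on $[0,M_E^2]$. Also, the cross-block factors are not below $1$ uniformly in $t$, as you assert: at $t=0$ they grow with $\mu_i$ and exceed $1$ when $M_E$ is large.

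The second problem is an error in the $r$-bookkeeping of the closing step. Because $\|\cdot\|_w$ does not normalize $\|H_{RR}\|_F$, your moment term is $L\sqrt r\,\Delta_n$ and your invariant ball has radius of order $\sqrt r\,\Delta_n$. The remainder is then not $K\|e_t\|_w^2$ with $K$ free of $r$. Transferring your operator-norm Taylor bound to $\|\cdot\|_w$ costs a factor $\sqrt r$ on the $RR$ block, and this loss is intrinsic. Cross products $e_{RN}(\cdot)e_{NR}$ land in the $RR$ block at second order. For $e_{RN}=b\,[\,I_r\ \ 0\,]$ (when $d\ge 2r$) they are generically of size $b^2I_r$, hence of Frobenius norm of order $\sqrt r\,b^2$, while $\|e\|_w=\theta b$. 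Your ball allows $\|e_{RN}\|$ of order $\sqrt r\,\Delta_n/\theta$, so invariance requires $r\Delta_n\le c$, and with $r$-free constants you prove a weaker statement than the theorem. The paper uses two ingredients here:
\begin{itemize}
\item the normalization $\frac1{\sqrt r}\|\Delta_{ss}\|_F$, under which operator-norm moment errors cost only $C\Delta_n$ and the cross blocks are held at scale $\Delta_n$ while the signal block sits at scale $\sqrt r\,\Delta_n$ (taking $\theta\asymp\sqrt r$ in your norm would have the same effect);
\item a remainder estimated directly in the block norm, via $\|AB\|_{\mathrm{blk}}\le4\sqrt r\|A\|_{\mathrm{blk}}\|B\|_{\mathrm{blk}}$ and the block Taylor bound for the square root, which gives $C_0\sqrt r\,e_t^2$. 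A crude operator-norm remainder would cost a factor $r$ even in the paper's norm.
\end{itemize}
With these, closing needs only $\sqrt r\,\Delta_n\le c$, and $\sqrt r$ enters only through the final conversion $\|\cdot\|\le4\sqrt r\|\cdot\|_{\mathrm{blk}}$. Your operator-norm Lipschitz treatment of the first $T_0$ steps is fine.
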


\paragraph{Relation to the main theorem.}
When $r\le r_0$, Theorem~\ref{thm:deterministic-tracking} follows
immediately from Theorem~\ref{thm:rank-explicit-tracking}.
Indeed, if
\[
\Delta_n\le \frac{c}{\sqrt{r_0}},
\]
then $\sqrt r\,\Delta_n\le c$, and hence
\[
\sup_{t\ge0}
\max\left\{
\|B_t-\bar B_t\|,
\|F_t-\bar F_t\|
\right\}
\le C\sqrt r\,\Delta_n
\le C\sqrt{r_0}\,\Delta_n.
\]
Thus the constants in Theorem~\ref{thm:deterministic-tracking}
may be chosen to depend only on
$(r_0,M_E,\lambda,\epsilon)$.



\paragraph{Roadmap.}
We compare the empirical and population metrics
$H_t^{(n)}=B_t^2$ and $H_t^{(\infty)}=\bar B_t^2$ where the superscripts $(n), (\infty)$ emphasize the dependence on finite number of samples $(n)$ and the population setting, respectively. Define the trajectory error and moment error by
\[
e_t:=\|H_t^{(n)}-H_t^{(\infty)}\|_{\mathrm{blk}},
\qquad
\Delta_n:=\|S_n-I_d\|+\|E_n-E\|.
\]
The block norm is defined in Section~\ref{subsec:tracking-block-norm} and
unqualified norms are operator norms.

We first compare the empirical and population updates at the same
input, then use a Taylor expansion to control the effect of the
existing trajectory error
(Sections~\ref{subsec:tracking-maps}--\ref{subsec:tracking-remainder}).
For sufficiently small $e_t$, this gives
\[
e_{t+1}\le
\underbrace{C\Delta_n}_{\text{moment error}}
+\underbrace{\kappa_t e_t}_{\text{propagated error}}
+\underbrace{C_0\sqrt r\,e_t^2}_{\text{quadratic remainder}},
\]
where $r=\rank(E)$ and $\kappa_t$ bounds the local amplification
by the population update.

After a finite burn-in number of steps, $\kappa_t\le\rho<1$.
For sufficiently small $\sqrt r\,\Delta_n$, an induction controls
the initial growth and uses the eventual contraction to prove
$\sup_{t\ge0}e_t\le C\Delta_n$.
Converting this metric bound to operator norm and then recovering
the feature and predictor bounds proves
Theorem~\ref{thm:rank-explicit-tracking}
in Section~\ref{subsec:tracking-recursion}.
Concentration of $\Delta_n$ then gives
Corollary~\ref{cor:subgaussian-tracking}
in Section~\ref{subsec:tracking-concentration}.

Throughout, $\|\cdot\|$ denotes matrix operator norm. Fix $\lambda,\epsilon>0$, $1\le r=\rank(E)\le d$, and $\|E\|\le M_E$. Write
\[
S_n:=\frac1nXX^\top,\qquad
E_n:=\frac1nYX^\top,\qquad
\delta_S:=\|S_n-I_d\|,\qquad
\delta_E:=\|E_n-E\|,
\]
so that $\Delta_n=\delta_S+\delta_E$ is the moment error in Theorem~\ref{thm:deterministic-tracking}. Constants denoted by $c,C$ may change from line to line and depend only on $(M_E,\lambda,\epsilon)$ unless additional dependence is stated. The deterministic comparison below assumes $\delta_S\le1/2$ and $\delta_E\le1$. The hypothesis $\sqrt r\,\Delta_n\le c$ in
Theorem~\ref{thm:rank-explicit-tracking} ensures these conditions
after choosing $c\le 1/2$, since $r\ge1$ and
$\delta_S,\delta_E\le \Delta_n$. 


\subsection{Metric maps, trapping, and one-step perturbation}
\label{subsec:tracking-maps}

For $H\succ0$, define the empirical ridge factor
\[
\mathcal W_n(H):=(S_n+\lambda H^{-1})^{-1},
\]
and its population counterpart
\[
s(H):=(I_d+\lambda H^{-1})^{-1}
     =H(H+\lambda I_d)^{-1}.
\]
The corresponding empirical and population metric maps are
\begin{equation}
\begin{aligned}
T_n(H)
&:=\bigl(
\mathcal W_n(H)E_n^\top E_n\mathcal W_n(H)
+\epsilon I_d
\bigr)^{1/2},\\
T_\infty(H)
&:=\bigl(
s(H)E^\top E s(H)
+\epsilon I_d
\bigr)^{1/2}.
\end{aligned}
\label{eq:metric-maps-app}
\end{equation}
Thus, with
\[
H_0^{(n)}=H_0^{(\infty)}=I_d,
\]
the empirical and population metric trajectories satisfy
\[
H_{t+1}^{(n)}=T_n(H_t^{(n)}),
\qquad
H_{t+1}^{(\infty)}=T_\infty(H_t^{(\infty)}).
\]





\begin{lemma}[Uniform spectral bounds]\label{lem:metric-trapping}
Fix $M_E<\infty$. Suppose
\[
\|E\|\le M_E,
\qquad
\|S_n-I_d\|\le\delta_S\le\frac12,
\qquad
\|E_n-E\|\le\delta_E\le1.
\]
Set $m:=\min\{1,\sqrt\epsilon\}$ and $M:=\max\left\{1,\sqrt{4(M_E+1)^2+\epsilon}\right\}$.
Then, for every $t\ge0$,
\[
mI_d\preceq H_t^{(n)}\preceq MI_d,
\qquad
mI_d\preceq H_t^{(\infty)}\preceq MI_d.
\]
Moreover, for every $H\succ0$,
\[
0\preceq\mathcal W_n(H)\preceq S_n^{-1}\preceq2I_d,
\qquad
0\preceq s(H)\preceq I_d.
\]
\end{lemma}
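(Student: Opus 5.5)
We first establish the operator bounds on the ridge factors, which hold for every $H\succ0$, and then prove the spectral bounds on $H_t$ by a one-step argument. The iterate at time $t+1$ is the image of an arbitrary positive definite matrix, so no induction on earlier bounds is needed beyond positivity. The case $t=0$ is trivial, since $H_0=I_d$ and $m\le1\le M$.

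\textbf{Ridge factors.} Since $\|S_n-I_d\|\le\delta_S\le\tfrac12$, we have $S_n\succeq\tfrac12 I_d$, so $S_n^{-1}\preceq 2I_d$. For $H\succ0$ and $\lambda>0$, the matrix $\lambda H^{-1}$ is positive definite, hence $S_n+\lambda H^{-1}\succeq S_n\succ0$. Operator monotonicity of inversion on positive definite matrices then gives
\[
0\prec\mathcal W_n(H)\preceq S_n^{-1}\preceq2I_d.
\]
The same argument with $S_n$ replaced by $I_d$ gives $0\preceq s(H)\preceq I_d$.

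\textbf{Spectral bounds on the iterates.} For $t\ge0$, write $H_{t+1}^{(n)}=T_n(H_t^{(n)})=(G+\epsilon I_d)^{1/2}$, where $G:=\mathcal W E_n^\top E_n\mathcal W\succeq0$ and $\mathcal W:=\mathcal W_n(H_t^{(n)})$.

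\emph{Lower bound.} From $G\succeq0$ we get $G+\epsilon I_d\succeq\epsilon I_d$. The square root is operator monotone, so $H_{t+1}^{(n)}\succeq\sqrt\epsilon\, I_d\succeq m I_d$.

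\emph{Upper bound.} Submultiplicativity gives $\|G\|\le\|\mathcal W\|^2\|E_n\|^2\le 4(M_E+1)^2$, using $\|E_n\|\le\|E\|+\delta_E\le M_E+1$. Hence $G+\epsilon I_d\preceq\bigl(4(M_E+1)^2+\epsilon\bigr)I_d$, and by operator monotonicity of the square root, $H_{t+1}^{(n)}\preceq M I_d$.

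The population iterates are handled identically, with the sharper bounds $\|s(H)\|\le1$ and $\|E\|\le M_E$. These give the upper bound $\sqrt{M_E^2+\epsilon}\le M$.

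No step is a real obstacle. The only points requiring care are positivity of every $H_t$, which is needed for $\mathcal W_n$ to be defined and which follows from the lower bound, and the use of operator monotonicity for both inversion and the square root. The spectral bounds on $H_{t+1}$ use only the factor bounds, which hold for arbitrary $H\succ0$, so the result holds at every $t$ without tracking the trajectory.
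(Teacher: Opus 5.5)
Your proof is correct and follows the paper's argument essentially step for step: operator monotonicity of inversion for the ridge factors, then a one-step bound $\sqrt\epsilon\, I_d\preceq T_n(H)\preceq\sqrt{4(M_E+1)^2+\epsilon}\, I_d$ valid for every $H\succ0$, with $H_0=I_d$ handled separately. The only cosmetic difference is that you obtain $s(H)\preceq I_d$ from $I_d+\lambda H^{-1}\succeq I_d$, whereas the paper uses the scalar bound $h/(h+\lambda)<1$.
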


\begin{proof}
Since $S_n+\lambda H^{-1}\succeq S_n\succeq I_d/2$, inversion of these inequalities and using matrix monotonicity of the inverse gives
\[
0\prec\mathcal W_n(H)\preceq S_n^{-1}\preceq2I_d.
\]
The population bound follows from $0<h/(h+\lambda)<1$ for $h>0$. For every $H\succ0$, both updates satisfy
\[
T_n(H)\succeq\sqrt\epsilon I_d,
\qquad
T_\infty(H)\succeq\sqrt\epsilon I_d.
\]
For the empirical map,
\[
\|T_n(H)\|
\le\sqrt{\|\mathcal W_n(H)\|^2\|E_n^\top E_n\|+\epsilon}
\le\sqrt{4(M_E+1)^2+\epsilon},
\]
because $\|E_n\|\le M_E+1$. The population map obeys the smaller bound $\sqrt{M_E^2+\epsilon}$. Including $H_0=I_d$ proves the claim.
\end{proof}

Define the common spectral region
\[
\mathcal K:=\{H=H^\top:mI_d\preceq H\preceq MI_d\}.
\]
The preceding lemma places both metric trajectories in $\mathcal K$
without assuming that they are close. We next compare their update maps
at a common input.

\begin{lemma}[Square-root stability]\label{lem:square-root-stability}
If $A,B\succeq\epsilon I_d$ with $\epsilon>0$, then
\[
\|A^{1/2}-B^{1/2}\|
\le\frac1{2\sqrt\epsilon}\|A-B\|.
\]
\end{lemma}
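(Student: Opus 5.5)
The plan is to derive the bound from an integral representation of the operator square root, rather than from any eigenvector-by-eigenvector comparison. The reason is that $A$ and $B$ need not commute, so no simultaneous diagonalization is available. For $C\succ0$ we have
\[
C^{1/2}=\frac{1}{\pi}\int_0^\infty \bigl(I_d-s(C+sI_d)^{-1}\bigr)\,s^{-1/2}\,ds
=\frac1\pi\int_0^\infty C(C+sI_d)^{-1}s^{-1/2}\,ds .
\]
The resolvent identity then gives
\[
A^{1/2}-B^{1/2}=\frac1\pi\int_0^\infty s^{1/2}(A+sI_d)^{-1}(A-B)(B+sI_d)^{-1}\,ds .
\]

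Next I take operator norms inside the integral. Since $A,B\succeq\epsilon I_d$, both resolvent norms are at most $(\epsilon+s)^{-1}$. This yields
\[
\|A^{1/2}-B^{1/2}\|\le\frac{\|A-B\|}{\pi}\int_0^\infty\frac{s^{1/2}}{(\epsilon+s)^2}\,ds .
\]
Substituting $s=\epsilon u$ reduces the integral to $\epsilon^{-1/2}\int_0^\infty u^{1/2}(1+u)^{-2}\,du$. That Beta integral equals $B(3/2,1/2)=\Gamma(3/2)\Gamma(1/2)/\Gamma(2)=\pi/2$. The right-hand side is therefore $\|A-B\|/(2\sqrt\epsilon)$, which is exactly the claimed constant.

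As an alternative route, I could set $X=A^{1/2}-B^{1/2}$ and write $A-B=A^{1/2}X+XB^{1/2}$. This is a Sylvester equation whose coefficients satisfy $A^{1/2},B^{1/2}\succeq\sqrt\epsilon I_d$. Take a unit vector $v$ and unit eigenvector-like direction realizing $\|X\|$. Because $X$ is symmetric, I can choose an eigenvector $v$ of $X$ with eigenvalue $\pm\|X\|$. Then
\[
v^\top(A-B)v=\pm\|X\|\bigl(v^\top A^{1/2}v+v^\top B^{1/2}v\bigr),
\]
so $|v^\top(A-B)v|\ge 2\sqrt\epsilon\,\|X\|$. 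This argument is even shorter, and it is probably the one I would write, keeping the integral as a cross-check.

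Neither step is a real obstacle; the one point to watch is symmetry. In the eigenvector argument, symmetry of $X$ is essential, and it holds because both principal square roots are symmetric. The lower bound $v^\top A^{1/2}v\ge\sqrt\epsilon$ follows from operator monotonicity of the square root, or simply from the spectral bound on $A^{1/2}$.
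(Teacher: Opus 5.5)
Both of your arguments are correct and recover the sharp constant $1/(2\sqrt\epsilon)$. The paper starts from the same Sylvester identity as your second route, $A^{1/2}D+DB^{1/2}=A-B$ with $D=A^{1/2}-B^{1/2}$. It then writes the solution explicitly as $D=\int_0^\infty e^{-tA^{1/2}}(A-B)e^{-tB^{1/2}}\,dt$ and bounds the integrand by $e^{-2t\sqrt\epsilon}\|A-B\|$.

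Your eigenvector test replaces this integral by a single quadratic form against an extremal eigenvector of the symmetric matrix $D$. It is the most elementary of the three arguments, with no integrals and no appeal to unique solvability of the Sylvester equation. However, it uses the symmetry of $D$ and the fact that the operator norm of a symmetric matrix is attained at an eigenvector, so it would have to be redone for other norms.

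The paper's formula instead exhibits $D$ as a fixed linear operator applied to $A-B$, and it reuses exactly this in Lemma~\ref{lem:block-square-root-Taylor}. There the coefficient is block diagonal in the teacher basis, so the same integral, combined with the block-diagonal multiplier bound, directly yields the derivative and remainder estimates in the block norm.

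Your first route, the resolvent (Stieltjes) representation, is genuinely different and is not specific to the square root. The same computation with weight $s^{p-1}$ and prefactor $\sin(p\pi)/\pi$ gives $\|A^p-B^p\|\le p\,\epsilon^{p-1}\|A-B\|$ for all $p\in(0,1)$. For instance, it bounds the fourth root in the feature update directly. The Sylvester identity, by contrast, relies on the factorization $X^2-Y^2=X(X-Y)+(X-Y)Y$ and covers only $p=1/2$.
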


\begin{proof}
Let $X=A^{1/2}$, $Y=B^{1/2}$, and $D=X-Y$. Then
\[
XD+DY=A-B.
\]
The Sylvester equation has the integral solution
\[
D=\int_0^\infty e^{-tX}(A-B)e^{-tY}\,dt.
\]
Since $X,Y\succeq\sqrt\epsilon I_d$, taking norms and integrating gives the displayed bound.
\end{proof}

\begin{lemma}[Moment perturbation of the metric map]\label{lem:map-perturbation-H}
Under the moment conditions above, there is a constant $C=C(M_E,\lambda,\epsilon)$ such that
\begin{equation}
\sup_{mI_d\preceq H\preceq MI_d}
\|T_n(H)-T_\infty(H)\|
\le C(\delta_S+\delta_E).
\label{eq:map-perturbation-H}
\end{equation}
\end{lemma}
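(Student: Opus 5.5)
Since both $T_n(H)$ and $T_\infty(H)$ are positive square roots of matrices bounded below by $\epsilon I_d$, the natural plan is to apply Lemma~\ref{lem:square-root-stability}. This reduces the claim to bounding the difference of the radicands:
\[
\|T_n(H)-T_\infty(H)\|\le\frac{1}{2\sqrt\epsilon}\,\bigl\|\mathcal W_n(H)E_n^\top E_n\mathcal W_n(H)-s(H)E^\top E s(H)\bigr\|.
\]
We then bound this radicand difference uniformly over $H\in\mathcal K$ by a telescoping decomposition. Writing $W=\mathcal W_n(H)$ and $s=s(H)$,
\[
WE_n^\top E_nW-sE^\top Es=(W-s)E_n^\top E_nW+s(E_n^\top E_n-E^\top E)W+sE^\top E(W-s).
\]
By Lemma~\ref{lem:metric-trapping}, $\|W\|\le2$ and $\|s\|\le1$. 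From $\|E_n\|\le M_E+1$ we also get
\[
\|E_n^\top E_n-E^\top E\|\le\|E_n-E\|\,(\|E_n\|+\|E\|)\le(2M_E+1)\delta_E.
\]
The middle term is therefore at most $2(2M_E+1)\delta_E$. The outer two terms are at most $C(M_E)\|W-s\|$.

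The remaining step, which is the only point needing care, is a bound on $\|W-s\|$ that is uniform in $H\succ0$. The resolvent identity gives
\[
W-s=(S_n+\lambda H^{-1})^{-1}-(I_d+\lambda H^{-1})^{-1}=-W(S_n-I_d)s.
\]
Hence $\|W-s\|\le\|W\|\,\|S_n-I_d\|\,\|s\|\le2\delta_S$, using the bounds $\|W\|\le2$ and $\|s\|\le1$ from Lemma~\ref{lem:metric-trapping}. These bounds hold for every $H\succ0$ and do not depend on $\lambda$ or on the spectral window $[m,M]$. The supremum over $\mathcal K$ is thus harmless, and the constraint $H\in\mathcal K$ is not actually needed.

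Combining the three terms gives a radicand difference of at most $C'(M_E)(\delta_S+\delta_E)$. Dividing by $2\sqrt\epsilon$ yields \eqref{eq:map-perturbation-H} with $C=C'(M_E)/(2\sqrt\epsilon)$, which depends only on $(M_E,\epsilon)$ and hence on $(M_E,\lambda,\epsilon)$. I do not expect any genuine obstacle; the one thing to check is the sign and ordering in the resolvent identity, that is, $A^{-1}-B^{-1}=A^{-1}(B-A)B^{-1}$ with $A=S_n+\lambda H^{-1}$ and $B=I_d+\lambda H^{-1}$.
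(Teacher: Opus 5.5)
Your proposal is correct and follows essentially the paper's argument: the same resolvent identity $\mathcal W_n(H)-s(H)=\mathcal W_n(H)(I_d-S_n)s(H)$ gives $\|\mathcal W_n(H)-s(H)\|\le 2\delta_S$ uniformly in $H\succ0$. It is followed by a product bound on the difference of the radicands and an application of Lemma~\ref{lem:square-root-stability}. The only cosmetic difference is the grouping: the paper writes that difference as $U_n^\top U_n-U_\infty^\top U_\infty$ with $U_n=E_n\mathcal W_n(H)$ and $U_\infty=Es(H)$, whereas you telescope it into three terms through $E_n^\top E_n-E^\top E$.
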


\begin{proof}
The resolvent identity gives the exact formula
\begin{equation}
\mathcal W_n(H)-s(H)
=\mathcal W_n(H)(I_d-S_n)s(H).
\label{eq:W-moment-identity}
\end{equation}
Hence
\[
\|\mathcal W_n(H)-s(H)\|
\le2\delta_S.
\]
Set $U_n:=E_n\mathcal W_n(H)$ and $U_\infty:=Es(H)$. Then
\[
\|U_n-U_\infty\|
\le2\delta_E+2\|E\|\delta_S,
\]
while $\|U_n\|+\|U_\infty\|$ is uniformly bounded. Since
\[
\mathcal W_n(H)E_n^\top E_n\mathcal W_n(H)=U_n^\top U_n,
\qquad
s(H)E^\top E s(H)=U_\infty^\top U_\infty,
\]
we have
\begin{align*}
\|U_n^\top U_n-U_\infty^\top U_\infty\|
&\le (\|U_n\|+\|U_\infty\|)\|U_n-U_\infty\|\\
&\le C(\delta_S+\delta_E).
\end{align*}
After adding and subtracting $\epsilon I_d$, take square roots and use Lemma~\ref{lem:square-root-stability} to get~\eqref{eq:map-perturbation-H}.
\end{proof}

\subsection{Teacher-adapted block norm}\label{subsec:tracking-block-norm}
To analyze how the population map propagates an arbitrary matrix perturbation, we use a norm adapted to the teacher's eigenspaces. This eventually makes bounding norms of perturbations possible without assuming that matrices from the empirical trajectory commute with those from the population trajectory.
Let $V=[V_r\;V_0]$ be an orthogonal eigenbasis of $E^\top E$, with $V_r$ spanning $\range(E^\top)$ and $V_0$ spanning $\ker E$. The block indices $s$ and $n$ refer to the signal and null subspaces. For any matrix $\Delta\in\R^{d\times d}$, write
\[
V^\top\Delta V=
\begin{pmatrix}
\Delta_{ss}&\Delta_{sn}\\
\Delta_{ns}&\Delta_{nn}
\end{pmatrix}
\]
and define
\begin{equation}
\|\Delta\|_{\mathrm{blk}}
:=\max\left\{
\frac1{\sqrt r}\|\Delta_{ss}\|_F,
\|\Delta_{sn}\|,
\|\Delta_{ns}\|,
\|\Delta_{nn}\|
\right\}.
\label{eq:block-norm-app}
\end{equation}
When $r=d$, the null subspace is absent and the last three entries are omitted.

\begin{lemma}[Block-norm calculus]\label{lem:block-calculus}
For every matrix $\Delta$,
\begin{equation}
\|\Delta\|_{\mathrm{blk}}
\le\|\Delta\|
\le4\sqrt r\,\|\Delta\|_{\mathrm{blk}}.
\label{eq:block-op-comparison}
\end{equation}
For arbitrary $A,B$,
\[
\|AB\|_{\mathrm{blk}}
\le4\sqrt r\,\|A\|_{\mathrm{blk}}\|B\|_{\mathrm{blk}}.
\]
If $D$ is block diagonal in the teacher basis, then
\[
\|DA\|_{\mathrm{blk}},\|AD\|_{\mathrm{blk}}
\le\|D\|\|A\|_{\mathrm{blk}}.
\]
For a square matrix $A$, one also has
\[
\|A^2\|_{\mathrm{blk}}\le2\|A\|\|A\|_{\mathrm{blk}}.
\]
\end{lemma}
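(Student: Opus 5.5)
All four claims reduce to block computations in the teacher basis. Since $V$ is orthogonal, I replace every matrix $\Delta$ by $V^\top\Delta V$. This changes neither operator norms nor block norms, and $V^\top(AB)V=(V^\top AV)(V^\top BV)$, so I may assume $V=I_d$. The signal block has size $r\times r$ and the null block $(d-r)\times(d-r)$. Two elementary facts are used repeatedly:
\[
\|PQ\|_F\le\|P\|\,\|Q\|_F,\qquad \|PQ\|_F\le\|P\|_F\,\|Q\|,
\]
and any matrix $Z$ with $r$ rows or $r$ columns has rank at most $r$, so $\|Z\|_F\le\sqrt r\,\|Z\|$. In particular $\|\Delta_{sn}\|_F,\|\Delta_{ns}\|_F\le\sqrt r\,\|\Delta\|_{\mathrm{blk}}$. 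When $r=d$ only the $ss$ block exists and every argument below simplifies.

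\textbf{Norm comparison.} Each block is a compression of $\Delta$, so its operator norm is at most $\|\Delta\|$. Since $\Delta_{ss}$ has rank at most $r$,
\[
\tfrac1{\sqrt r}\|\Delta_{ss}\|_F\le\|\Delta_{ss}\|\le\|\Delta\|,
\]
which gives the lower bound. For the upper bound, write $\Delta$ as the sum of its four block-embedded pieces and use the triangle inequality:
\[
\|\Delta\|\le\|\Delta_{ss}\|+\|\Delta_{sn}\|+\|\Delta_{ns}\|+\|\Delta_{nn}\|.
\]
The first term satisfies $\|\Delta_{ss}\|\le\|\Delta_{ss}\|_F\le\sqrt r\,\|\Delta\|_{\mathrm{blk}}$. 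The other three are each at most $\|\Delta\|_{\mathrm{blk}}\le\sqrt r\,\|\Delta\|_{\mathrm{blk}}$. Summing gives $\|\Delta\|\le4\sqrt r\,\|\Delta\|_{\mathrm{blk}}$.

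\textbf{Products.} Let $a=\|A\|_{\mathrm{blk}}$ and $b=\|B\|_{\mathrm{blk}}$, and expand each block of $AB$.

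For the $ss$ block, $(AB)_{ss}=A_{ss}B_{ss}+A_{sn}B_{ns}$. The first term obeys
\[
\|A_{ss}B_{ss}\|_F\le\|A_{ss}\|_F\,\|B_{ss}\|_F\le r\,ab.
\]
The second term has $r$ rows, so
\[
\|A_{sn}B_{ns}\|_F\le\sqrt r\,\|A_{sn}\|\,\|B_{ns}\|\le\sqrt r\,ab.
\]
Dividing by $\sqrt r$ gives $\tfrac1{\sqrt r}\|(AB)_{ss}\|_F\le(\sqrt r+1)ab$.

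For the off-diagonal and $nn$ blocks, for example $(AB)_{sn}=A_{ss}B_{sn}+A_{sn}B_{nn}$, I use $\|A_{ss}\|\le\|A_{ss}\|_F\le\sqrt r\,a$ and bound every other factor by its block norm. Each such block is then at most $(\sqrt r+1)ab$. Since $\sqrt r+1\le2\sqrt r\le4\sqrt r$, the product bound follows.

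\textbf{Block-diagonal factor.} If $D=\diag(D_s,D_n)$, then $(DA)_{ss}=D_sA_{ss}$, $(DA)_{sn}=D_sA_{sn}$, $(DA)_{ns}=D_nA_{ns}$ and $(DA)_{nn}=D_nA_{nn}$. Applying $\|PQ\|_F\le\|P\|\,\|Q\|_F$ to the $ss$ block and submultiplicativity of the operator norm to the others, with $\|D_s\|,\|D_n\|\le\|D\|$, gives $\|DA\|_{\mathrm{blk}}\le\|D\|\,\|A\|_{\mathrm{blk}}$. The case $AD$ is symmetric.

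\textbf{Squares.} This is the only step needing care: in each product, exactly one factor must be measured in block norm and the other bounded by the full operator norm $\|A\|$.

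For the $ss$ block, $(A^2)_{ss}=A_{ss}A_{ss}+A_{sn}A_{ns}$. Bound
\[
\|A_{ss}A_{ss}\|_F\le\|A_{ss}\|\,\|A_{ss}\|_F\le\|A\|\,\sqrt r\,\|A\|_{\mathrm{blk}},
\]
and, since $A_{ns}$ has $r$ columns,
\[
\|A_{sn}A_{ns}\|_F\le\|A_{sn}\|\,\|A_{ns}\|_F\le\|A\|\,\sqrt r\,\|A\|_{\mathrm{blk}}.
\]
After dividing by $\sqrt r$, the $ss$ entry is at most $2\|A\|\,\|A\|_{\mathrm{blk}}$.

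For $(A^2)_{sn}=A_{ss}A_{sn}+A_{sn}A_{nn}$, I keep $A_{sn}$ in block norm and bound $A_{ss}$ and $A_{nn}$ by $\|A\|$. The blocks $(A^2)_{ns}$ and $(A^2)_{nn}$ are handled the same way, always keeping an off-diagonal or $nn$ factor in block norm. Each is then at most $2\|A\|\,\|A\|_{\mathrm{blk}}$, which proves $\|A^2\|_{\mathrm{blk}}\le2\|A\|\,\|A\|_{\mathrm{blk}}$.
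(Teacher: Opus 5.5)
Your proof is correct and follows the paper's argument essentially step for step: the same four-block triangle inequality (giving $\sqrt r+3\le4\sqrt r$), the same block expansions of $AB$ with the rank-$r$ Frobenius bound yielding $(\sqrt r+1)ab\le2\sqrt r\,ab$, and the same device for $A^2$ of measuring one factor in operator norm and the other in block norm. The only differences are cosmetic, for example bounding $\|A_{ss}B_{ss}\|_F$ by $\|A_{ss}\|_F\|B_{ss}\|_F$ rather than $\|A_{ss}\|\,\|B_{ss}\|_F$, and they give identical constants.
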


\begin{proof}
Taking a submatrix does not increase operator norm and $\|\Delta_{ss}\|_F\le\sqrt r\|\Delta\|$, proving the first inequality in \eqref{eq:block-op-comparison}. Decomposing the matrix into its four embedded blocks gives
\[
\|\Delta\|
\le\|\Delta_{ss}\|+\|\Delta_{sn}\|+\|\Delta_{ns}\|+\|\Delta_{nn}\|
\le(\sqrt r+3)\|\Delta\|_{\mathrm{blk}}.
\]
Set $a:=\|A\|_{\mathrm{blk}}$ and $b:=\|B\|_{\mathrm{blk}}$. Then
\[
\|A_{ss}\|,\|B_{ss}\|\le \sqrt r\,a,\sqrt r\,b,
\]
while every other block has operator norm at most $a$ or $b$. For the signal--signal block,
\begin{align*}
\frac1{\sqrt r}\|(AB)_{ss}\|_F
&\le \frac1{\sqrt r}\|A_{ss}B_{ss}\|_F
   +\frac1{\sqrt r}\|A_{sn}B_{ns}\|_F\\
&\le \sqrt r\,ab+ab
\le 2\sqrt r\,ab.
\end{align*}
Here the second term uses $\rank(A_{sn}B_{ns})\le r$, hence $\|A_{sn}B_{ns}\|_F\le\sqrt r\|A_{sn}B_{ns}\|\le\sqrt r\,ab$.
For the signal--null block,
\[
\|(AB)_{sn}\|
\le \|A_{ss}B_{sn}\|+\|A_{sn}B_{nn}\|
\le(\sqrt r+1)ab
\le2\sqrt r\,ab.
\]
The null--signal block is identical, and
\[
\|(AB)_{nn}\|
\le \|A_{ns}B_{sn}\|+\|A_{nn}B_{nn}\|
\le2ab.
\]
Taking the maximum proves the product bound. If $D$ is block diagonal, each block of $DA$ or $AD$ contains only one block of $A$ multiplied by a block of $D$, giving the asserted multiplier bound.

For $A^2$, bound one factor in each block product by $\|A\|$. For example,
\[
\frac1{\sqrt r}\|(A^2)_{ss}\|_F
\le\|A_{ss}\|\frac{\|A_{ss}\|_F}{\sqrt r}
+\|A_{sn}\|\|A_{ns}\|
\le2\|A\|\,a.
\]
Each of the other three blocks has the same bound, proving the last inequality.
\end{proof}

Lemma~\ref{lem:map-perturbation-H} and $\|\cdot\|_{\mathrm{blk}}\le\|\cdot\|$ imply
\begin{equation}
\delta:=\sup_{H\in\mathcal K}
\|T_n(H)-T_\infty(H)\|_{\mathrm{blk}}
\le C\Delta_n.
\label{eq:block-map-delta}
\end{equation}
The supremum is over the fixed set $\mathcal K$, so this estimate applies in particular at the data-dependent input $H_t^{(n)}$.

\subsection{Elementary derivative of the population map}\label{subsec:tracking-derivative}
We now control the response of the population map to perturbations of its input. Although each population iterate is diagonal in the teacher basis, perturbations arising in the finite sample setting may not be diagonal in this basis.
Fix $H\succ0$ that is diagonal in the teacher basis. In this basis write
\[
H=\diag(h_1,\ldots,h_d),
\quad
S:=s(H)=\diag(s_1,\ldots,s_d),
\quad
s_i:=\frac{h_i}{h_i+\lambda},
\]
\[
\begin{aligned}
Q&:=SE^\top E S=\diag(q_1,\ldots,q_d),
&\quad q_i&:=\mu_i s_i^2,\\
T&:=T_\infty(H)=\diag(t_1,\ldots,t_d),
&\quad t_i&:=\sqrt{q_i+\epsilon}.
\end{aligned}
\]
The derivatives of $s(H)$ and the positive matrix square root are
instances of the Dalecki\u{i}--Kre\u{i}n formula for differentiating
functions of symmetric matrices~\citep{daletskii1965integration}.
Combining these formulas with the product and chain rules gives the
entrywise expression below. However, for sake of self-containment we include a direct derivation below.

\begin{lemma}[Entrywise population derivative]\label{lem:H-Jacobian}
For every symmetric perturbation $\Delta$,
\begin{equation}
V^\top DT_\infty(H)[\Delta]V
=K(H)\odot(V^\top\Delta V),
\label{eq:H-Jacobian-Schur}
\end{equation}
where
\begin{equation}
K_{ij}(H)
=\frac{\lambda(\mu_i s_i+\mu_j s_j)}
{(h_i+\lambda)(h_j+\lambda)(t_i+t_j)}.
\label{eq:H-Jacobian-coeff}
\end{equation}
In particular, the null--null block is annihilated. Along the population trajectory, the signal--null coefficients are constant across null columns for each fixed signal row.
\end{lemma}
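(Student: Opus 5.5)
The plan is a direct computation in the teacher basis $V$, using the chain rule through the three layers of $T_\infty$: the ridge factor $s(H)$, the quadratic form $Q(H)=s(H)E^\top Es(H)$, and the positive square root. Since $H$ is diagonal in the teacher basis, $E^\top E=V\diag(\mu_i)V^\top$, $S=s(H)$ and $T=T_\infty(H)$ are all diagonal in this basis. Conjugating by $V$ reduces everything to diagonal matrices $\diag(h_i),\diag(\mu_i),\diag(s_i),\diag(t_i)$ acting on the rotated perturbation $\widetilde\Delta:=V^\top\Delta V$. Each derivative layer then becomes a Schur multiplier.

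\textbf{Step 1 (ridge factor).} I would write $s(H)=I_d-\lambda(H+\lambda I_d)^{-1}$. Since $D[(H+\lambda I)^{-1}][\Delta]=-(H+\lambda I)^{-1}\Delta(H+\lambda I)^{-1}$,
\[
Ds(H)[\Delta]=\lambda(H+\lambda I_d)^{-1}\Delta(H+\lambda I_d)^{-1}.
\]
In the teacher basis its $(i,j)$ entry is $\lambda\widetilde\Delta_{ij}/((h_i+\lambda)(h_j+\lambda))$.

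\textbf{Step 2 (quadratic form).} By the product rule, $DQ[\Delta]=Ds[\Delta]\,E^\top E\,S+S\,E^\top E\,Ds[\Delta]$. Because $E^\top ES$ and $SE^\top E$ are diagonal with entries $\mu_is_i$, the entries in the teacher basis are
\[
\frac{\lambda(\mu_js_j+\mu_is_i)}{(h_i+\lambda)(h_j+\lambda)}\,\widetilde\Delta_{ij}.
\]

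\textbf{Step 3 (square root).} I would differentiate $T^2=Q+\epsilon I_d$ to obtain the Sylvester equation $T\,DT[\Delta]+DT[\Delta]\,T=DQ[\Delta]$. Since $T\succeq\sqrt\epsilon I_d\succ0$, this equation has a unique solution, by the same integral representation used in Lemma~\ref{lem:square-root-stability}. With $T$ diagonal, the equation decouples entrywise into $(t_i+t_j)(DT)_{ij}=(DQ)_{ij}$. Dividing by $t_i+t_j$ gives \eqref{eq:H-Jacobian-Schur} with the coefficient \eqref{eq:H-Jacobian-coeff}.

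The only point needing care is the differentiability of the matrix square root. This holds because $Q+\epsilon I_d$ stays in the open positive-definite cone, where the square root is smooth. The derivative is therefore characterized by the Sylvester equation above, which is the expected main (mild) obstacle.

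\textbf{Consequences.} For the null--null block, $i,j>r$ gives $\mu_i=\mu_j=0$, so $K_{ij}=0$ and that block is annihilated. For the last claim, Proposition~\ref{prop:population-dynamics} gives $H_t^{(\infty)}=\bar B_t^2=V\diag(b_t(\mu_i)^2)V^\top$. Every null index therefore has the same value $h_j=b_t(0)^2$, together with $\mu_j=0$ and $t_j=\sqrt\epsilon$. Hence, for a signal row $i$ and any null column $j$,
\[
K_{ij}=\frac{\lambda\mu_is_i}{(h_i+\lambda)(b_t(0)^2+\lambda)(t_i+\sqrt\epsilon)},
\]
which does not depend on $j$.
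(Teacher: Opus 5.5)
Your proposal is correct and follows the paper's proof essentially step for step. You differentiate $s(H)=I_d-\lambda(H+\lambda I_d)^{-1}$ through the resolvent, apply the product rule to $Q(H)=s(H)E^\top Es(H)$ using that $E^\top ES$ and $SE^\top E$ are diagonal, and solve the diagonal Sylvester equation $TZ+ZT=DQ(H)[\Delta]$ entrywise; the block claims then follow from $\mu_j=0$ and the common null eigenvalue $b_t(0)^2$ along the population path. Your added remarks on smoothness of the square root on the positive-definite cone, and the explicit signal--null coefficient, spell out what the paper leaves implicit.
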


\begin{proof}
The spectral function $s$ takes the form
\(
s(H)=I_d-\lambda(H+\lambda I_d)^{-1}.
\)
Differentiating the inverse gives
\[
Ds(H)[\Delta]
=\lambda(H+\lambda I_d)^{-1}\Delta(H+\lambda I_d)^{-1}.
\]
Thus, in the teacher basis,
\[
(Ds(H)[\Delta])_{ij}
=\frac{\lambda}{(h_i+\lambda)(h_j+\lambda)}\Delta_{ij}.
\]
The product rule for $Q(H)=s(H)E^\top E s(H)$ gives
\[
(DQ(H)[\Delta])_{ij}
=\frac{\lambda(\mu_i s_i+\mu_j s_j)}
{(h_i+\lambda)(h_j+\lambda)}\Delta_{ij}.
\]
Finally, differentiate the defining identity $T(H)^2=Q(H)+\epsilon I_d$. If $Z=DT(H)[\Delta]$, then
\[
TZ+ZT=DQ(H)[\Delta].
\]
Because $T$ is diagonal, $Z_{ij}=(DQ[\Delta])_{ij}/(t_i+t_j)$, which proves the formula. The block structure follows immediately from $\mu_i=0$ on the null coordinates and from equality of the null eigenvalues along the population trajectory.
\end{proof}

For a population iterate or the population fixed point, the null eigenvalues of $H$ are all equal. At such a point define
\[
\kappa(H)
:=\max\left\{
\max_{1\le i,j\le r}K_{ij}(H),
\max_{1\le i\le r}K_{i0}(H)
\right\},
\]
where $0$ denotes any null coordinate; when $r=d$, the signal--null term is omitted. The signal--null coefficients are constant across null columns, and the null--null block vanishes. Consequently, at these population points, \eqref{eq:H-Jacobian-Schur} gives
\begin{equation}
\|DT_\infty(H)[\Delta]\|_{\mathrm{blk}}
\le \kappa(H)\|\Delta\|_{\mathrm{blk}}
\label{eq:block-derivative-amplification}
\end{equation}
for every symmetric $\Delta$. Indeed, the signal--signal block is controlled entrywise in normalized Frobenius norm, while the signal--null block is left-multiplied by a diagonal matrix whose operator norm is at most $\kappa(H)$.

For later use, define
\[
h_t(\mu):=b_t(\mu)^2,
\qquad
h_\star(\mu):=b_\star(\mu)^2.
\]
Thus
\[
H_t^{(\infty)}=h_t(E^\top E),
\qquad
H_\star=h_\star(E^\top E).
\]

\begin{lemma}[Strict fixed-point contraction]\label{lem:H-fixed-contraction}
Let $H_\star=h_\star(E^\top E )$ be the population fixed point. Then
\[
0\le\kappa(H_\star)\le\bar\kappa_\star
:=\frac{\lambda}{\lambda+\sqrt\epsilon}<1.
\]
In particular, the same contraction bound holds for every teacher spectrum.
\end{lemma}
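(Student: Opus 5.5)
The plan is to evaluate the coefficients \eqref{eq:H-Jacobian-coeff} of Lemma~\ref{lem:H-Jacobian} exactly at $H=H_\star$, and to use the fixed-point equation to eliminate $\mu_i$ and $s_i$ in favour of $h_i$ alone. At the fixed point we have $H_\star=T_\infty(H_\star)$, so in the teacher basis $h_i=t_i=\sqrt{\mu_i s_i^2+\epsilon}$ with $s_i=h_i/(h_i+\lambda)$. Here $h_i=h_\star(\mu_i)=b_\star(\mu_i)^2$, and this equation is just the fixed-point form of \eqref{eq:scalar-recursion-award} supplied by Corollary~\ref{cor:population-convergence}. Two consequences follow. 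First, $h_i\ge\sqrt\epsilon$ for every $i$, with equality $h_0=\sqrt\epsilon$ on the null coordinates. Second, $\mu_i s_i^2=h_i^2-\epsilon$. Nonnegativity of every $K_{ij}(H_\star)$ is immediate, since all quantities involved are nonnegative. Hence $\kappa(H_\star)\ge0$.

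The key step is a per-index bound. Since $\mu_i s_i=(h_i^2-\epsilon)/s_i=(h_i^2-\epsilon)(h_i+\lambda)/h_i$, we get
\[
\frac{\mu_i s_i}{h_i+\lambda}=\frac{h_i^2-\epsilon}{h_i}\le h_i .
\]
Substituting this into \eqref{eq:H-Jacobian-coeff}, with $t_i+t_j=h_i+h_j$, gives
\[
K_{ij}(H_\star)\le\frac{\lambda}{h_i+h_j}\left(\frac{h_i}{h_j+\lambda}+\frac{h_j}{h_i+\lambda}\right)
\le\frac{\lambda}{\min\{h_i,h_j\}+\lambda}\cdot\frac{h_i+h_j}{h_i+h_j}
\le\frac{\lambda}{\lambda+\sqrt\epsilon}.
\]
The last step uses $h_i,h_j\ge\sqrt\epsilon$. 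This handles every signal--signal pair $1\le i,j\le r$.

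For the signal--null coefficient $K_{i0}$ we have $\mu_0=0$ and $h_0=\sqrt\epsilon$. The same bound therefore specialises to
\[
K_{i0}\le\frac{\lambda h_i}{(\sqrt\epsilon+\lambda)(h_i+\sqrt\epsilon)}\le\frac{\lambda}{\lambda+\sqrt\epsilon}.
\]
Taking the maximum over the two families yields $\kappa(H_\star)\le\bar\kappa_\star<1$, and the bound is visibly independent of the spectrum $\{\mu_i\}$. When $r=d$ the signal--null term is simply absent.

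The only real obstacle is spotting the substitution $\mu_i s_i^2=h_i^2-\epsilon$. A direct bound of $\mu_i s_i$ in terms of $M_E$ would produce spectrum-dependent constants and could fail to give a bound below one. Once the fixed-point identity is used, the cancellation against $(h_i+\lambda)$ is automatic. One should also check that the fixed-point relation $h_i=t_i$ really holds at the level of $h=b^2$. This follows from $b_\star^4=\epsilon+\mu\bigl(b_\star^2/(b_\star^2+\lambda)\bigr)^2$.
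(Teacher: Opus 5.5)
Your proof is correct and follows essentially the same route as the paper: both use the fixed-point identities $t_i=h_i$ and $\mu_i s_i^2=h_i^2-\epsilon$ to rewrite $\mu_i s_i/(h_i+\lambda)=h_i-\epsilon/h_i\le h_i$, and then bound the factors $\lambda/(h_j+\lambda)$ by $\lambda/(\lambda+\sqrt\epsilon)$ using $h_j\ge\sqrt\epsilon$. The only cosmetic difference is that the paper records the signal--null coefficient exactly as $\frac{\lambda}{\sqrt\epsilon+\lambda}\bigl(1-\sqrt\epsilon/h_i\bigr)$, whereas you use an upper bound for it, which is enough.
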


\begin{proof}
In the null coordinate, $h_{\mathrm{null}}=\sqrt\epsilon$. At a fixed point, $t_i=h_i\ge \sqrt\epsilon$ and
\[
\mu_i s_i^2=h_i^2-\epsilon,
\qquad
s_i=\frac{h_i}{h_i+\lambda}.
\]
Hence, for every coordinate, including a null coordinate,
\begin{equation}
\mu_i s_i
=\frac{(h_i^2-\epsilon)(h_i+\lambda)}{h_i}.
\label{eq:fixed-point-mu-s}
\end{equation}
Substitution into \eqref{eq:H-Jacobian-coeff} gives
\[
K_{ij}(H_\star)
=\frac{1}{h_i+h_j}
\left[
\frac{\lambda}{h_j+\lambda}
\left(h_i-\frac{\epsilon}{h_i}\right)
+
\frac{\lambda}{h_i+\lambda}
\left(h_j-\frac{\epsilon}{h_j}\right)
\right].
\]
Both terms are nonnegative. Since $h_i,h_j\ge \sqrt{\epsilon}$,
\[
\frac{\lambda}{h_j+\lambda}\le\bar\kappa_\star,
\qquad
0\le h_i-\frac{\epsilon}{h_i}\le h_i,
\]
and the same inequalities hold with $i,j$ interchanged. It follows that
\[
0\le K_{ij}(H_\star)
\le \bar\kappa_\star\frac{h_i+h_j}{h_i+h_j}
=\bar\kappa_\star<1.
\]
For completeness, a signal--null coefficient reduces to
\[
K_{i0}(H_\star)
=\frac{\lambda}{\sqrt{\epsilon}+\lambda}
\left(1-\frac{\sqrt{\epsilon}}{h_i}\right),
\]
and null--null coefficients vanish. Thus the bound controls every block entering $\kappa(H_\star)$.
\end{proof}

\begin{corollary}[Eventual derivative contraction]\label{cor:H-eventual-contraction}
Fix $\rho\in(\bar\kappa_\star,1)$. There is a finite integer $T_\rho=T_\rho(M_E,\lambda,\epsilon,\rho)\ge1$ such that
\[
\|DT_\infty(H_t^{(\infty)})[\Delta]\|_{\mathrm{blk}}
\le\rho\|\Delta\|_{\mathrm{blk}},
\qquad t\ge T_\rho.
\]
Before $T_\rho$, the derivative is bounded by a finite constant depending only on $(M_E,\lambda,\epsilon)$.
\end{corollary}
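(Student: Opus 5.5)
We will prove the corollary by continuity of the coefficients $K_{ij}$ in \eqref{eq:H-Jacobian-coeff}, combined with \emph{uniform} convergence of the scalar trajectories $h_t(\mu)\to h_\star(\mu)$ over the compact range $\mu\in[0,M_E^2]$. The bound \eqref{eq:block-derivative-amplification} reduces the claim to showing $\kappa(H_t^{(\infty)})\le\rho$ for $t\ge T_\rho$. Since $\kappa$ is a maximum over signal--signal pairs $(i,j)$ and signal--null pairs $(i,0)$, it suffices to bound a single two-variable function. Set
\[
k_t(\mu,\nu):=\frac{\lambda(\mu s_t(\mu)+\nu s_t(\nu))}{(h_t(\mu)+\lambda)(h_t(\nu)+\lambda)(t_t(\mu)+t_t(\nu))},
\]
with $s_t=h_t/(h_t+\lambda)$ and $t_t=\sqrt{\mu s_t^2+\epsilon}=h_{t+1}$. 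We then need $\sup_{\mu,\nu\in[0,M_E^2]}k_t(\mu,\nu)\le\rho$ for large $t$. This bound is independent of $r$, $d$ and the particular spectrum, so $T_\rho$ depends only on $(M_E,\lambda,\epsilon,\rho)$. The null coordinate corresponds to $\nu=0$.

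\textbf{Step 1: uniform convergence of the scalar iterates.} We show $\sup_{\mu\in[0,M_E^2]}|h_t(\mu)-h_\star(\mu)|\to0$. Each $h_t$ is continuous in $\mu$, and Corollary~\ref{cor:population-convergence} gives pointwise convergence. To upgrade this to uniform convergence, use the scalar map $\phi_\mu(h)=\sqrt{\mu (h/(h+\lambda))^2+\epsilon}$, which is increasing in $h$. Starting from $h_0=1$, the iterates are therefore monotone in $t$: they increase if $\phi_\mu(1)\ge1$ and decrease otherwise. They are also continuous in $\mu$. When the monotonicity direction is fixed, Dini's theorem gives uniform convergence on compacts. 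If the direction changes with $\mu$, we split $[0,M_E^2]$ into the two closed pieces where $\phi_\mu(1)\ge1$ or $\phi_\mu(1)\le1$ and apply Dini on each. For this we need $h_\star(\mu)$ to be continuous. This holds because the fixed point is unique (Corollary~\ref{cor:population-convergence}) and $h_\star$ is the limit of monotone continuous iterates on a compact set, which gives the usual semicontinuity-on-both-sides argument.

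An alternative to Dini is a compactness argument: a uniform local contraction near $h_\star$ holds via Lemma~\ref{lem:H-fixed-contraction}, and finite-time entry into that neighborhood is uniform by continuity in $\mu$.

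\textbf{Step 2: continuity and passage to the limit.} By Lemma~\ref{lem:metric-trapping}, all $h_t(\mu)$ lie in $[m,M]$. On $[m,M]^2\times[0,M_E^2]^2$ the map $(h,h',\mu,\nu)\mapsto$ (formula for $K$ with $t$-entries computed from $h,h'$) is uniformly continuous, because denominators are bounded below by $\lambda^2\cdot2\sqrt\epsilon$. Step 1 therefore gives $\sup_{\mu,\nu}|k_t(\mu,\nu)-k_\star(\mu,\nu)|\to0$. Lemma~\ref{lem:H-fixed-contraction}, applied pointwise to the coefficient formula (its proof is a pointwise computation), gives $k_\star\le\bar\kappa_\star<\rho$. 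Choose $T_\rho$ so that the uniform error is below $\rho-\bar\kappa_\star$. The claim about the pre-burn-in derivative follows from the same formula: $K_{ij}\le \lambda\cdot2M_E^2/(\lambda^2\cdot2\sqrt\epsilon)$, with the block bound \eqref{eq:block-derivative-amplification} still valid at population iterates.

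\textbf{Main obstacle.} The delicate step is making Step 1 uniform in $\mu$ with a continuous limit $h_\star$, since near $\mu=0$ and at the monotonicity switch the convergence rate could degenerate. If the Dini route becomes messy, we will instead use the contraction at the fixed point directly. Note that $\phi_\mu'(h_\star)=\mu\cdot2\lambda s_\star^2/(h_\star(h_\star+\lambda)^2)$, which equals the diagonal coefficient $K_{ii}(H_\star)\le\bar\kappa_\star$. By continuity, $\phi_\mu$ is then a $\rho'$-contraction on a fixed-width interval around $h_\star(\mu)$, uniformly in $\mu$. Uniform entry into that interval follows from compactness together with pointwise convergence and continuity in $\mu$ of the finite iterates.
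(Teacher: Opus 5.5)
Your proposal is correct and follows essentially the paper's argument: monotone scalar iterates plus Dini for uniform convergence on $[0,M_E^2]$, continuity of $h_\star$ from uniqueness of the fixed point, uniform continuity of the coefficient formula on a compact set, and the fixed-point bound $\bar\kappa_\star$, with the pre-burn-in bound read off from \eqref{eq:H-Jacobian-coeff}. The paper avoids your split of $[0,M_E^2]$ by applying Dini directly to $|h_t(\mu)-h_\star(\mu)|$, which is nonincreasing in either monotone regime; separately, your fallback formula for $\phi_\mu'(h_\star)$ should read $\lambda\mu s_\star/(h_\star(h_\star+\lambda)^2)=\mu\lambda/(h_\star+\lambda)^3$, although the main route does not use it.
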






\begin{proof}
Proposition~\ref{prop:population-dynamics} gives
\[
H_t^{(\infty)}=h_t(E^\top E ),\qquad
h_0(\mu)=1,\qquad
h_{t+1}(\mu)=\phi_\mu(h_t(\mu)),
\]
where
\[
\phi_\mu(h):=\sqrt{\epsilon+\mu\left(\frac{h}{h+\lambda}\right)^2}.
\]
We need the convergence of these scalar trajectories to be uniform over $0\le\mu\le M_E^2$.

First, $h_\star(\mu)$ is continuous on this interval. Indeed, if $\mu_k\to\mu$, the fixed points $h_\star(\mu_k)$ lie in the common compact interval $[\sqrt\epsilon,\sqrt{M_E^2+\epsilon}]$. By continuity of $\phi_\mu(h)$, every subsequential limit solves the fixed-point equation at $\mu$. Uniqueness of the positive fixed point identifies this limit as $h_\star(\mu)$.

For every fixed $t$, the function $h_t(\mu)$ is also continuous, since it is obtained by finitely many iterations of the jointly continuous map $(\mu,h)\mapsto\phi_\mu(h)$. For each $\mu$, the map $\phi_\mu$ is nondecreasing.
Hence, if $h_1(\mu)\ge h_0(\mu)$, then
$h_{t+1}(\mu)\ge h_t(\mu)$ for every $t$, and similarly
if $h_1(\mu)\le h_0(\mu)$. Together with convergence to
$h_\star(\mu)$, the scalar trajectory therefore approaches its
fixed point monotonically. Hence
\[
g_t(\mu):=|h_t(\mu)-h_\star(\mu)|
\]
is a nonincreasing sequence of continuous functions converging pointwise to zero on $[0,M_E^2]$. To see that this convergence is uniform, fix $\tau>0$. The open sets $\{\mu:g_t(\mu)<\tau\}$ increase with $t$ and cover $[0,M_E^2]$. Compactness gives a finite subcover; since the sets are increasing, a single one covers the interval. Consequently
\[
\sup_{0\le\mu\le M_E^2}g_t(\mu)\longrightarrow0.
\]
The coefficient \eqref{eq:H-Jacobian-coeff} depends continuously on $\mu,\nu$ and on $h_t(\mu),h_t(\nu),h_{t+1}(\mu),h_{t+1}(\nu)$. All these quantities lie in a fixed compact set, and the denominators are bounded away from zero. Thus the time-$t$ coefficients converge uniformly on $[0,M_E^2]^2$ to their fixed-point values. Lemma~\ref{lem:H-fixed-contraction} bounds those values by $\bar\kappa_\star<\rho$, so a time $T_\rho\ge1$, depending only on $(M_E,\lambda,\epsilon,\rho)$, makes every coefficient at most $\rho$. Equation~\eqref{eq:block-derivative-amplification} proves the claim.

Finally, the trapping bounds and $0\le s_i\le1$ give, for every $t\ge0$,
\[
K_{ij}(H_t^{(\infty)})
\le L:=\frac{\lambda M_E^2}{(m+\lambda)^2\sqrt\epsilon}.
\]
This supplies the required uniform derivative bound before the burn-in.
\end{proof}

\subsection{Quadratic remainder}\label{subsec:tracking-remainder}

The derivative controls the first-order response, but the tracking argument also needs a uniform bound on the nonlinear remainder. We first treat the square root at a block-diagonal base point, then apply that estimate to the composition $T_\infty(H)=(s(H)E^\top E s(H)+\epsilon I_d)^{1/2}$. The perturbation itself need not be block diagonal.

\begin{lemma}[Block Taylor bound for the square root]\label{lem:block-square-root-Taylor}
Let $X\succeq\epsilon I_d$ be block diagonal in the teacher basis. There are constants $c,C>0$, depending only on $\epsilon$, such that if $U=U^\top$, $X+U\succeq\epsilon I_d$, and
\[
\|U\|_{\mathrm{blk}}\le\frac{c}{\sqrt r},
\]
then
\begin{align}
\|(X+U)^{1/2}-X^{1/2}\|_{\mathrm{blk}}
&\le C\|U\|_{\mathrm{blk}},\label{eq:block-sqrt-Lip}\\
\|(X+U)^{1/2}-X^{1/2}-D\sqrt{X}[U]\|_{\mathrm{blk}}
&\le C\sqrt r\,\|U\|_{\mathrm{blk}}^2.\label{eq:block-sqrt-Taylor}
\end{align}
Here $D\sqrt X[U]$ is the unique solution $Z$ of
\[
X^{1/2}Z+ZX^{1/2}=U.
\]
For every symmetric $V$, the derivative also satisfies
\[
\|D\sqrt X[V]\|_{\mathrm{blk}}
\le\frac1{2\sqrt\epsilon}\|V\|_{\mathrm{blk}}.
\]
\end{lemma}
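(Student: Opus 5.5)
We plan to work throughout with the integral (Sylvester) representations of the square root and its derivative. Write $Y:=X^{1/2}\succeq\sqrt\epsilon I_d$, which is block diagonal in the teacher basis, and $W:=(X+U)^{1/2}\succeq\sqrt\epsilon I_d$. The derivative bound is the easiest piece and we do it first. The solution of $YZ+ZY=V$ is $Z=\int_0^\infty e^{-tY}Ve^{-tY}\,dt$. Since $e^{-tY}$ is block diagonal with $\|e^{-tY}\|\le e^{-t\sqrt\epsilon}$, the multiplier rule of Lemma~\ref{lem:block-calculus} gives $\|e^{-tY}Ve^{-tY}\|_{\mathrm{blk}}\le e^{-2t\sqrt\epsilon}\|V\|_{\mathrm{blk}}$. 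The block norm is a genuine norm, a maximum of norms of linear images, so it passes inside the integral. Integrating yields the bound $\frac1{2\sqrt\epsilon}\|V\|_{\mathrm{blk}}$.

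\textbf{Lipschitz bound \eqref{eq:block-sqrt-Lip}.} Set $D:=W-Y$. It satisfies $YD+DW=U$, so $D=\int_0^\infty e^{-tY}Ue^{-tW}\,dt$. The right factor $e^{-tW}$ is not block diagonal, so we will not use the multiplier rule on it directly. Instead we rewrite the equation as $YD+DY=U-D^2$, using $W=Y+D$. This gives the exact identity
\[
D=D\sqrt X[U]-D\sqrt X[D^2].
\]
By the derivative bound and the last inequality of Lemma~\ref{lem:block-calculus},
\[
\|D\|_{\mathrm{blk}}\le\frac1{2\sqrt\epsilon}\bigl(\|U\|_{\mathrm{blk}}+2\|D\|\,\|D\|_{\mathrm{blk}}\bigr).
\]
Lemma~\ref{lem:square-root-stability} together with \eqref{eq:block-op-comparison} gives
\[
\|D\|\le\frac{1}{2\sqrt\epsilon}\|U\|\le\frac{2\sqrt r}{\sqrt\epsilon}\|U\|_{\mathrm{blk}}\le\frac{2c}{\sqrt\epsilon}.
\]
Choosing $c$ small, say $2c/\sqrt\epsilon\le\sqrt\epsilon/2$, makes the coefficient $\|D\|/\sqrt\epsilon$ at most $1/2$. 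We then absorb that term and obtain $\|D\|_{\mathrm{blk}}\le\frac1{\sqrt\epsilon}\|U\|_{\mathrm{blk}}$.

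\textbf{Taylor bound \eqref{eq:block-sqrt-Taylor}.} By the same identity, the remainder is exactly $-D\sqrt X[D^2]$, so its block norm is at most $\frac1{2\sqrt\epsilon}\|D^2\|_{\mathrm{blk}}$. To bound $\|D^2\|_{\mathrm{blk}}$ we would use $2\|D\|\,\|D\|_{\mathrm{blk}}$, and then $\|D\|\le C\sqrt r\|U\|_{\mathrm{blk}}$ together with $\|D\|_{\mathrm{blk}}\le C\|U\|_{\mathrm{blk}}$. Alternatively, the product rule $\|D^2\|_{\mathrm{blk}}\le4\sqrt r\|D\|_{\mathrm{blk}}^2$ gives the same outcome. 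Either way the remainder is at most $C\sqrt r\|U\|_{\mathrm{blk}}^2$.

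The main obstacle is the Lipschitz step. A naive Sylvester argument leaves the non-block-diagonal factor $e^{-tW}$, on which only the lossy operator-norm product rule is available, and that rule costs a $\sqrt r$ factor. The fixed-point rewriting around the block-diagonal $Y$ avoids this. The smallness hypothesis $\|U\|_{\mathrm{blk}}\le c/\sqrt r$ is exactly what makes the operator-norm size of $D$ small enough to absorb the quadratic term. Since $D^2$ is symmetric, the derivative bound applies to it with no extra case analysis.
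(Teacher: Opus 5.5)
Your proposal is correct and matches the paper's proof essentially step for step. Both arguments use the same Sylvester-integral bound for $D\sqrt X$ via the block-diagonal multiplier rule, and the same exact identity $D=D\sqrt X[U]-D\sqrt X[D^2]$. The quadratic term is absorbed the same way, using Lemma~\ref{lem:square-root-stability} and the choice $c\le\epsilon/4$. The remainder $D\sqrt X[D^2]$ is bounded the same way too: the paper uses the product rule $\|D^2\|_{\mathrm{blk}}\le 4\sqrt r\,\|D\|_{\mathrm{blk}}^2$, which is one of your two options.
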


\begin{proof}
Let $P:=X^{1/2}$. Since $P\succ0$, the inverse Sylvester operator
has the integral representation
\citep[Theorem~9.2]{bhatia1997how}
\[
\mathcal S_X(V):=\int_0^\infty e^{-tP}Ve^{-tP}\,dt.
\]
Thus $\mathcal S_X(V)$ is the unique solution $Z$ of
$PZ+ZP=V$. Differentiating the identity $(X^{1/2})^2=X$
shows that $\mathcal S_X(V)=D\sqrt X[V]$. Because $P$ is block diagonal and $P\succeq\sqrt\epsilon I_d$, the block-diagonal multiplier bound gives
\[
\|\mathcal S_X(V)\|_{\mathrm{blk}}
\le\int_0^\infty e^{-2t\sqrt\epsilon}\,dt\,
\|V\|_{\mathrm{blk}}
=\frac1{2\sqrt\epsilon}\|V\|_{\mathrm{blk}}.
\]

Set $D:=(X+U)^{1/2}-X^{1/2}$ and $u:=\|U\|_{\mathrm{blk}}$. Expanding $(P+D)^2=X+U$ gives the exact identities
\[
PD+DP=U-D^2,
\qquad
D=\mathcal S_X(U)-\mathcal S_X(D^2).
\]
Thus the nonlinear remainder is explicitly quadratic in the square-root increment $D$. To control that increment, Lemma~\ref{lem:square-root-stability} first gives
\[
\|D\|\le\frac1{2\sqrt\epsilon}\|U\|
\le\frac{2\sqrt r}{\sqrt\epsilon}\,u.
\]
Using the square bound in Lemma~\ref{lem:block-calculus}, we obtain
\[
\begin{aligned}
\|D\|_{\mathrm{blk}}
&\le\frac{u}{2\sqrt\epsilon}
+\frac1{2\sqrt\epsilon}\|D^2\|_{\mathrm{blk}}\\
&\le\frac{u}{2\sqrt\epsilon}
+\frac{\|D\|}{\sqrt\epsilon}\|D\|_{\mathrm{blk}}\\
&\le\frac{u}{2\sqrt\epsilon}
+\frac{2\sqrt r\,u}{\epsilon}\|D\|_{\mathrm{blk}}.
\end{aligned}
\]
Choose $c\le\epsilon/4$. Then $\sqrt r\,u\le c$ makes the last coefficient at most $1/2$, and moving that term to the left yields
\[
\|D\|_{\mathrm{blk}}\le\frac{u}{\sqrt\epsilon}.
\]
This proves \eqref{eq:block-sqrt-Lip}. Returning to the exact remainder identity and using the product bound gives
\[
\begin{aligned}
\|D-D\sqrt X[U]\|_{\mathrm{blk}}
&=\|\mathcal S_X(D^2)\|_{\mathrm{blk}}\\
&\le\frac{4\sqrt r}{2\sqrt\epsilon}\|D\|_{\mathrm{blk}}^2\\
&\le\frac{2\sqrt r}{\epsilon^{3/2}}\,u^2.
\end{aligned}
\]
This proves \eqref{eq:block-sqrt-Taylor}.
\end{proof}

\begin{lemma}[Local quadratic remainder]\label{lem:H-quadratic-remainder}
There exist $c_0,C_0>0$, depending only on $(M_E,\lambda,\epsilon)$, such that for every population iterate $H_t^{(\infty)}$ and every symmetric $\Delta$ satisfying
\[
\|\Delta\|_{\mathrm{blk}}\le\frac{c_0}{\sqrt r},
\]
one has
\begin{equation}
\|T_\infty(H_t^{(\infty)}+\Delta)
-T_\infty(H_t^{(\infty)})
-DT_\infty(H_t^{(\infty)})[\Delta]\|_{\mathrm{blk}}
\le C_0\sqrt r\,\|\Delta\|_{\mathrm{blk}}^2.
\label{eq:H-quadratic-remainder}
\end{equation}
\end{lemma}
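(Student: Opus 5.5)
We write $T_\infty=\sqrt{\cdot}\circ G$, where $G(H):=s(H)E^\top E\,s(H)+\epsilon I_d$, and split the remainder into an inner part (from $G$) and an outer part (from the square root, handled by Lemma~\ref{lem:block-square-root-Taylor}). Set $H:=H_t^{(\infty)}$, which is block diagonal in the teacher basis and lies in $\mathcal K$ by Lemma~\ref{lem:metric-trapping}. Let $X:=G(H)\succeq\epsilon I_d$, which is also block diagonal, and let $U:=G(H+\Delta)-G(H)$.

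First we need $H+\Delta\succ0$. Since $\|\Delta\|\le4\sqrt r\|\Delta\|_{\mathrm{blk}}\le4c_0$, choosing $c_0\le m/8$ gives $H+\Delta\succeq (m/2)I_d$. Then $X+U=G(H+\Delta)\succeq\epsilon I_d$ automatically.

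\textbf{Inner expansion.} Write $R:=(H+\lambda I)^{-1}$, so that $s(H)=I-\lambda R$. The Neumann expansion gives the exact identity
\[
(H+\Delta+\lambda I)^{-1}=R-R\Delta R+R\Delta R\Delta(H+\Delta+\lambda I)^{-1}.
\]
Hence $s(H+\Delta)-s(H)=\lambda R\Delta R-\lambda R\Delta R\Delta\widetilde R$, where $\widetilde R:=(H+\Delta+\lambda I)^{-1}$ has operator norm at most $1/\lambda$.

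The factor $\widetilde R$ is not block diagonal, so we control products with the two rules of Lemma~\ref{lem:block-calculus}. Block-diagonal factors ($R$, $s(H)$, $E^\top E$) are absorbed at the cost of their operator norm. For the general factor we need the bound $\|A\widetilde R\|_{\mathrm{blk}}\le C\|A\|_{\mathrm{blk}}$, or else we write $\widetilde R=R+(\widetilde R-R)$ and use the product bound on the difference, which has operator norm $O(\|\Delta\|)$. The claimed multiplier bound holds blockwise except for the ss block, where we use $\|A_{ss}\widetilde R_{ss}\|_F\le\|A_{ss}\|_F\|\widetilde R\|$ and $\|A_{sn}\widetilde R_{ns}\|_F\le\sqrt r\|A_{sn}\|\|\widetilde R\|$. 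With this, $\|s(H+\Delta)-s(H)-Ds(H)[\Delta]\|_{\mathrm{blk}}\le C\sqrt r\|\Delta\|_{\mathrm{blk}}^2$, and also $\|s(H+\Delta)-s(H)\|_{\mathrm{blk}}\le C\|\Delta\|_{\mathrm{blk}}$.

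Expanding $U=\delta s\,E^\top E\,s+s\,E^\top E\,\delta s+\delta s\,E^\top E\,\delta s$ then yields two bounds:
\[
\|U\|_{\mathrm{blk}}\le C_1\|\Delta\|_{\mathrm{blk}},\qquad
\|U-DG(H)[\Delta]\|_{\mathrm{blk}}\le C\sqrt r\|\Delta\|_{\mathrm{blk}}^2 .
\]
The quadratic term $\delta s\,E^\top E\,\delta s$ is handled by the product bound.

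\textbf{Outer expansion.} Choose $c_0$ small so that $C_1c_0\le c$, the constant of Lemma~\ref{lem:block-square-root-Taylor}. That lemma then gives
\[
T_\infty(H+\Delta)-T_\infty(H)=D\sqrt X[U]+O_{\mathrm{blk}}\!\bigl(\sqrt r\|U\|_{\mathrm{blk}}^2\bigr).
\]
By the chain rule, $DT_\infty(H)[\Delta]=D\sqrt X[DG(H)[\Delta]]$, and $D\sqrt X$ is linear with block-norm bound $1/(2\sqrt\epsilon)$. Therefore
\[
\|D\sqrt X[U]-DT_\infty(H)[\Delta]\|_{\mathrm{blk}}\le(2\sqrt\epsilon)^{-1}\|U-DG(H)[\Delta]\|_{\mathrm{blk}}.
\]
Summing this with the outer remainder gives \eqref{eq:H-quadratic-remainder}, with constants depending only on $(M_E,\lambda,\epsilon)$ through $m$, $M$, and $\|E^\top E\|\le M_E^2$.

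\textbf{Main obstacle.} The hard step is the block-norm control of the non-block-diagonal resolvent $\widetilde R$ in the inner remainder, while keeping only a single factor of $\sqrt r$. The plan is to peel off the block-diagonal factors $R$, so that exactly one general product (costing $4\sqrt r$) appears. The remaining $\widetilde R$ is bounded in operator norm via the row-wise multiplier estimate above.
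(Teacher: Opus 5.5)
Your overall architecture matches the paper's: ridge factor, then Gram matrix, then the square root via Lemma~\ref{lem:block-square-root-Taylor}, with $c_0\le m/8$ for positivity and $C_1c_0\le c$ to enter the square-root lemma. The outer expansion and the bookkeeping for $U$ are fine once you have $\|\delta s\|_{\mathrm{blk}}\le Ce$ and $\|\delta s-Ds(H)[\Delta]\|_{\mathrm{blk}}\le C\sqrt r\,e^2$, where $e:=\|\Delta\|_{\mathrm{blk}}$.

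The gap is in how you obtain these two bounds, which is the step you flag as the main obstacle. The general multiplier estimate $\|A\widetilde R\|_{\mathrm{blk}}\le C\|A\|_{\mathrm{blk}}$ for non-block-diagonal $\widetilde R$ is false, and the failure is not in the ss block. Write $(A\widetilde R)_{sn}=A_{ss}\widetilde R_{sn}+A_{sn}\widetilde R_{nn}$. The block norm only gives $\|A_{ss}\|\le\|A_{ss}\|_F\le\sqrt r\,\|A\|_{\mathrm{blk}}$. For example, take $A_{ss}=\sqrt r\,uu^\top$ with all other blocks zero, so $\|A\|_{\mathrm{blk}}=1$. If $\widetilde R$ is the reflection exchanging $u$ with a unit null vector, then $\|(A\widetilde R)_{sn}\|=\sqrt r$.

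Your fallback loses the same factor. In operator norm, $\|\widetilde R-R\|\le\|R\|\,\|\widetilde R\|\,\|\Delta\|\le C\sqrt r\,e$. The product bound then gives $\|A(\widetilde R-R)\|_{\mathrm{blk}}\le Cre\,\|A\|_{\mathrm{blk}}\le Cc_0\sqrt r\,\|A\|_{\mathrm{blk}}$. With $A=\lambda R\Delta R\Delta$, whose block norm is at most $C\sqrt r\,e^2$, the remainder is of order $re^2$, not $\sqrt r\,e^2$. That would be harmless if constants could depend on $r_0$. But the lemma is rank-explicit with constants depending only on $(M_E,\lambda,\epsilon)$, and the tracking recursion uses exactly that form. (The multiplier bound does hold for this particular $\widetilde R$, because $\widetilde R_{sn}=O(e)$, but proving that needs the absorption argument below.)

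The missing idea is to never multiply by $\widetilde R$ in block norm. Your remainder $\lambda R\Delta R\Delta\widetilde R$ equals $R\Delta\Gamma$, where $\Gamma:=s(H+\Delta)-s(H)=\lambda R\Delta\widetilde R$. The identity $\widetilde R=R-R\Delta\widetilde R$ gives the self-referential equation
\[
\Gamma=\lambda R\Delta R-R\Delta\Gamma .
\]
Since $R$ is block diagonal, a single product bound yields $\|\Gamma\|_{\mathrm{blk}}\le\lambda\|R\|^2e+4\sqrt r\,\|R\|\,e\,\|\Gamma\|_{\mathrm{blk}}$. Shrink $c_0$ so that $4c_0\|R\|\le\tfrac12$, using $\|R\|\le(m+\lambda)^{-1}$. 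Then you can absorb the last term, giving $\|\Gamma\|_{\mathrm{blk}}\le2\lambda\|R\|^2e$. It follows that $\|R\Delta\Gamma\|_{\mathrm{blk}}\le4\sqrt r\,\|R\|\,e\,\|\Gamma\|_{\mathrm{blk}}\le C\sqrt r\,e^2$. This is the paper's argument, and with it in place of your multiplier estimate the rest of your proof goes through unchanged.
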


\begin{proof}
Write $H:=H_t^{(\infty)}$, $R:=(H+\lambda I_d)^{-1}$, and $e:=\|\Delta\|_{\mathrm{blk}}$. By Lemma~\ref{lem:block-calculus}, $\|\Delta\|\le4\sqrt r\,e$. Choose $c_0\le m/8$ so that $H+\Delta\succeq(m/2)I_d$. Decrease $c_0$ further, if necessary, so that
\[
4\sqrt r\,e\,\|R\|\le\frac12.
\]
These choices depend only on $(M_E,\lambda,\epsilon)$.

\emph{Ridge factor.}
Set
\[
\Gamma:=s(H+\Delta)-s(H),
\qquad
A:=Ds(H)[\Delta]=\lambda R\Delta R.
\]
Since $s(H)=I_d-\lambda R$, the resolvent identity gives
\[
\Gamma=\lambda R\Delta(H+\Delta+\lambda I_d)^{-1}
=A-R\Delta\Gamma.
\]
Because $R$ is block diagonal, Lemma~\ref{lem:block-calculus} yields
\[
\|\Gamma\|_{\mathrm{blk}}
\le\lambda\|R\|^2e
+4\sqrt r\,\|R\|e\,\|\Gamma\|_{\mathrm{blk}}.
\]
The last coefficient is at most $1/2$ by the choice of $c_0$. Moving this term to the left gives $\|\Gamma\|_{\mathrm{blk}}\le Ce$. Hence
\[
s(H+\Delta)=s(H)+A+R_s,\qquad R_s=-R\Delta\Gamma,
\]
with
\[
\|A\|_{\mathrm{blk}}\le Ce,
\qquad
\|R_s\|_{\mathrm{blk}}\le C\sqrt r\,e^2.
\]

\emph{Gram matrix.} \textcolor{red}{}
Let $Q(H):=s(H)E^\top E s(H)$ and write $S:=s(H)$. Expanding exactly,
\[
Q(H+\Delta)=Q(H)+DQ(H)[\Delta]+R_Q,
\]
where
\[
DQ(H)[\Delta]=AE^\top E S+SE^\top E A,
\qquad
R_Q=R_sE^\top E S+SE^\top E R_s+(A+R_s)E^\top E (A+R_s).
\]
The matrices $S,E^\top E $ are block diagonal and uniformly bounded. Also,
\[
\|A+R_s\|_{\mathrm{blk}}
\le Ce+C\sqrt r\,e^2\le C'e
\]
under $\sqrt r\,e\le c_0$. Block-diagonal multiplication and one use of the product bound in Lemma~\ref{lem:block-calculus} now give
\[
\|DQ(H)[\Delta]\|_{\mathrm{blk}}\le Ce,
\qquad
\|R_Q\|_{\mathrm{blk}}\le C\sqrt r\,e^2.
\]

\emph{Square root.}
Set $X:=Q(H)+\epsilon I_d$ and
\[
U:=Q(H+\Delta)-Q(H)=DQ(H)[\Delta]+R_Q.
\]
The matrix $X$ is block diagonal, $X,X+U\succeq\epsilon I_d$, and $\|U\|_{\mathrm{blk}}\le C'e$. Decreasing $c_0$ once more ensures that $U$ satisfies the hypotheses of Lemma~\ref{lem:block-square-root-Taylor}. Thus
\[
\|\sqrt{X+U}-\sqrt X-D\sqrt X[U]\|_{\mathrm{blk}}
\le C\sqrt r\,e^2.
\]
The same lemma gives
\[
\|D\sqrt X[R_Q]\|_{\mathrm{blk}}\le C\sqrt r\,e^2.
\]
Finally,
\[
DT_\infty(H)[\Delta]=D\sqrt X[DQ(H)[\Delta]]
\]
by the chain rule. Substituting $U=DQ(H)[\Delta]+R_Q$ and applying the triangle inequality proves \eqref{eq:H-quadratic-remainder}.
\end{proof}

\subsection{Finite-burn-in perturbation argument}\label{subsec:tracking-recursion}
We now combine the preceding estimates. Define
\[
D_t:=H_t^{(n)}-H_t^{(\infty)},
\qquad
e_t:=\|D_t\|_{\mathrm{blk}},
\qquad e_0=0.
\]
Adding and subtracting $T_\infty(H_t^{(n)})$ gives the exact decomposition
\[
\begin{aligned}
D_{t+1}
={}&\underbrace{T_n(H_t^{(n)})-T_\infty(H_t^{(n)})}
_{\text{moment perturbation at a common input}}\\
&+\underbrace{T_\infty(H_t^{(n)})-T_\infty(H_t^{(\infty)})}
_{\text{propagation through the population map}}.
\end{aligned}
\]
The first term has block norm at most $\delta$ by \eqref{eq:block-map-delta}, since $H_t^{(n)}\in\mathcal K$. For the second term, whenever $e_t\le c_0/\sqrt r$, Lemma~\ref{lem:H-quadratic-remainder} gives
\[
T_\infty(H_t^{(n)})-T_\infty(H_t^{(\infty)})
=DT_\infty(H_t^{(\infty)})[D_t]+R_t,
\qquad
\|R_t\|_{\mathrm{blk}}\le C_0\sqrt r\,e_t^2.
\]
Consequently, the triangle inequality and \eqref{eq:block-derivative-amplification} yield
\[
\begin{aligned}
e_{t+1}
&\le \|T_n(H_t^{(n)})-T_\infty(H_t^{(n)})\|_{\mathrm{blk}}\\
&\quad+\|DT_\infty(H_t^{(\infty)})[D_t]\|_{\mathrm{blk}}
+\|R_t\|_{\mathrm{blk}},
\end{aligned}
\]
and hence
\begin{equation}
e_{t+1}\le\delta+\kappa_t e_t+C_0\sqrt r\,e_t^2,
\qquad
\kappa_t:=\kappa(H_t^{(\infty)}).
\label{eq:error-recursion}
\end{equation}
The three terms represent moment perturbation, linear propagation, and the nonlinear remainder. The estimate holds only while $e_t\le c_0/\sqrt r$; the following induction verifies this condition at every step.

\begin{lemma}[Deterministic tracking recursion]\label{lem:finite-burnin-tracking}
Let $r\ge1$, $\delta\ge0$, $c_0,C_0>0$, $L\ge0$, $0\le\rho<1$, and an integer $T_\rho\ge1$ be fixed. Suppose a nonnegative sequence $(e_t)$ satisfies $e_0=0$ and \eqref{eq:error-recursion} whenever $e_t\le c_0/\sqrt r$, with $0\le\kappa_t\le L$ for $t<T_\rho$ and $0\le\kappa_t\le\rho$ for $t\ge T_\rho$. There are constants $c,C>0$, depending only on $(L,T_\rho,\rho,C_0,c_0)$, such that
\[
\sqrt r\,\delta\le c
\qquad\Longrightarrow\qquad
\sup_{t\ge0}e_t\le C\delta.
\]
\end{lemma}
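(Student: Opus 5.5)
The plan is a two-phase induction on $t$ with explicit envelopes: a burn-in phase $0\le t\le T_\rho$, where errors may grow by at most a fixed factor per step, and a contractive phase $t\ge T_\rho$, where the error stays trapped below a fixed multiple of $\delta$. Throughout, the only thing to check is the validity condition $e_t\le c_0/\sqrt r$, which permits applying \eqref{eq:error-recursion}. Once $e_t$ is known to be a constant multiple of $\delta$, this condition reduces to $\sqrt r\,\delta$ being small, and we shrink $c$ accordingly.

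\emph{Burn-in phase.} Set $\Lambda:=L+1$ and define the envelope $a_t:=\delta\sum_{k=0}^{t-1}(2\Lambda)^k$ for $t\le T_\rho$, so that $a_0=0$ and $a_{t+1}=\delta+2\Lambda a_t$. We claim $e_t\le a_t$ for $t\le T_\rho$ and prove it by induction. Suppose $e_t\le a_t\le A\delta$, where $A:=\sum_{k<T_\rho}(2\Lambda)^k$. If $c\le c_0/A$, then $e_t\le c_0/\sqrt r$, so \eqref{eq:error-recursion} applies and gives
\[
e_{t+1}\le\delta+L e_t+C_0\sqrt r\,e_t^2\le\delta+\bigl(L+C_0\sqrt r\,A\delta\bigr)e_t .
\]
If also $c\le 1/(C_0A)$, then $C_0\sqrt r\,A\delta\le1$, so the bracket is at most $\Lambda\le2\Lambda$ and $e_{t+1}\le a_{t+1}$. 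Hence $e_{T_\rho}\le A\delta$, with $A$ depending only on $(L,T_\rho)$.

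\emph{Contractive phase.} Let $B:=\max\{A,\,2/(1-\rho)\}$. We show $e_t\le B\delta$ for all $t\ge T_\rho$, again by induction. The base case $t=T_\rho$ holds because $A\le B$. Suppose $e_t\le B\delta$. If $c\le c_0/B$, the recursion applies, and
\[
e_{t+1}\le\delta+\bigl(\rho+C_0\sqrt r\,B\delta\bigr)e_t\le\delta+\tfrac{1+\rho}{2}\,B\delta
\]
provided $C_0 B c\le(1-\rho)/2$. The right-hand side is at most $B\delta$ exactly when $\delta\le\tfrac{1-\rho}{2}B\delta$, and this holds because $B\ge 2/(1-\rho)$.

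\emph{Choice of constants and conclusion.} Take
\[
c:=\min\Bigl\{\tfrac{c_0}{B},\,\tfrac{1}{C_0A},\,\tfrac{1-\rho}{2C_0B}\Bigr\}
\quad\text{and}\quad C:=B .
\]
Since $A\le B$, we have $c_0/B\le c_0/A$, so this single $c$ satisfies all the smallness conditions used above. Every constant depends only on $(L,T_\rho,\rho,C_0,c_0)$, and combining the two phases gives $\sup_{t\ge0}e_t\le C\delta$.

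The main subtlety is the circularity that the recursion is valid only when $e_t\le c_0/\sqrt r$. The induction must carry the envelope bound, which implies validity, rather than assume validity outright. The quadratic term is absorbed only because the factor $\sqrt r$ multiplying $e_t^2$ is paired with an $e_t$ of order $\delta$, and this pairing is exactly why the hypothesis is placed on $\sqrt r\,\delta$.
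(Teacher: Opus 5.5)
Your proof is correct and follows essentially the same two-phase argument as the paper: a geometric envelope through the burn-in (which is what certifies $e_t\le c_0/\sqrt r$ at every step), followed by an invariant-ball induction in the contractive phase. The only differences are cosmetic. You trap the tail directly in the ball of radius $B\delta$, whereas the paper traps it in a fixed ball of radius $c_R/\sqrt r$ and then iterates $e_{t+1}\le\delta+ae_t$ to extract the $O(\delta)$ bound. Your version is slightly more direct and yields the constant $\max\{A,2/(1-\rho)\}$ instead of the paper's sum $C_{\mathrm{pre}}+2/(1-\rho)$.
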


\begin{proof}
We first control the finite interval $0\le t\le T_\rho$ by induction. Set
\[
L_{\mathrm{pre}}:=L+C_0c_0,
\qquad
S_t:=\sum_{j=0}^{t-1}L_{\mathrm{pre}}^j,
\qquad S_0:=0,
\]
and let $C_{\mathrm{pre}}:=S_{T_\rho}\ge1$. The sums $S_t$ are nondecreasing. We claim that $e_t\le S_t\delta$ for every $t\le T_\rho$. The claim is true at $t=0$. Suppose it holds through time $t<T_\rho$. If
\[
\sqrt r\,\delta\le\frac{c_0}{2C_{\mathrm{pre}}},
\]
then $e_t\le C_{\mathrm{pre}}\delta<c_0/\sqrt r$, so the local recursion is valid and
\[
C_0\sqrt r e_t^2\le C_0c_0e_t.
\]
Therefore
\[
e_{t+1}\le\delta+L_{\mathrm{pre}}e_t
\le\delta+L_{\mathrm{pre}}S_t\delta
=S_{t+1}\delta.
\]
This proves the pre-burn-in claim and, at the same time, verifies the local-neighborhood condition at every induction step.

For the tail, choose $R=c_R/\sqrt r$ with
\[
0<c_R\le c_0,
\qquad
C_0c_R\le\frac{1-\rho}{2}.
\]
Whenever $t\ge T_\rho$ and $e_t\le R$, the local recursion gives
\[
e_{t+1}
\le\delta+\left(\rho+C_0\sqrt r R\right)e_t
\le\delta+a e_t,
\qquad a:=\frac{1+\rho}{2}<1.
\]
Now require, in addition,
\[
C_{\mathrm{pre}}\delta\le R,
\qquad
\delta\le(1-a)R=\frac{1-\rho}{2}R.
\]
These are both consequences of $\sqrt r\delta\le c$ for a sufficiently small constant $c$ depending only on the displayed parameters. The first condition gives $e_{T_\rho}\le R$. The second shows that $e_t\le R$ implies $e_{t+1}\le\delta+aR\le R$, so induction keeps the entire tail in the local ball. Iterating the linear inequality yields
\[
e_t\le a^{t-T_\rho}e_{T_\rho}
+\frac{1-a^{t-T_\rho}}{1-a}\delta
\le a^{t-T_\rho}C_{\mathrm{pre}}\delta
+\frac{2}{1-\rho}\delta,
\qquad t\ge T_\rho.
\]
For example, all the smallness requirements are satisfied by
\[
c:=\min\left\{
\frac{c_0}{2C_{\mathrm{pre}}},
\frac{c_R}{C_{\mathrm{pre}}},
(1-a)c_R
\right\}>0.
\]
Combining the finite and contractive phases gives the claim with $C=C_{\mathrm{pre}}+2/(1-\rho)$. Every use of the local recursion was justified by the preceding induction step.
\end{proof}

\begin{proofof}{Theorem~\ref{thm:rank-explicit-tracking}}
Fix
\[
\rho:=\frac{1+\bar\kappa_\star}{2}\in(\bar\kappa_\star,1).
\]
This choice depends only on $(\lambda,\epsilon)$, so Corollary~\ref{cor:H-eventual-contraction} supplies $L,T_\rho$ with dependence only on $(M_E,\lambda,\epsilon)$. Let $c_{\mathrm{boot}}$ be the smallness threshold in Lemma~\ref{lem:finite-burnin-tracking}, and let $C_{\mathrm{map}}\ge1$ satisfy \eqref{eq:block-map-delta}. Choose the constant $c$ in
Theorem~\ref{thm:rank-explicit-tracking} so that
\[
c\le\min\left\{\frac12,\frac{c_{\mathrm{boot}}}{C_{\mathrm{map}}}\right\}.
\]
Then $\sqrt r\,\Delta_n\le c$ implies $\delta_S\le1/2$ and $\delta_E\le1$, so Lemma~\ref{lem:metric-trapping} places both trajectories in $\mathcal K$. Moreover,
\[
\sqrt r\,\delta\le C_{\mathrm{map}}\sqrt r\,\Delta_n\le c_{\mathrm{boot}}.
\]
The local recursion \eqref{eq:error-recursion} and Lemma~\ref{lem:finite-burnin-tracking} therefore give
\[
\sup_{t\ge0}\|H_t^{(n)}-H_t^{(\infty)}\|_{\mathrm{blk}}
\le C\delta\le C\Delta_n.
\]
Converting the block norm to operator norm yields
\[
\sup_{t\ge0}\|H_t^{(n)}-H_t^{(\infty)}\|
\le C\sqrt r\,\Delta_n.
\]

For the feature maps,
$B_t=(H_t^{(n)})^{1/2}$ and
$\bar B_t=(H_t^{(\infty)})^{1/2}$.
Both metrics are bounded below by $mI_d$.
Lemma~\ref{lem:square-root-stability}, applied with this lower bound, gives
\[
\|B_t-\bar B_t\|
\le\frac1{2\sqrt m}\|H_t^{(n)}-H_t^{(\infty)}\|
\le C\sqrt r\,\Delta_n.
\]
This holds also at $t=0$, when the two metrics coincide.

For the predictors, write
\[
F_t=E_n\mathcal W_n(H_t^{(n)}),
\qquad
\bar F_t=E s(H_t^{(\infty)}).
\]

At a common $H\in\mathcal K$, \eqref{eq:W-moment-identity} gives
\[
\|E_n\mathcal W_n(H)-Es(H)\|
\le 2\delta_E+2\|E\|\delta_S\le C\Delta_n.
\]
For $H,G\in\mathcal K$, the resolvent identity also gives
\[
s(H)-s(G)
=\lambda(H+\lambda I_d)^{-1}(H-G)(G+\lambda I_d)^{-1},
\]
so
\[
\|s(H)-s(G)\|\le\frac{\lambda}{(m+\lambda)^2}\|H-G\|.
\]
Adding and subtracting $Es(H_t^{(n)})$ therefore yields
\[
\begin{aligned}
\|F_t-\bar F_t\|
&\le \|E_n\mathcal W_n(H_t^{(n)})-Es(H_t^{(n)})\|\\
&\quad+\|E\|\,\|s(H_t^{(n)})-s(H_t^{(\infty)})\|\\
&\le C\Delta_n+C\|H_t^{(n)}-H_t^{(\infty)}\|\\
&\le C\sqrt r\,\Delta_n.
\end{aligned}
\]
Taking the supremum over $t$ proves
\eqref{eq:rank-explicit-stability}, and Theorem~\ref{thm:deterministic-tracking} follows.
\end{proofof}

\subsection{Sub-Gaussian concentration}\label{subsec:tracking-concentration}
Theorem~\ref{thm:rank-explicit-tracking} reduces the statistical problem of sample complexity to controlling $\Delta_n:=\|S_n-I_d\|+\|E_n-E\|$. We obtain bounds on both terms in $\Delta_n$ from a common concentration argument.

\begin{lemma}[Concentration of the empirical moments]\label{lem:subgaussian-moments}
Under the assumptions of Corollary~\ref{cor:subgaussian-tracking}, there is a constant $C=C(K_x,K_\xi)$ such that, with probability at least $1-\eta$,
\begin{align}
\|S_n-I_d\|
&\le C\left[
\sqrt{\frac{d+\log(2/\eta)}{n}}
+\frac{d+\log(2/\eta)}{n}
\right],\label{eq:subg-cov}\\
\|E_n-E\|
&\le C(\|E\|+\sigma)
\left[
\sqrt{\frac{d+\log(2/\eta)}{n}}
+\frac{d+\log(2/\eta)}{n}
\right].\label{eq:subg-cross}
\end{align}
\end{lemma}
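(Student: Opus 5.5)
The plan is to reduce both bounds to standard covering-net arguments for sub-Gaussian and sub-exponential sums, as in \citep{vershynin2018high}, and then combine them with a union bound at confidence level $\eta/2$ each.

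\textbf{Covariance term.} The bound \eqref{eq:subg-cov} is the standard covariance estimate for isotropic $K_x$-sub-Gaussian vectors. Fix a $1/4$-net $\mathcal N$ of the unit sphere $S^{d-1}$ with $|\mathcal N|\le 9^d$. Since $S_n-I_d$ is symmetric,
\[
\|S_n-I_d\|\le 2\max_{u\in\mathcal N}\Bigl|\frac1n\sum_i(\langle u,x_i\rangle^2-1)\Bigr|.
\]
For each fixed $u$, the summands are independent, mean zero, and sub-exponential with $\psi_1$-norm at most $CK_x^2$. Bernstein's inequality gives a tail bound of $2\exp(-c\,n\min(s^2,s))$. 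Taking $s\asymp\sqrt{(d+\log(2/\eta))/n}+(d+\log(2/\eta))/n$ and union bounding over $9^d$ points gives failure probability at most $\eta/2$.

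\textbf{Cross-moment term.} Write $y_i=Ex_i+\xi_i$, so that
\[
E_n-E=E(S_n-I_d)+\frac1n\sum_i\xi_ix_i^\top.
\]
On the covariance event, the first piece is bounded by $\|E\|$ times the right-hand side of \eqref{eq:subg-cov}. If $\sigma=0$, the second piece vanishes and we are done. Otherwise, set $G:=\frac1n\sum_i\xi_ix_i^\top$. This is a non-symmetric $d\times d$ matrix, so we use two $1/4$-nets $\mathcal N,\mathcal N'$ of $S^{d-1}$:
\[
\|G\|\le 2\max_{v\in\mathcal N,\,u\in\mathcal N'}\frac1n\sum_i\langle v,\xi_i\rangle\langle u,x_i\rangle.
\]
The product of the independent sub-Gaussian variables $\langle v,\xi_i\rangle$ (with $\psi_2$-norm at most $\sigma K_\xi$) and $\langle u,x_i\rangle$ (with $\psi_2$-norm at most $K_x$) is sub-exponential with $\psi_1$-norm at most $\sigma K_\xi K_x$. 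It has mean zero because $\xi_i$ is independent of $x_i$ and mean zero. Bernstein's inequality over the $81^d$ pairs then gives $\|G\|\le C\sigma[\sqrt{\cdot}+\cdot]$, again with failure probability $\eta/2$, since $\log 81^d$ is absorbed into $Cd$.

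\textbf{Conclusion and main obstacle.} A union bound over the two events gives both \eqref{eq:subg-cov} and \eqref{eq:subg-cross} simultaneously with probability at least $1-\eta$, with the constant $C(\|E\|+\sigma)$ as claimed. The step I expect to need the most care is the bookkeeping of the $\psi_1$ product bound and the mean-zero property of the noise cross term. One must use independence of $\xi_i$ and $x_i$ rather than only $\E[\xi\mid x]=0$, which is why the corollary assumes independence. The rest is the routine net-plus-Bernstein argument, which produces both the $\sqrt{\cdot}$ and the linear term in the bracket.
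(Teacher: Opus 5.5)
Your proof is correct, but it organizes the concentration differently from the paper.

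\textbf{The paper's route.} The paper never runs a net argument itself. It stacks $w_i=(x_i,\xi_i/\sigma)\in\R^{2d}$ and observes that $w_i$ is isotropic and sub-Gaussian with parameter depending only on $(K_x,K_\xi)$. It then applies the covariance theorem for isotropic sub-Gaussian vectors once, in dimension $2d$. Finally, it reads off $S_n-I_d$ and $\sigma^{-1}N_n$ as the diagonal and off-diagonal blocks of the single matrix $\frac1n\sum_i w_iw_i^\top-I_{2d}$. This gives both bounds on one event, with no union bound and no separate treatment of products of sub-Gaussians.

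\textbf{Your route.} You rerun net-plus-Bernstein for the symmetric block. For the non-symmetric cross term you use two nets together with $\|XY\|_{\psi_1}\le\|X\|_{\psi_2}\|Y\|_{\psi_2}$, and then you split the confidence level. The resulting $\log(4/\eta)$ is absorbed into the constant, since $\log(4/\eta)\le 2\log(2/\eta)$ for $\eta\in(0,1)$.

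\textbf{Comparison.} Your argument is longer but self-contained, and slightly more general. It never uses $\operatorname{Cov}(\xi_i)=\sigma^2I_d$; it only needs the $\psi_2$ bound on $\xi_i$ and $\E[\xi_ix_i^\top]=0$. The stacking trick, as written, needs the stacked vector to be isotropic, hence the exact noise covariance. The paper's version buys brevity and a single high-probability event.

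\textbf{A correction to your closing remark.} Independence of $\xi_i$ and $x_i$ is not needed for the mean-zero property. Conditional mean zero already gives
$\E[\langle v,\xi_i\rangle\langle u,x_i\rangle]=\E\bigl[\langle u,x_i\rangle\,\E[\langle v,\xi_i\rangle\mid x_i]\bigr]=0$.
The product $\psi_1$ bound holds without any independence at all. What Bernstein's inequality actually requires is independence across $i$. The paper's proof likewise uses only the vanishing cross-covariance, so the step you flag as the main obstacle is in fact routine.
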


\begin{proof}
Define the noise cross-moment
\[
N_n:=\frac1n\sum_{i=1}^n\xi_i x_i^\top.
\]
Since $y_i=Ex_i+\xi_i$, we have the exact identity
\[
E_n-E=E(S_n-I_d)+N_n.
\]
Assume first that $\sigma>0$ and set
\[
z_i:=\xi_i/\sigma,\qquad
w_i:=\begin{pmatrix}x_i\\z_i\end{pmatrix}\in\R^{2d}.
\]
The vectors $w_i$ are independent across $i$. The assumptions on $x_i,\xi_i$ give
\[
\E[x_i z_i^\top]=0,\qquad
\E[z_i x_i^\top]=0,\qquad
\E[w_iw_i^\top]
=\begin{pmatrix}I_d&0\\0&I_d\end{pmatrix}=I_{2d}.
\]
They also have a sub-Gaussian parameter depending only on $(K_x,K_\xi)$: for $u,v\in\R^d$,
\[
\begin{aligned}
\|\langle u,x_i\rangle+\langle v,z_i\rangle\|_{\psi_2}
&\le C\bigl(K_x\|u\|_2+K_\xi\|v\|_2\bigr)\\
&\le C'(K_x,K_\xi)\|(u,v)\|_2.
\end{aligned}
\]
Apply the covariance bound for independent isotropic sub-Gaussian vectors \citep[Theorem~4.6.1]{vershynin2018high} to $(w_i)$. With probability at least $1-\eta$, the matrix
\[
A_n:=\frac1n\sum_{i=1}^n w_iw_i^\top-I_{2d}
=\begin{pmatrix}
S_n-I_d&\sigma^{-1}N_n^\top\\
\sigma^{-1}N_n&n^{-1}\sum_{i=1}^n z_i z_i^\top-I_d
\end{pmatrix}
\]
satisfies
\[
\|A_n\|\le C(K_x,K_\xi)
\left[
\sqrt{\frac{d+\log(2/\eta)}{n}}
+\frac{d+\log(2/\eta)}{n}
\right],
\]
where the factor $2$ in the dimension is absorbed into $C$. Extracting a block amounts to multiplying on the left and right by coordinate projections, each of operator norm one. Hence, on this same event,
\[
\|S_n-I_d\|\le\|A_n\|,
\qquad
\|N_n\|\le\sigma\|A_n\|.
\]
It follows that
\[
\|E_n-E\|
\le\|E\|\|S_n-I_d\|+\|N_n\|
\le(\|E\|+\sigma)\|A_n\|.
\]
These inequalities prove \eqref{eq:subg-cov} and \eqref{eq:subg-cross} simultaneously.

When $\sigma=0$, the mean-zero noise has $\E\|\xi_i\|_2^2=0$, so $\xi_i=0$ almost surely and $N_n=0$. Apply the same covariance theorem directly to $(x_i)$ in dimension $d$, then use $E_n-E=E(S_n-I_d)$. This proves both bounds in the noiseless case.
\end{proof}

\begin{proofof}{Corollary~\ref{cor:subgaussian-tracking}}
Let $D_\eta:=d+\log(2/\eta)$. On the event in Lemma~\ref{lem:subgaussian-moments},
\[
\Delta_n
\le C_{\mathrm{mom}}
\left(\sqrt{\frac{D_\eta}{n}}+\frac{D_\eta}{n}\right),
\]
where, using $\|E\|\le M_E$, the constant $C_{\mathrm{mom}}$
depends only on $(K_x,K_\xi,M_E,\sigma)$. If $n\ge D_\eta$, the second term is at most the first, so
\[
\sqrt r\,\Delta_n\le2C_{\mathrm{mom}}\sqrt{\frac{rD_\eta}{n}}.
\]
For a sufficiently large $C_0\ge1$, the condition
$n\ge C_0rD_\eta$ therefore ensures
$\sqrt r\,\Delta_n\le c$, so the rank-explicit estimate
\eqref{eq:rank-explicit-stability} applies. Substitution gives
\[
\sup_{t\ge0}\max\left\{
\|B_t-\bar B_t\|,
\|F_t-\bar F_t\|
\right\}
\le C\sqrt{\frac{rD_\eta}{n}}.
\]
Since $r\le r_0$, the condition $n\ge C_0r_0D_\eta$
implies the preceding sample-size requirement, and
the error bound is at most
$C\sqrt{r_0}\sqrt{D_\eta/n}$.
Absorbing $r_0$ into the sample-size constant and
$\sqrt{r_0}$ into the error constant gives the
bounded-rank formulation of
Corollary~\ref{cor:subgaussian-tracking}.
Since $d\ge1$, replacing $D_\eta$ by $d+\log(1/\eta)$ changes only the constants. This proves the stated sample-size requirement and rate. The concentration event concerns only the empirical moments; conditional on that event, the deterministic theorem controls every iteration. Thus no union bound over iteration times is needed.
\end{proofof}



\section{Experimental details and additional figures}\label{app:experimental-details}
\subsection{Computational complexity of regularized lin-RFM on text data}
In our OpenWebText and TinyStories experiments, each data point $x_i \in \mathbb{R}^d$ represents a sparse-vector with $k \ll d$ non-zero entries and the targets $y_i \in \mathbb{R}^d$ are one-hot encoded vectors. Our experiments are in the regime $n \gg d$ where $n$ can be as large as $10^9$ and $d$ typically around $10k$-$20k$.

Algorithm~\ref{alg:regularized-lin-rfm} requires computing the sufficient statistics $S_n := \frac{1}{n}X X^\top \in \mathbb{R}^{d \times d}$ and $ \widehat{E}_n :=\frac{1}{n}YX^\top \in \mathbb{R}^{d \times d}$. To compute $S_n, \widehat{E}$, we avoid realizing the full matrices $X, Y \in \mathbb{R}^{d \times n}$ and leverage sparsity of the vectors by streaming the data. Specifically, each update is of the form:
\[
S_n^{\mathrm(new)} \leftarrow S_n^{\mathrm{(old)}} + x_i x_i^\top
\]
where each $x_i x_i^\top$ represents an update with $k^2 \ll d^2$ non-zero entries. Hence we can compute $S_n$ in $\mathcal{O}(nk^2)$. Similarly, we compute $\widehat{E}_n$ in $\mathcal{O}(nk)$ time. Once computed, it suffices to run the updates of Algorithm~\ref{alg:regularized-lin-rfm} which requires $\mathcal{O}(d^3)$ for each iteration. Hence the total complexity of the algorithm is $\mathcal{O}(Td^3) + O(nk^2)$ where $T$ is total number of iterations. The memory complexity is $\mathcal{O}(d^2)$ as we require storing dense $d \times d$ matrices throughout the iterations.

In HuffPost News, our input $x\in \mathbb{R}^d$ is still $k$-sparse however the targets are $y \in \mathbb{R}^C$ where $C=42$ denotes the number of classes each article can be categorized in. Hence the only change is that $\widehat{E}_n \in \mathbb{R}^{C \times d}$ is no longer a square $d\times d$ matrix. Computing the sufficient statistics still require $\mathcal{O}(nk^2)$ cost. After the sufficient statistics are formed, the RFM iterations still involve
dense \(d\times d\) matrices through the regularized solve so the dominant per-iteration cost remains \(O(d^3)\). We note that \(A_tB_t\in\mathbb{R}^{C\times d}\) has rank at most \(C\ll d\).
Therefore the update
\[
B_{t+1}
=
\left((A_tB_t)^\top(A_tB_t)+\epsilon I\right)^{1/4}
\]
can be computed from the compact SVD of \(A_tB_t\), avoiding a full
\(d\times d\) eigendecomposition for this matrix power. The total cost is still
dominated by the dense RFM solves, but the low output dimension \(C=42\) makes
the feature-update step significantly cheaper in this multiclass setting. The complexity is still $\mathcal{O}(Td^3) + \mathcal{O}(nk^2).$

\subsection{Learned nearest neighbor embeddings}
\label{app:nearest_neighbor_computation}

Algorithm~\ref{alg:regularized-lin-rfm} returns the feature matrix $B$.
We inspect the learned embeddings induced by $B$ via rank truncation
and cosine similarity. Since $B$ is symmetric positive semidefinite,
write
\[
B = V \operatorname{diag}(b_1,\ldots,b_d)V^\top,
\]
where the eigenvalues are ordered $b_1\geq\cdots\geq b_d$, and define
the top-$k$ spectral embedding table induced by $B$ as
\[
Z := V_k \operatorname{diag}(b_1,\ldots,b_k)
\in \mathbb{R}^{d\times k}.
\]
The embedding of coordinate $j$ is the $j$-th row of $Z$,
$z_j:=e_j^\top Z\in\mathbb{R}^k$. We compute nearest neighbors using
cosine similarity. For query coordinate $q$ with embedding $z_q$, we
rank coordinates $j$ by
\[
\operatorname{sim}(q,j)
:=
\frac{z_q^\top z_j}
{\|z_q\|_2\|z_j\|_2}.
\]

At initialization, $B_0=I_d$, so all eigenvalues are equal and there is
no distinguished top-$k$ eigenspace. We therefore include the $t=0$
neighborhoods only as an unstructured initialization baseline and do
not interpret them as learned task-dependent structure.

\subsection{Anisotropic covariance families.}
For the anisotropic experiments in Figure~\ref{fig:num-trifigure}, we
construct $\Sigma=\diag(\lambda_1,\ldots,\lambda_d)$ with prescribed
condition number $\kappa$. We first define unnormalized eigenvalues
$a_i$ and then set
\[
\lambda_i=\frac{d\,a_i}{\sum_{j=1}^d a_j},
\]
so that $\tr(\Sigma)=d$ without changing its condition number.
The four spectral families are
\[
\begin{aligned}
\text{linear:}\quad
&a_i=1-\frac{i-1}{d-1}(1-\kappa^{-1}),\\
\text{geometric:}\quad
&a_i=\kappa^{-(i-1)/(d-1)},\\
\text{one weak direction:}\quad
&a_i=
\begin{cases}
1,&i<d,\\
\kappa^{-1},&i=d,
\end{cases}\\
\text{half weak directions:}\quad
&a_i=
\begin{cases}
1,&i\le d/2,\\
\kappa^{-1},&i>d/2.
\end{cases}
\end{aligned}
\]
Thus all four families have the same condition number and trace, but
differ in how variance is distributed across directions.

\section{Sample complexity of dynamics plots}

The sample-complexity plots examine empirically how the finite-sample
feature trajectory $B_t^{(n)}$ approaches a high-sample reference
trajectory $B_t^{(\mathrm{ref})}$. For each dataset, we fix a large
reference sample size $N_{\mathrm{ref}}$ and run the algorithm for
$t=1,\ldots,T$ on these reference statistics to obtain
$B_t^{(\mathrm{ref})}$. For varying sample sizes
$1\leq n<N_{\mathrm{ref}}$, we compute
\[
\max_{1\leq t\leq T}
\left\|B_t^{(n)}-B_t^{(\mathrm{ref})}\right\|_{\mathrm{op}}
\]
over $r$ trials and plot the median with a 10--90 percentile band.
The sampling procedure for the finite-$n$ trajectories differs across
datasets and is described below.

\subsection{OpenWebText}\label{app:openwebtext}

OpenWebText~\citep{Gokaslan2019OpenWeb} is a large-scale web text corpus introduced as an open-source replication of the WebText corpus used to train GPT-2~\citep{radford2019language}. It contains approximately $8$ million English web documents collected from Reddit-shared URLs.

\paragraph{Data preprocessing.}
 We use a simple lowercase word tokenizer: each story is lowercased, and tokens are extracted as contiguous alphabetic letters. Thus, capitalization, puncutation, numbers, and other non-letter symbols are ignored. We stream the first $250,000$ documents to count the frequency of the tokens and we keep the top $d=20k$ frequent tokens as our vocabulary while ignoring punctuations and common stopwords such as
 \[
\text{the, a, an, and, or, of, to, in, on, for, with, is, are, was, were}.
\]
Any token outside of our vocabulary is removed. For example, suppose a short document contains the token sequence
\[
(\texttt{the},\texttt{dog},\texttt{ran},\texttt{through},
  \texttt{the},\texttt{garden}).
\]
If our active vocabulary is
\[
\mathcal V=\{\texttt{dog},\texttt{ran},\texttt{garden}\},
\]
then tokens outside \(\mathcal V\), such as \(\texttt{the}\) and
\(\texttt{through}\), are removed. The filtered sequence is
\[
(\texttt{dog},\texttt{ran},\texttt{garden}).
\]


\paragraph{Multi-output regression task.}
Our regression task is using the \textit{surrounding bag-of-word context} to predict the \textit{center token}. Suppose preprocessing gives a sequence of in-vocabulary word indices 
\[
\left(w_1, w_2, \ldots, w_L\right), \quad w_i \in\{1, \ldots, d\}.
\]
Let $e_j \in \mathbb{R}^d$ be the one-hot vector for word $j$. Fix a window size $m$ and for center position $i\in\{m+1,\ldots,L-m\}$, we define the target to be the center word, i.e., $y_i := e_{w_i}$. The corresponding input is the normalized bag-of-words context surrounding the center: 
\[
x_i:=\frac{1}{2m}\sum_{\ell=1}^m
\left(e_{w_{i-\ell}}+e_{w_{i+\ell}}\right).
\]
Let $n:=L-2m$ and define
\[
X:=\begin{bmatrix}x_{m+1}&\cdots&x_{L-m}\end{bmatrix},
\qquad
Y:=\begin{bmatrix}y_{m+1}&\cdots&y_{L-m}\end{bmatrix}.
\]
This yields the data set $(X,Y) \in \mathbb{R}^{d\times n} \times \mathbb{R}^{d\times n}.$ We apply Algorithm~\ref{alg:regularized-lin-rfm} on this fixed dataset to obtain learned matrices $B, F$.

\paragraph{Sample complexity plots}
For OpenWebText-simple, each sample is a word-level context prediction example. We use a fixed vocabulary and form a normalized bag-of-words context vector \(x_i\) from the surrounding words, while the target \(y_i\) is the one-hot encoding of the center word. The reference statistics are computed from the largest stream used in the experiment, with \(N_{\mathrm{ref}}=10^9\) context-center examples. The finite-\(n\) prefixes are sampled from the same data stream used to construct the \(10^9\)-sample reference, so all finite samples are subsets of the reference stream. We use \(r=8\) trials. We compare \(B_t^{(n)}\) to the \(10^9\)-sample reference trajectory and plot the median feature error with a 10--90 percentile band.

\paragraph{Specific parameters.}
In our OpenWebText experiment, we use $\lambda=\epsilon=10^{-3}$
and a context window of size $m=12$. We collect $n=10^9$ samples
and use a vocabulary of $d=20{,}000$ words constructed as described
above. To construct the spectral embedding table induced by $B$
(Appendix~B.2), we truncate to the top $k=2048$ eigenvalues.

\paragraph{Computational time.}
We set \(d=20\text{k}\). Streaming the statistics \(S_n\) and
\(\widehat E_n\) for \(n=10^9\) took 2.01 hours on a compute node
using 16 CPU cores. The node contained dual Intel Xeon Platinum 7268
processors at 2.90 GHz with 48 physical cores. No GPUs were used to
compute these statistics. The regularized lin-RFM requires matrix
operations on \(20\text{k}\times20\text{k}\) matrices. We used an
NVIDIA A100-SXM4 GPU with 80 GB of memory to compute 10 iterations of
regularized lin-RFM, which took 1.5 minutes.

\subsection{Integrated Human Lung Cell Atlas (HLCA)}\label{app:hlca}
The integrated Human Lung Cell Atlast v1.0 (HLCA)~\citep{sikkema2023integrated} is the first large-scale integrated single-cell reference atlast of the human lung. We use the core integrated Human Lung Cell Atlas (HLCA) v1.0~\citep{hlca2023hcaportal}. This HLCA contains \(584,000\) healthy lung tissue cells from 107 individuals.

\paragraph{Data preprocessing.} 
We restrict our gene vocabulary $\mathcal{V}$ to be the top $d=20k$ frequent genes via \textit{detection frequency}, i.e., by genes that are expressed in the largest number of cells. We filter out cells with less than $50$ total gene counts. Then, for each cell $c$ let $z_{c,g}$ denote the read count for gene $g$ in cell $c$. We $l1$-normalize the counts $z_{c,g}$:

\[
\mathbf{t f}_{c, g}=\frac{z_{c, g}}{\sum_{h \in \mathcal{V}} z_{c, h}}
\]
Next, we apply TF-IDF (Term Frequency-Inverse Document Frequency). Scale by 
\[\mathbf{tdf}_g \approx \log \frac{\# \text { cells }}{\# \text { cells expressing gene } g}\]
and construct the vectors 
\[
v_{c,g} = \mathbf{t f}_{c, g} \cdot \mathbf{tdf}_g
\]

\paragraph{Multi-output regression task.}
For each cell $c$ define $v_c \in \mathbb{R}^d$ as the normalized expression vector over the selected vocabulary of $d$ genes. Define $J_c := \{g : v_{c,g} > 0\}$ as the set of expressed genes in cell $c$. For each $g \in \mathcal{J}_c$ we construct the leave-one-out-prediction task as 
\[
x_{c,g} = v_c - v_{c,g}e_g, \quad \text{and} \quad y_{c,g} = e_g.
\]
Thus $x_{c,g}$ contains the remaining expression profile of
cell $c$, and the target $y_{c,g}=e_g$ encodes the identity
of the held-out expressed gene. We construct our sufficient statistics as 
\[
S_n=\frac{1}{n} \sum_c \sum_{g \in J_c} x_{c, g} x_{c, g}^{\top} \qquad \widehat{E}_n =\frac{1}{n} \sum_c \sum_{g \in J_c} y_{c, g} x_{c, g}^{\top},
\]
where $n=\sum_c\left|J_c\right|$. Hence the HLCA experimental set-up is analogous to the OpenWebText context-to-center prediction task (Appendix~\ref{app:openwebtext}) except that the cells are now unordered bags of expressed genes rather than sequence of tokens. 

\paragraph{Sample complexity plots.}
For HLCA, the samples are the induced leave-one-gene-out pairs from cells. Each cell \(c\) has a TF-IDF vector \(z_c\in\mathbb R^{20000}\). For each expressed selected gene \(g\in J_c\), we form

\[
x_{c,g}=z_c-z_{c,g}e_g,
\qquad
y_{c,g}=e_g.
\]

Thus the target is one-hot. The reference statistics are computed from the HLCA data used in our
experiment, giving \(N_{\mathrm{ref}}=1.003\times10^9\)
leave-one-gene-out pairs. For finite \(n\), each trial samples a random permutation of cells without replacement and takes nested prefixes. Since each cell contributes \(|J_c|\) leave-one-gene-out pairs, checkpoints are saved when the cumulative number of pairs first exceeds the target \(n\). We use \(r=5\) trials and plot the median across trials with a 10--90 percentile band.

\paragraph{Nearest neighbors.}
After running Algorithm~\ref{alg:regularized-lin-rfm} on the constructed regression task we obtain $B_t$. For each query gene we obtain its neighbors via the method in Appendix~\ref{app:nearest_neighbor_computation} with $k=256.$ We apply a post-processing step via co-occurence filter when construting our neighborhood set $N_g$. Specifically for gene $g$ we only keep candidate neighbor genes $h$ that co-occurred with $g$ in at least $50$ cells, i.e., $\# \{c: z_{c,g} > 0 , z_{c,h} > 0\} \geq 50.$


\paragraph{Computational trick.}
Let $k_c:=|J_c|$ and let
$u_c:=\sum_{g\in J_c}e_g$ denote the support indicator of $v_c$.
For the inputs $x_{c,g}=v_c-v_{c,g}e_g$ and one-hot targets
$y_{c,g}=e_g$, summing over all held-out genes in a cell gives
\[
\begin{aligned}
\sum_{g\in J_c}x_{c,g}x_{c,g}^{\top}
&=(k_c-2)v_cv_c^{\top}
  +\operatorname{diag}(v_c^2),\\
\sum_{g\in J_c}y_{c,g}x_{c,g}^{\top}
&=u_cv_c^{\top}
  -\operatorname{diag}(v_c),
\end{aligned}
\]
where $v_c^2$ denotes the entrywise square.
Thus the two moment sums can be accumulated using the sparse
outer products $v_cv_c^\top$ and $u_cv_c^\top$, together with
diagonal corrections, in $O(k_c^2)$ operations per cell,
without explicitly constructing the leave-one-out pairs.

\paragraph{Computational time.}
We set \(d=20k\) and streamed \(n=1.003\times 10^9\)
leave-one-gene-out pairs to compute the sufficient statistics
\(S_n\) and \(\widehat{E}_n\). This took 8.51 hours on a compute
node using 16 CPU cores. The node contained dual Intel Xeon
Platinum 7268 processors at 2.90 GHz with 48 physical cores.
No GPUs were used to compute these statistics.

The runtime is substantially higher than for OpenWebText because
each cell contains an average of 1933 nonzero vocabulary genes.
Consequently, the sparse outer-product computation for a cell
involves roughly \(1933^2 \approx 3.7\) million nonzero entries.
In contrast, the OpenWebText experiment uses a context window of
size \(12\), so each input vector has at most \(24\) nonzero
coordinates.

\subsection{TinyStories}
TinyStories is a moderate-scale english text dataset comprising of $2$-$3$ million short children-style stories generated by GPT-3.5 and GPT-4~\citep{eldan2023tinystories}. 

\paragraph{Data preprocessing.} We streamed $200k$ stories and kept the top $d=10k$ frequent tokens as our vocabulary set. We use a simple lowercase word tokenizer: each story is lowercased, and tokens are extracted as contiguous alphabetic letters. Thus, capitalization, puncutation, numbers, and other non-letter symbols are ignored.

\paragraph{Multi-output regression task.} This set-up is the same \textit{surrounding bag-of-words context to predict center word} as the OpenWebText setting (see Appendix~\ref{app:openwebtext}). Our statistical feature recovery experiment (see Figure~\ref{fig:tinystories_feature_tracking_vs_n}) aligns with the results of Theorem~\ref{thm:deterministic-tracking} despite being beyond the theorem's assumptions.

\paragraph{Sample complexity plot.}
Each sample is a context-to-center-token prediction example. We fix a
word vocabulary and construct \(x_i\) as the normalized bag of surrounding
context words, while \(y_i\) is the one-hot encoding of the center word.
The reference trajectory is computed from
\(N_{\mathrm{ref}}=10^7\) context-center examples.

For each trial, we randomly permute the \(N_{\mathrm{ref}}\) reference
examples without replacement and construct the finite-sample datasets
using nested prefixes of this permutation. Thus, within a trial, the
dataset at a smaller sample size is contained in the dataset at every
larger sample size. For each sample size \(n\), we compute
\[
\Delta_B(n)
=
\max_{1\le t\le T}
\left\|
B_t^{(n)}-B_t^{(\mathrm{ref})}
\right\|_{\mathrm{op}}.
\]
We plot the median across trials together with a 10--90 percentile band.

\paragraph{Hyperparameters.}
We use $\lambda, \epsilon = 1e$-3 and context window of $m=12.$ We collected $n=10^8$ samples to train Algorithm~\ref{alg:regularized-lin-rfm}. To construct the spectral embedding table induced by $B$ (Appendix~\ref{app:nearest_neighbor_computation}) we truncate to the top $k=128$ eigenvalues.

\begin{figure}[htbp]
\centering
\makebox[\textwidth][c]{%
\begin{minipage}{0.97\textwidth}
\centering

\subfigure[{Stable-rank trajectory of
\(\widetilde B_t = B_t-\epsilon^{1/4}I\).\label{fig:tinystories_stable_rank_panel}}][b]{%
\begin{minipage}[b]{0.45\linewidth}%
\centering
\vspace{0pt}
\includegraphics[
    width=\linewidth,
    trim=0.05cm 0.02cm 0.05cm 0.02cm,
    clip
]{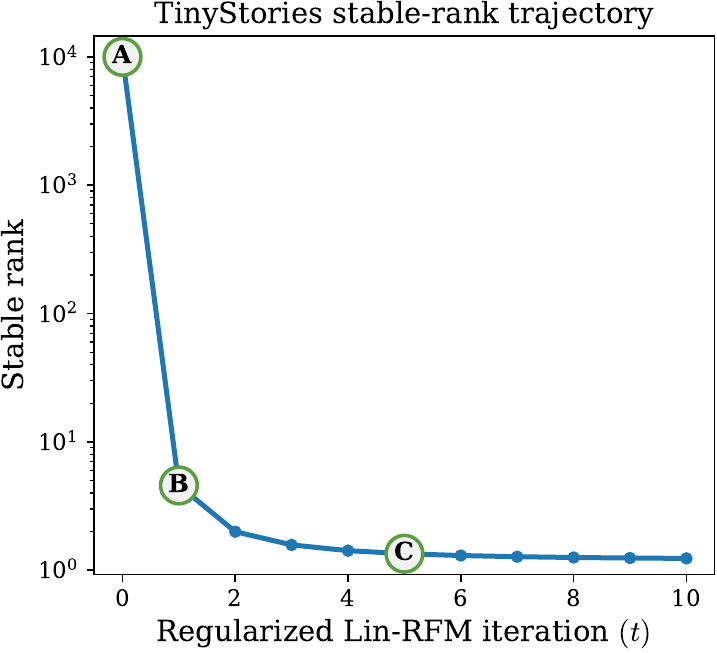}
\vspace{-0.2em}
\end{minipage}%
}
\hfill
\subfigure[{Top-5 nearest neighbors at checkpoints \(A,B,C\).\label{fig:tinystories_neighbors_panel}}][b]{%
\begin{minipage}[b]{0.53\linewidth}%
\centering
\vspace{0pt}
{\renewcommand{\arraystretch}{0.92}
\resizebox{\linewidth}{!}{%
\begin{tabular}{@{}l@{\quad}l@{\quad}l@{\quad}l@{}}
\toprule
\textbf{Query}
& \textbf{A (T=0)}
& \textbf{B (T=1)}
& \textbf{C (T=5)} \\
\midrule
\textbf{crayon}
& \makecell[l]{larry (1.00) \\ tons (1.00) \\ hospitality (1.00) \\ boost (1.00) \\ acts (1.00)}
& \makecell[l]{card (0.825) \\ sticker (0.806) \\ crown (0.797) \\ draws (0.769) \\ hat (0.759)}
& \makecell[l]{drew (0.650) \\ write (0.618) \\ circle (0.595) \\ draw (0.582) \\ thread (0.580)}
\\
\midrule
\textbf{monster}
& \makecell[l]{ozzie (1.00) \\ express (1.00) \\ weekend (1.00) \\ avocado (1.00) \\ shall (1.00)}
& \makecell[l]{wolf (0.859) \\ storm (0.845) \\ dragon (0.836) \\ giant (0.817) \\ tiger (0.815)}
& \makecell[l]{scared (0.668) \\ followed (0.653) \\ scary (0.630) \\ fire (0.629) \\ behind (0.616)}
\\
\midrule
\textbf{flower}
& \makecell[l]{postmaster (1.00) \\ facts (1.00) \\ customers (1.00) \\ result (1.00) \\ model (1.00)}
& \makecell[l]{star (0.815) \\ leaf (0.735) \\ butterfly (0.723) \\ caterpillar (0.717) \\ rainbow (0.709)}
& \makecell[l]{soft (0.667) \\ pink (0.629) \\ bee (0.619) \\ made (0.615) \\ princess (0.603)}
\\
\midrule
\textbf{sleep}
& \makecell[l]{violins (1.00) \\ approaching (1.00) \\ fewer (1.00) \\ haven (1.00) \\ lincoln (1.00)}
& \makecell[l]{stay (0.687) \\ bed (0.576) \\ relax (0.575) \\ get (0.572) \\ school (0.562)}
& \makecell[l]{nap (0.640) \\ brush (0.632) \\ asleep (0.605) \\ bed (0.575) \\ leave (0.573)}
\\
\midrule
\textbf{dragon}
& \makecell[l]{tom (1.00) \\ discuss (1.00) \\ flopped (1.00) \\ reveal (1.00) \\ nails (1.00)}
& \makecell[l]{lion (0.932) \\ knight (0.898) \\ horse (0.878) \\ prince (0.856) \\ frog (0.854)}
& \makecell[l]{bunny (0.754) \\ duck (0.747) \\ rabbit (0.705) \\ bear (0.703) \\ lion (0.701)}
\\
\bottomrule
\end{tabular}
}
}
\vspace{-0.10em}
\end{minipage}%
}

\end{minipage}
}

\vspace{-0.25em}
\caption{
\textbf{Left:} Stable-rank trajectory of
\(\widetilde B_t = B_t-\epsilon^{1/4}I\), with checkpoints \(A,B,C\).
\textbf{Right:} nearest-neighbor neighborhoods at those checkpoints.
}
\label{fig:tinystories_stable_rank_neighbors}
\end{figure}



\begin{figure}[htbp]
\centering
\includegraphics[
    width=0.72\textwidth,
    trim=0.05cm 0.05cm 0.05cm 0.05cm,
    clip
]{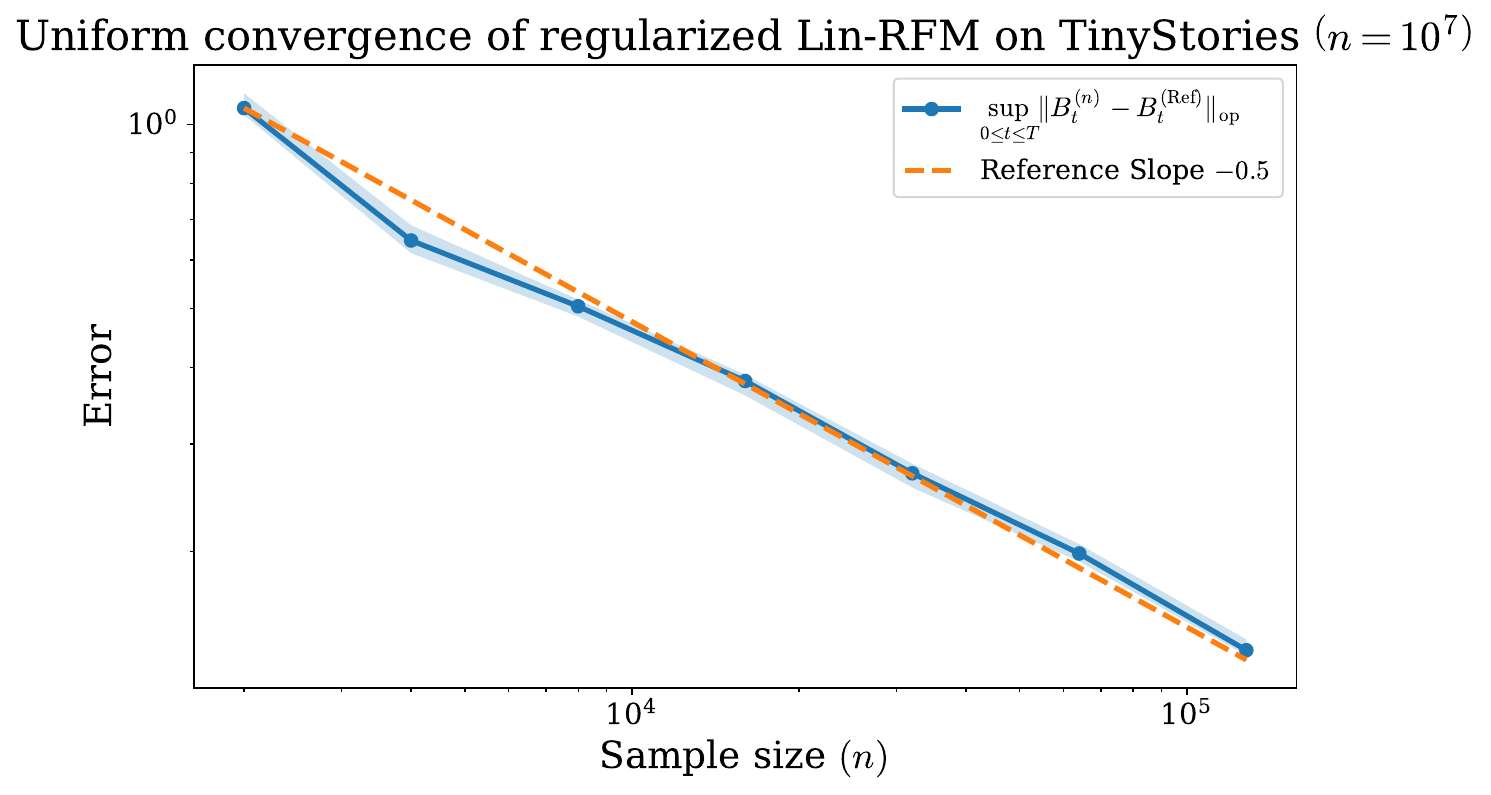}
\vspace{-0.35em}
\caption{
\textbf{TinyStories feature tracking.}
We first compute a reference trajectory \(B_t^{(\mathrm{Ref})}\) using \(n_{\mathrm{ref}}=10^7\) TinyStories context-to-center samples. For each smaller sample size \(n\), we draw random subsamples from these \(10^7\) examples without replacement, run regularized Lin-RFM for \(T=10\) iterations, and report
\(\sup_{0\le t\le T}\|B_t^{(n)}-B_t^{(\mathrm{Ref})}\|_{\mathrm{op}}\).
The empirical log-log slope is close to the reference \(n^{-1/2}\) rate, even though the TinyStories design is sparse, highly anisotropic, and outside the assumptions of Theorem~\ref{thm:deterministic-tracking}.
}
\label{fig:tinystories_feature_tracking_vs_n}
\end{figure}

\subsection{HuffPost News}\label{app:huffpost_news}
The HuffPost News dataset~\citep{misra2022news} consits of 209,527 Huffington Post news articles. Each article consists of a news headline, a summary of the article, and the category of the article. There are $C=42$ categories covering topics like comedy, media, money, travel, etc.

\paragraph{Data preprocessing.}
For each article $i$ in HuffPost News, we formed the input text $x_i \in \mathbb{R}^d$ by concatenating the headline and summary: 
\[
\operatorname{text}_i = \text{headline}_i + \text{summary}_i.
\]
We enforced the text to be lower-cased and removed the following stopwords:
\[
\text{the, a, an, and, or, of, to, in, on, for, with, is, are, was, were}
\]
to further remove words that carry little category-specific information. We apply the TF-IDF vectorizer to obtain $x_i = \text{TF-IDF(text}_i)$. Our targets $y_i \in \mathbb{R}^C$ are one-hot encoded vectors corresponding to the category that article $i$ lies in.

\paragraph{Multi-output regression task and other experiments.}
We perform an 80/20 train-test split with $n_\mathrm{train} = 167,621$ articles and $n_\mathrm{test} = 41,906$. We perform Algorithm~\ref{alg:regularized-lin-rfm} on $(X_\mathrm{train}, Y_\mathrm{train})$ and evaluate on $(X_\mathrm{test}, Y_\mathrm{test})$. Figure~\ref{fig:huffpost_test_mse} shows that subsequent lin-RFM iterations reduce test MSE
relative to the initial ridge predictor. Our nearest neighbor embeddings (Figure~\ref{fig:huffpost_stable_rank_neighbors}) and feature recovery experiment (Figure~\ref{fig:huffpost_feature_tracking}) are obtained by fitting our algorithm on all 209,527 articles.

\paragraph{Sample complexity plot.}
For HuffPost, each sample is an article-level supervised learning example.
We construct a fixed TF-IDF feature vector \(x_i\) from the article text,
and the target \(y_i\) is the one-hot encoding of the article category.
The reference statistics are computed using the full HuffPost dataset of
\(N_{\mathrm{ref}}=209{,}527\) articles. For each trial, we randomly
permute the articles and use nested prefixes for the finite sample sizes
\(n<N_{\mathrm{ref}}\). We compute the RFM trajectory from the
finite-sample sufficient statistics and compare it with the full-data
reference trajectory. As above, the plotted quantity is the median over
trials of
\[
\max_{1\le t\le T}
\left\|B_t^{(n)}-B_t^{(\mathrm{ref})}\right\|_{\mathrm{op}},
\]
with a 10--90 percentile band.

\paragraph{Hyperparameters.}
We set $\lambda = \epsilon = 0.001$. The TF-IDF vectorizer hyperparameters are:
\texttt{lowercase}=True,
\texttt{min\_df}=5,
\texttt{max\_df}=0.70,
and \texttt{ngram\_range}=(1,1). To construct the spectral embedding table
(see Appendix~\ref{app:nearest_neighbor_computation}),  we truncate to the top $k=64$ singular values.

\begin{figure}[htbp]
\centering
\includegraphics[
    width=0.70\textwidth,
    trim=0.05cm 0.05cm 0.05cm 0.05cm,
    clip
]{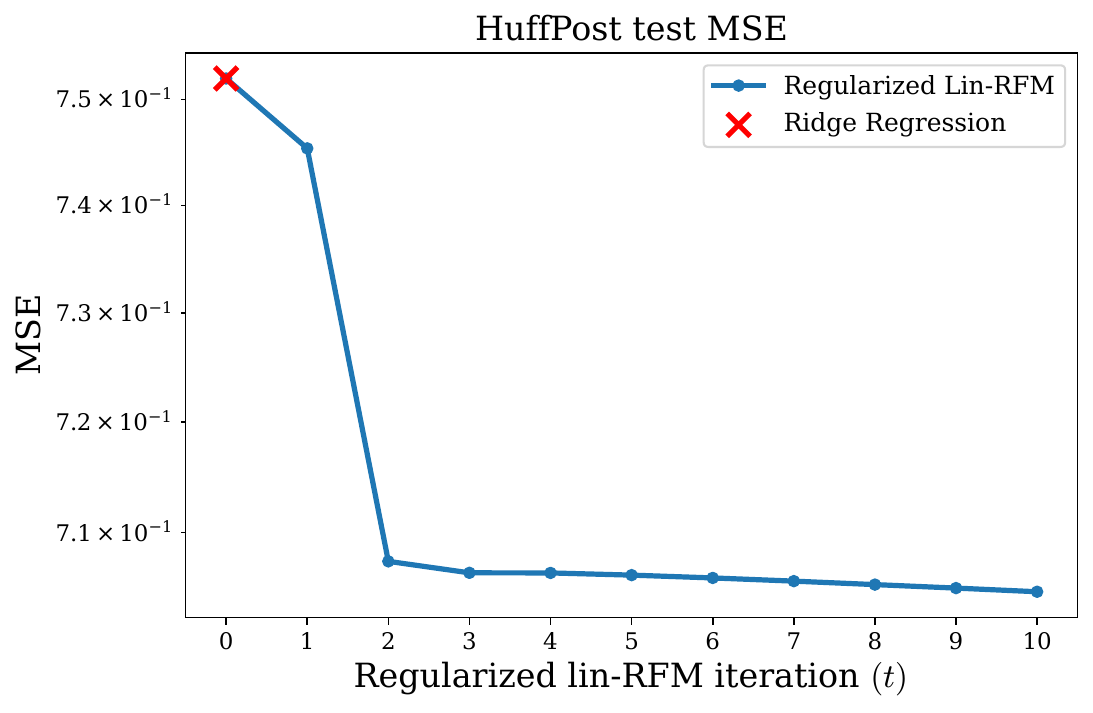}

\vspace{-0.35em}
\caption{
\textbf{HuffPost News test MSE.} 80/20 train-test split with $n_\mathrm{train} = 167,621$ articles and $n_\mathrm{test} = 41,906$. We perform  Algorithm~\ref{alg:regularized-lin-rfm} on $(X_\mathrm{train}, Y_\mathrm{train})$ and evaluate on $(X_\mathrm{test}, Y_\mathrm{test})$. We observe that our algorithm learns to generalize with respect to the MSE loss in Figure~\ref{fig:huffpost_test_mse}.
The ridge baseline is the initial predictor at $t=0$; subsequent
lin-RFM iterations reduce test MSE relative to this initial ridge fit.
}
\label{fig:huffpost_test_mse}
\end{figure}

\begin{figure}[htbp]
\centering
\makebox[\textwidth][c]{%
\begin{minipage}{0.97\textwidth}
\centering

\subfigure[{Stable-rank trajectory of
\(\widetilde B_t = B_t-\epsilon^{1/4}I\)..\label{fig:huffpost_stable_rank_panel}}][b]{%
\begin{minipage}[b]{0.41\linewidth}%
\centering
\vspace{0pt}
\includegraphics[
    width=\linewidth,
    trim=0.05cm 0.02cm 0.05cm 0.02cm,
    clip
]{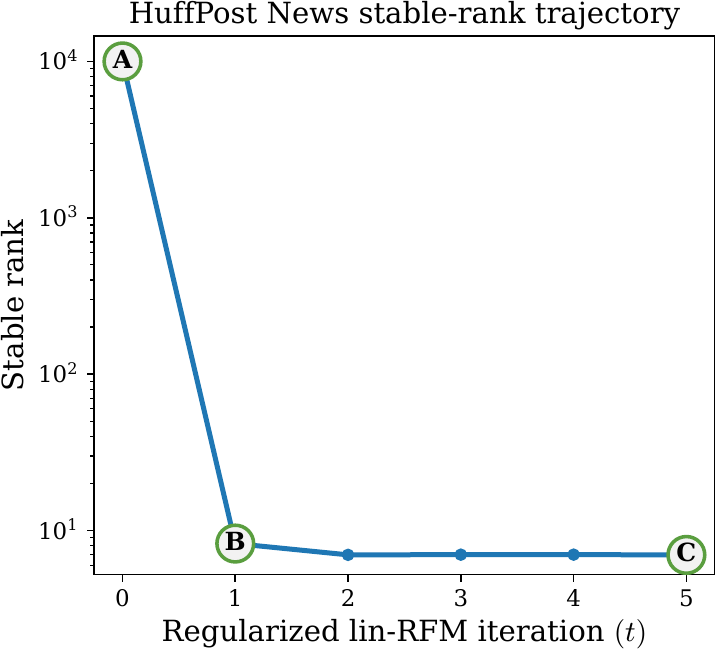}
\vspace{-0.2em}
\end{minipage}%
}
\hfill
\subfigure[{Top-5 nearest neighbors at checkpoints \(A,B,C\).\label{fig:huffpost_neighbors_panel}}][b]{%
\begin{minipage}[b]{0.56\linewidth}%
\centering
\vspace{0pt}
{\renewcommand{\arraystretch}{0.88}
\resizebox{\linewidth}{!}{%
\begin{tabular}{@{}l@{\quad}l@{\quad}l@{\quad}l@{}}
\toprule
\textbf{Query}
& \textbf{A (T=0)}
& \textbf{B (T=1)}
& \textbf{C (T=5)} \\
\midrule
\textbf{campaign}
& \makecell[l]{larry (1.00) \\ tons (1.00) \\ hospitality (1.00) \\ boost (1.00) \\ acts (1.00)}
& \makecell[l]{clinton (0.977) \\ marco (0.969) \\ senator (0.972) \\ republican (0.972) \\ jeb (0.967)}
& \makecell[l]{secretary (0.983) \\ lawmaker (0.977) \\ caucus (0.975) \\ rep (0.983) \\ republican (0.983)}
\\
\midrule
\textbf{iran}
& \makecell[l]{calls (1.00) \\ skill (1.00) \\ rug (1.00) \\ squeezes (1.00) \\ chicky (1.00)}
& \makecell[l]{netanyahu (0.958) \\ putin (0.956) \\ troops (0.943) \\ kurds (0.882) \\ palestinian (0.926)}
& \makecell[l]{netanyahu (0.985) \\ afghanistan (0.980) \\ israelis (0.870) \\ troops (0.971) \\ nuclear (0.962)}
\\
\midrule
\textbf{sports}
& \makecell[l]{tom (1.00) \\ discuss (1.00) \\ flopped (1.00) \\ reveal (1.00) \\ nails (1.00)}
& \makecell[l]{49ers (0.955) \\ nba (0.981) \\ basketball (0.976) \\ broncos (0.946) \\ nfl (0.980)}
& \makecell[l]{coach (0.990) \\ basketball (0.987) \\ hockey (0.975) \\ seahawks (0.977) \\ broncos (0.965)}
\\
\midrule
\textbf{gender}
& \makecell[l]{violins (1.00) \\ approaching (1.00) \\ fewer (1.00) \\ haven (1.00) \\ remote (1.00)}
& \makecell[l]{equality (0.923) \\ sexuality (0.855) \\ bi (0.793) \\ pride (0.848) \\ rupaul (0.845)}
& \makecell[l]{equality (0.957) \\ discriminate (0.849) \\ sexuality (0.924) \\ identity (0.920) \\ laverne (0.906)}
\\
\bottomrule
\end{tabular}
}
}
%
%
\end{minipage}%
}

\end{minipage}
}

\caption{
\textbf{Left:} Stable-rank trajectory of
\(\widetilde B_t = B_t-\epsilon^{1/4}I\)., with checkpoints \(A,B,C\).
\textbf{Right:} nearest-neighbor neighborhoods induced by $B_t$ with cosine similarity scores at those checkpoints (see Appendix~\ref{app:nearest_neighbor_computation} for details).
}
\label{fig:huffpost_stable_rank_neighbors}
\end{figure}

\begin{figure}[htbp]
\centering
\includegraphics[
    width=0.72\textwidth,
    trim=0.05cm 0.05cm 0.05cm 0.05cm,
    clip
]{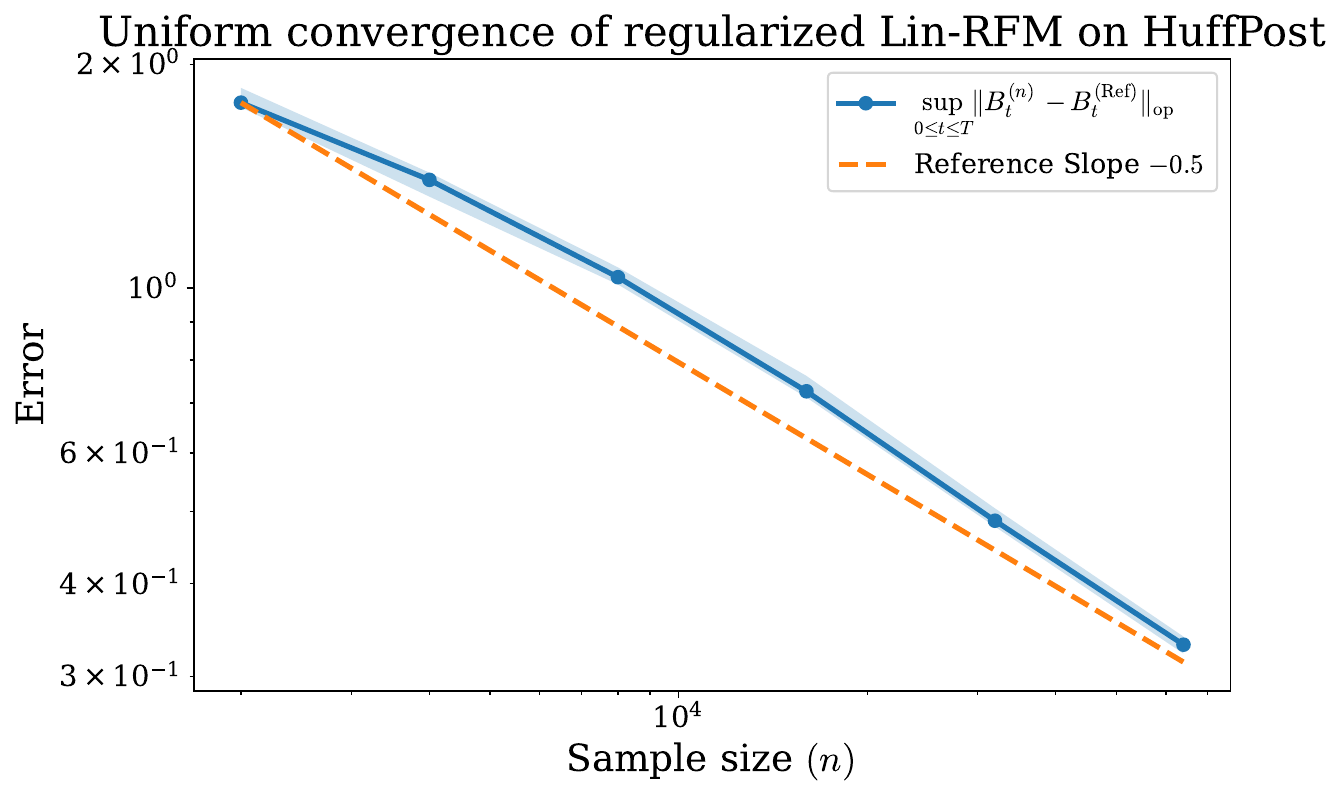}

\vspace{-0.35em}
\caption{
\textbf{HuffPost News feature tracking.}
We compute a large-sample reference trajectory \(B_t^{(\mathrm{Ref})}\) using the full HuffPost News dataset $(n=209,527)$. For each sample size \(n\), we draw subsamples without replacement, run regularized Lin-RFM for \(T=10\) iterations, and report
\(\sup_{0\le t\le T}\|B_t^{(n)}-B_t^{(\mathrm{Ref})}\|_{\mathrm{op}}\).
The empirical log-log slope is \(-0.493\), close to the reference \(n^{-1/2}\) rate, even though the HuffPost design is sparse, anisotropic, and outside the assumptions of Theorem~\ref{thm:deterministic-tracking}.
}
\label{fig:huffpost_feature_tracking}
\end{figure}

\FloatBarrier
\section{Complete nearest-neighbor tables}\label{app:neighbor-tables}
\subsection{Nearest neighbor embeddings}
\subsubsection{HLCA}
\begin{longtable}{@{}>{\raggedright\arraybackslash}p{0.12\textwidth}
                  >{\raggedright\arraybackslash}p{0.41\textwidth}
                  >{\raggedright\arraybackslash}p{0.41\textwidth}@{}}
\caption{\textbf{HLCA TF-IDF leave-one-out RFM gene-neighbor evolution.}
For each query gene, we report the top-5 nearest neighbors at \(t=1\) and \(t=5\), with cosine similarities in parentheses. Neighbors are computed from the \(B_t\)-induced gene embeddings with embedding dimension \(k=256\). Details in Appendix~\ref{app:hlca}.}
\label{tab:hlca_gene_neighbors_t1_t5_cosine}\\

\toprule
\textbf{Query} & \textbf{\(t=1\)} & \textbf{\(t=5\)} \\
\midrule
\endfirsthead

\toprule
\textbf{Query} & \textbf{\(t=1\)} & \textbf{\(t=5\)} \\
\midrule
\endhead

\bottomrule
\endfoot

\texttt{HBA1}
&
\makecell[l]{
\texttt{HBA2} (0.837)\\
\texttt{HBQ1} (0.822)\\
\texttt{TRDC} (0.816)\\
\texttt{LINC01505} (0.815)\\
\texttt{LIM2} (0.807)
}
&
\makecell[l]{
\texttt{LINC02446} (0.960)\\
\texttt{RGL4} (0.959)\\
\texttt{TESPA1} (0.956)\\
\texttt{ZNF683} (0.956)\\
\texttt{KLRC2} (0.953)
}
\\
\midrule

\texttt{CFTR}
&
\makecell[l]{
\texttt{TMEM92} (0.829)\\
\texttt{C16orf89} (0.804)\\
\texttt{SLC4A4} (0.793)\\
\texttt{PON3} (0.785)\\
\texttt{SHISA2} (0.771)
}
&
\makecell[l]{
\texttt{FREM2} (0.968)\\
\texttt{PDP2} (0.964)\\
\texttt{SLC6A20} (0.957)\\
\texttt{FOXP4} (0.957)\\
\texttt{RAB40C} (0.952)
}
\\
\midrule

\texttt{AVIL}
&
\makecell[l]{
\texttt{GARIN1A} (0.854)\\
\texttt{LINC02021} (0.831)\\
\texttt{ALG1L2} (0.819)\\
\texttt{CNN3-DT} (0.791)\\
\texttt{DISP2} (0.791)
}
&
\makecell[l]{
\texttt{EBF4} (0.929)\\
\texttt{TMEM8B} (0.919)\\
\texttt{FRAS1} (0.911)\\
\texttt{DLG2} (0.903)\\
\texttt{LINC01473} (0.899)
}
\\
\midrule

\texttt{FOXJ1}
&
\makecell[l]{
\texttt{LRRC71} (0.997)\\
\texttt{SPAG8} (0.995)\\
\texttt{DCDC2B} (0.994)\\
\texttt{PRR29} (0.994)\\
\texttt{GPR162} (0.994)
}
&
\makecell[l]{
\texttt{FAM216B} (0.995)\\
\texttt{MS4A8} (0.995)\\
\texttt{LDLRAD1} (0.994)\\
\texttt{CFAP43} (0.994)\\
\texttt{RSPH9} (0.993)
}
\\
\midrule

\texttt{CHGA}
&
\makecell[l]{
\texttt{ASCL1} (0.998)\\
\texttt{SCG3} (0.996)\\
\texttt{SST} (0.993)\\
\texttt{CALCA} (0.992)\\
\texttt{KIF1A} (0.990)
}
&
\makecell[l]{
\texttt{GFRA3} (0.996)\\
\texttt{SCG5} (0.996)\\
\texttt{CHGB} (0.996)\\
\texttt{ASCL1} (0.995)\\
\texttt{SCG3} (0.995)
}
\\
\midrule

\texttt{SFTPA1}
&
\makecell[l]{
\texttt{LAMP3} (0.697)\\
\texttt{ABCA3} (0.642)\\
\texttt{NAPSA} (0.640)\\
\texttt{SFTPD} (0.624)\\
\texttt{PEBP4} (0.591)
}
&
\makecell[l]{
\texttt{SFTPA2} (0.989)\\
\texttt{NAPSA} (0.987)\\
\texttt{SFTA2} (0.983)\\
\texttt{SFTPB} (0.982)\\
\texttt{LPCAT1} (0.980)
}
\\
\midrule


\texttt{KRT5}
&
\makecell[l]{
\texttt{S100A2} (0.987)\\
\texttt{PKP1} (0.960)\\
\texttt{TNS4} (0.954)\\
\texttt{CDH3} (0.952)\\
\texttt{PTPRZ1} (0.948)
}
&
\makecell[l]{
\texttt{PTPRZ1} (0.968)\\
\texttt{SERPINB5} (0.957)\\
\texttt{FAF1} (0.952)\\
\texttt{ARL4D} (0.950)\\
\texttt{TPBG} (0.945)
}
\\
\midrule

\texttt{MARCO}
&
\makecell[l]{
\texttt{MCEMP1} (0.976)\\
\texttt{CYP27A1} (0.964)\\
\texttt{ACP5} (0.963)\\
\texttt{VSIG4} (0.945)\\
\texttt{TREM1} (0.945)
}
&
\makecell[l]{
\texttt{VSIG4} (0.993)\\
\texttt{MS4A4A} (0.991)\\
\texttt{MCEMP1} (0.991)\\
\texttt{GLDN} (0.989)\\
\texttt{C1QC} (0.988)
}
\\
\midrule

\texttt{PECAM1}
&
\makecell[l]{
\texttt{RAMP3} (0.823)\\
\texttt{CLEC14A} (0.821)\\
\texttt{EGFL7} (0.818)\\
\texttt{GIMAP8} (0.807)\\
\texttt{LDB2} (0.775)
}
&
\makecell[l]{
\texttt{ENG} (0.962)\\
\texttt{RIPOR1} (0.940)\\
\texttt{ICAM2} (0.931)\\
\texttt{PRKCE} (0.931)\\
\texttt{STARD3} (0.924)
}
\\
\midrule

\texttt{RPL10A}
&
\makecell[l]{
\texttt{LINC00222} (0.144)\\
\texttt{LINC01179} (0.144)\\
\texttt{CENPN-AS1} (0.130)\\
\texttt{PSG2} (0.127)\\
\texttt{LINC02561} (0.123)
}
&
\makecell[l]{
\texttt{RPL3} (0.960)\\
\texttt{RPL7} (0.959)\\
\texttt{RPS17} (0.958)\\
\texttt{RPS18} (0.957)\\
\texttt{RPS9} (0.956)
}
\\
\midrule

\texttt{MT-ND1}
&
\makecell[l]{
\texttt{SNTN} (0.245)\\
\texttt{CFAP276} (0.240)\\
\texttt{RSPH1} (0.240)\\
\texttt{HOATZ} (0.239)\\
\texttt{CIMAP3} (0.238)
}
&
\makecell[l]{
\texttt{MT-ND4} (0.956)\\
\texttt{MT-ATP6} (0.955)\\
\texttt{MT-CYB} (0.954)\\
\texttt{MT-ND2} (0.952)\\
\texttt{MT-CO3} (0.952)
}
\\
\midrule

\texttt{TPI1}
&
\makecell[l]{
\texttt{ATP6V1G1} (0.908)\\
\texttt{PSME1} (0.883)\\
\texttt{PSMB1} (0.881)\\
\texttt{SKP1} (0.877)\\
\texttt{WDR83OS} (0.874)
}
&
\makecell[l]{
\texttt{RPLP0} (0.920)\\
\texttt{BTF3} (0.918)\\
\texttt{GAPDH} (0.917)\\
\texttt{RPL7A} (0.908)\\
\texttt{COX4I1} (0.908)
}
\\

\end{longtable}

\subsubsection{OpenWebText}
%

\scriptsize
\setlength{\tabcolsep}{2pt}
\renewcommand{\arraystretch}{1.08}
\providecommand{\categoryrule}{\specialrule{1.05pt}{3pt}{2pt}}
\providecommand{\queryrule}{\cmidrule(lr){1-3}}

\begingroup
\scriptsize
\setlength{\tabcolsep}{4pt}
\renewcommand{\arraystretch}{1.12}
\begin{longtable}{@{}p{2.0cm}p{5.85cm}p{5.85cm}@{}}
\caption{\textbf{Top-50 OpenWebText nearest-neighbor changes with cosine similarities.} Shown are the top-5 neighbors at \(t=1\) and \(t=5\) for the top-ranked query words. We picked this list based off the top-50 Jaccard disimilarity scores between the $t=1$ and $t=5$ nearest neighbor sets.}\label{tab:owt_top50_neighbors_cosine}\\
\toprule
\textbf{Query} & \textbf{\(\mathbf{t=1}\)} & \textbf{\(\mathbf{t=5}\)} \\
\midrule
\endfirsthead
\toprule
\textbf{Query} & \textbf{\(\mathbf{t=1}\)} & \textbf{\(\mathbf{t=5}\)} \\
\midrule
\endhead
\bottomrule
\endfoot

\textbf{\texttt{crushed}}
& \makecell[l]{lightly (0.796) \\toss (0.780) \\dip (0.777) \\tin (0.776) \\toast (0.773)}
& \makecell[l]{chopped (0.731) \\lemon (0.725) \\powder (0.709) \\dried (0.707) \\flour (0.699)}
\\
\midrule

\textbf{\texttt{buys}}
& \makecell[l]{gamble (0.822) \\buck (0.778) \\needless (0.761) \\theirs (0.759) \\geterrorstate (0.753)}
& \makecell[l]{sells (0.685) \\buying (0.662) \\buyer (0.650) \\seller (0.616) \\sell (0.602)}
\\
\midrule

\textbf{\texttt{identifies}}
& \makecell[l]{relates (0.786) \\acknowledges (0.780) \\recognizes (0.773) \\emphasize (0.763) \\illustrates (0.757)}
& \makecell[l]{identities (0.586) \\stereotypes (0.567) \\unidentified (0.567) \\identifying (0.565) \\identity (0.547)}
\\
\midrule

\textbf{\texttt{bounds}}
& \makecell[l]{alignment (0.773) \\secondly (0.770) \\needless (0.765) \\settling (0.764) \\illustrates (0.753)}
& \makecell[l]{ball (0.592) \\foul (0.584) \\defenders (0.573) \\basket (0.562) \\turnover (0.559)}
\\
\midrule

\textbf{\texttt{innocence}}
& \makecell[l]{awaiting (0.796) \\prosecuted (0.765) \\falsely (0.761) \\kidnapping (0.751) \\courtroom (0.750)}
& \makecell[l]{conviction (0.751) \\prosecution (0.738) \\jury (0.735) \\defendant (0.725) \\convictions (0.692)}
\\
\midrule

\textbf{\texttt{rouge}}
& \makecell[l]{needless (0.765) \\raymond (0.758) \\geterrorstate (0.758) \\apiculture (0.756) \\glen (0.756)}
& \makecell[l]{suburb (0.497) \\unarmed (0.464) \\qu (0.463) \\midwest (0.455) \\rapids (0.438)}
\\
\midrule

\textbf{\texttt{marking}}
& \makecell[l]{destined (0.755) \\hailed (0.755) \\conjunction (0.755) \\unofficial (0.754) \\anticipation (0.753)}
& \makecell[l]{marked (0.589) \\marks (0.557) \\celebrating (0.545) \\celebrated (0.529) \\milestone (0.501)}
\\
\midrule

\textbf{\texttt{improves}}
& \makecell[l]{drastically (0.754) \\negatively (0.753) \\vastly (0.752) \\diminished (0.748) \\noticeable (0.745)}
& \makecell[l]{improving (0.672) \\improved (0.666) \\enhance (0.654) \\improve (0.641) \\enhanced (0.604)}
\\
\midrule

\textbf{\texttt{hyper}}
& \makecell[l]{geterrorstate (0.766) \\apiculture (0.765) \\relentless (0.739) \\transforming (0.738) \\myriad (0.733)}
& \makecell[l]{server (0.544) \\vm (0.540) \\sql (0.531) \\socket (0.518) \\servers (0.516)}
\\
\midrule

\textbf{\texttt{classics}}
& \makecell[l]{delight (0.755) \\glorious (0.749) \\fashioned (0.746) \\aforementioned (0.744) \\beautifully (0.743)}
& \makecell[l]{classic (0.560) \\genre (0.531) \\favourite (0.500) \\vintage (0.497) \\indie (0.488)}
\\
\midrule

\textbf{\texttt{disposal}}
& \makecell[l]{cement (0.757) \\machinery (0.743) \\deter (0.741) \\feasible (0.737) \\adequately (0.732)}
& \makecell[l]{radioactive (0.653) \\waste (0.590) \\contamination (0.575) \\reactor (0.545) \\reactors (0.543)}
\\
\midrule

\textbf{\texttt{refined}}
& \makecell[l]{wax (0.772) \\combines (0.741) \\incorporate (0.735) \\simplicity (0.728) \\reminiscent (0.725)}
& \makecell[l]{vegetable (0.616) \\olive (0.592) \\coconut (0.577) \\petroleum (0.555) \\ingredient (0.550)}
\\
\midrule

\textbf{\texttt{catalog}}
& \makecell[l]{aforementioned (0.748) \\lastly (0.746) \\needless (0.734) \\conjunction (0.730) \\entirety (0.730)}
& \makecell[l]{vinyl (0.603) \\retailer (0.561) \\ebay (0.531) \\merchandise (0.528) \\editions (0.501)}
\\
\midrule

\textbf{\texttt{harmony}}
& \makecell[l]{theirs (0.746) \\contradiction (0.740) \\visions (0.735) \\embracing (0.730) \\transforming (0.727)}
& \makecell[l]{essence (0.579) \\virtue (0.555) \\buddha (0.544) \\desires (0.542) \\ideals (0.528)}
\\
\midrule

\textbf{\texttt{aurora}}
& \makecell[l]{shannon (0.741) \\raymond (0.739) \\geterrorstate (0.729) \\apiculture (0.728) \\rouge (0.726)}
& \makecell[l]{shootings (0.478) \\skies (0.462) \\shoots (0.456) \\gunman (0.447) \\shooting (0.440)}
\\
\midrule

\textbf{\texttt{nova}}
& \makecell[l]{rcmp (0.749) \\calgary (0.738) \\winnipeg (0.724) \\provinces (0.720) \\edmonton (0.719)}
& \makecell[l]{ontario (0.809) \\ottawa (0.788) \\canada (0.777) \\quebec (0.777) \\canadians (0.777)}
\\
\midrule

\textbf{\texttt{mockup}}
& \makecell[l]{lastly (0.735) \\beacon (0.735) \\needless (0.733) \\tc (0.723) \\fischer (0.717)}
& \makecell[l]{download (0.727) \\pdf (0.632) \\downloads (0.585) \\gluten (0.552) \\downloaded (0.541)}
\\
\midrule

\textbf{\texttt{reckless}}
& \makecell[l]{irresponsible (0.751) \\intentional (0.732) \\falsely (0.724) \\distracted (0.716) \\fined (0.715)}
& \makecell[l]{unlawful (0.614) \\felony (0.605) \\kidnapping (0.589) \\committing (0.587) \\charged (0.560)}
\\
\midrule

\textbf{\texttt{smallest}}
& \makecell[l]{chunk (0.732) \\lengths (0.731) \\scales (0.729) \\sheer (0.727) \\measuring (0.719)}
& \makecell[l]{sizes (0.746) \\size (0.730) \\smaller (0.713) \\larger (0.672) \\sized (0.637)}
\\
\midrule

\textbf{\texttt{proximity}}
& \makecell[l]{confined (0.740) \\stretches (0.739) \\settling (0.722) \\desirable (0.717) \\aforementioned (0.715)}
& \makecell[l]{populated (0.557) \\surrounding (0.504) \\distances (0.499) \\farther (0.485) \\close (0.485)}
\\
\midrule

\textbf{\texttt{discoveries}}
& \makecell[l]{discovering (0.766) \\myths (0.724) \\phenomena (0.721) \\darwin (0.716) \\traces (0.706)}
& \makecell[l]{discovery (0.736) \\scientific (0.690) \\scientists (0.672) \\biology (0.667) \\evolution (0.637)}
\\
\midrule

\textbf{\texttt{horn}}
& \makecell[l]{resemble (0.731) \\nod (0.728) \\reminiscent (0.722) \\proceeded (0.722) \\moose (0.716)}
& \makecell[l]{guinea (0.556) \\africa (0.547) \\elephants (0.504) \\americas (0.501) \\elephant (0.498)}
\\
\midrule

\textbf{\texttt{surfaced}}
& \makecell[l]{confirming (0.738) \\accompanying (0.728) \\rumor (0.724) \\verge (0.717) \\captures (0.711)}
& \makecell[l]{uploaded (0.612) \\clips (0.599) \\viral (0.597) \\footage (0.577) \\disturbing (0.574)}
\\
\midrule

\textbf{\texttt{brent}}
& \makecell[l]{shawn (0.721) \\casey (0.719) \\dale (0.713) \\curtis (0.712) \\shannon (0.712)}
& \makecell[l]{crude (0.627) \\rig (0.560) \\oil (0.550) \\sands (0.543) \\spill (0.534)}
\\
\midrule

\textbf{\texttt{guarantees}}
& \makecell[l]{agreeing (0.719) \\ensures (0.719) \\granting (0.714) \\mandated (0.710) \\securing (0.705)}
& \makecell[l]{guarantee (0.742) \\obligations (0.600) \\guaranteed (0.571) \\obligation (0.544) \\protects (0.512)}
\\
\midrule

\textbf{\texttt{hatch}}
& \makecell[l]{gardner (0.727) \\claire (0.726) \\blasted (0.713) \\akin (0.704) \\blunt (0.697)}
& \makecell[l]{senate (0.603) \\schumer (0.577) \\mcconnell (0.577) \\senators (0.577) \\mitch (0.568)}
\\
\midrule

\textbf{\texttt{reservation}}
& \makecell[l]{lodge (0.749) \\settling (0.715) \\geterrorstate (0.702) \\apiculture (0.701) \\moose (0.699)}
& \makecell[l]{tribe (0.639) \\tribes (0.631) \\indians (0.587) \\tribal (0.587) \\pradesh (0.536)}
\\
\midrule

\textbf{\texttt{logistics}}
& \makecell[l]{bolster (0.739) \\organizational (0.708) \\advise (0.706) \\tasked (0.706) \\conjunction (0.701)}
& \makecell[l]{operations (0.637) \\operational (0.619) \\marines (0.555) \\deployed (0.553) \\personnel (0.532)}
\\
\midrule

\textbf{\texttt{cleaner}}
& \makecell[l]{cement (0.727) \\converting (0.726) \\footprint (0.715) \\cleaned (0.696) \\recycling (0.694)}
& \makecell[l]{fuels (0.688) \\clean (0.677) \\coal (0.626) \\fossil (0.618) \\pollution (0.611)}
\\
\midrule

\textbf{\texttt{poles}}
& \makecell[l]{observing (0.729) \\unusually (0.724) \\hollow (0.713) \\circular (0.704) \\dug (0.684)}
& \makecell[l]{clouds (0.617) \\pole (0.613) \\comet (0.608) \\venus (0.593) \\jupiter (0.583)}
\\
\midrule

\textbf{\texttt{influx}}
& \makecell[l]{fleeing (0.740) \\serbia (0.719) \\migrant (0.690) \\seekers (0.689) \\hungarian (0.688)}
& \makecell[l]{migrants (0.788) \\migration (0.715) \\refugees (0.714) \\asylum (0.701) \\refugee (0.698)}
\\
\midrule

\textbf{\texttt{ranch}}
& \makecell[l]{backyard (0.724) \\glen (0.723) \\lodge (0.705) \\greene (0.686) \\pine (0.682)}
& \makecell[l]{hills (0.582) \\springs (0.574) \\acre (0.557) \\creek (0.551) \\acres (0.537)}
\\
\midrule

\textbf{\texttt{comeback}}
& \makecell[l]{eventual (0.729) \\deserved (0.726) \\triumph (0.711) \\sweep (0.677) \\crushing (0.663)}
& \makecell[l]{victory (0.656) \\win (0.602) \\beat (0.601) \\victories (0.591) \\winning (0.586)}
\\
\midrule

\textbf{\texttt{arch}}
& \makecell[l]{carved (0.713) \\geterrorstate (0.699) \\apiculture (0.699) \\lodge (0.699) \\hollow (0.691)}
& \makecell[l]{kernel (0.523) \\gnome (0.511) \\ubuntu (0.505) \\linux (0.491) \\debian (0.487)}
\\
\midrule

\textbf{\texttt{splash}}
& \makecell[l]{lastly (0.732) \\sneak (0.708) \\popping (0.700) \\stunned (0.677) \\shields (0.669)}
& \makecell[l]{melee (0.633) \\damage (0.588) \\shield (0.555) \\bonus (0.547) \\dealt (0.546)}
\\
\midrule

\textbf{\texttt{mann}}
& \makecell[l]{klein (0.713) \\raymond (0.702) \\fischer (0.691) \\neill (0.686) \\kenneth (0.684)}
& \makecell[l]{mit (0.591) \\von (0.585) \\dem (0.582) \\ist (0.581) \\das (0.576)}
\\
\midrule

{\tiny\textbf{\texttt{constituents}}}
& \makecell[l]{contentious (0.701) \\unpopular (0.698) \\irresponsible (0.689) \\outraged (0.689) \\outspoken (0.677)}
& \makecell[l]{legislators (0.538) \\lawmakers (0.531) \\legislative (0.522) \\bipartisan (0.499) \\caucus (0.495)}
\\
\midrule

\textbf{\texttt{feeds}}
& \makecell[l]{accustomed (0.701) \\myriad (0.697) \\continually (0.688) \\needless (0.681) \\lastly (0.679)}
& \makecell[l]{feed (0.636) \\streams (0.459) \\rss (0.449) \\profiles (0.427) \\facebook (0.426)}
\\
\midrule

\textbf{\texttt{lesser}}
& \makecell[l]{likewise (0.706) \\absent (0.688) \\aforementioned (0.684) \\consequently (0.681) \\inclined (0.661)}
& \makecell[l]{convictions (0.537) \\conviction (0.513) \\kidnapping (0.513) \\counts (0.510) \\convicted (0.510)}
\\
\midrule

\textbf{\texttt{dock}}
& \makecell[l]{departed (0.701) \\pins (0.690) \\lastly (0.680) \\aforementioned (0.675) \\specially (0.674)}
& \makecell[l]{cargo (0.553) \\vessel (0.549) \\ship (0.544) \\aboard (0.538) \\ports (0.535)}
\\
\midrule

\textbf{\texttt{monica}}
& \makecell[l]{lauren (0.695) \\glen (0.685) \\carol (0.682) \\richards (0.681) \\melissa (0.675)}
& \makecell[l]{angeles (0.549) \\barbara (0.527) \\los (0.524) \\sacramento (0.494) \\lakers (0.466)}
\\
\midrule

\textbf{\texttt{freddie}}
& \makecell[l]{troubled (0.695) \\timothy (0.682) \\crushing (0.676) \\kurt (0.672) \\curtis (0.669)}
& \makecell[l]{mortgage (0.593) \\securities (0.562) \\homeowners (0.528) \\bonds (0.528) \\lending (0.517)}
\\
\midrule

\textbf{\texttt{segregation}}
& \makecell[l]{exclusion (0.713) \\discriminatory (0.672) \\predominantly (0.666) \\advocated (0.664) \\injustice (0.664)}
& \makecell[l]{racial (0.703) \\blacks (0.647) \\whites (0.543) \\slavery (0.537) \\racism (0.518)}
\\
\midrule

\textbf{\texttt{bust}}
& \makecell[l]{gamble (0.680) \\crushing (0.669) \\needle (0.668) \\popping (0.666) \\hooked (0.665)}
& \makecell[l]{illicit (0.531) \\heroin (0.511) \\cocaine (0.490) \\dealers (0.490) \\crack (0.489)}
\\
\midrule

\textbf{\texttt{dependence}}
& \makecell[l]{ineffective (0.688) \\potent (0.685) \\contributes (0.655) \\diminished (0.652) \\extraction (0.644)}
& \makecell[l]{addiction (0.613) \\opioid (0.594) \\substance (0.572) \\heroin (0.558) \\illicit (0.557)}
\\
\midrule

\textbf{\texttt{franchises}}
& \makecell[l]{aforementioned (0.667) \\continuity (0.665) \\ninja (0.661) \\wildly (0.658) \\saga (0.658)}
& \makecell[l]{franchise (0.681) \\legends (0.609) \\leagues (0.539) \\mlb (0.517) \\ea (0.515)}
\\
\midrule

\textbf{\texttt{arranged}}
& \makecell[l]{instructed (0.670) \\proceeded (0.667) \\consisted (0.659) \\consisting (0.651) \\lodge (0.649)}
& \makecell[l]{wedding (0.545) \\arrangement (0.533) \\separated (0.524) \\divorced (0.515) \\marry (0.502)}
\\
\midrule

\textbf{\texttt{inputs}}
& \makecell[l]{switches (0.663) \\analog (0.658) \\assign (0.657) \\optimal (0.656) \\diagram (0.654)}
& \makecell[l]{input (0.698) \\parameters (0.603) \\output (0.561) \\linear (0.528) \\variables (0.522)}
\\
\midrule

\textbf{\texttt{boulder}}
& \makecell[l]{greene (0.691) \\aurora (0.661) \\jacobs (0.645) \\davidson (0.644) \\rapids (0.640)}
& \makecell[l]{montgomery (0.564) \\madison (0.529) \\county (0.518) \\springs (0.508) \\delaware (0.490)}
\\
\midrule

\textbf{\texttt{ramp}}
& \makecell[l]{pushes (0.661) \\bump (0.658) \\cement (0.655) \\crashing (0.646) \\sidewalk (0.644)}
& \makecell[l]{pedestrian (0.565) \\lane (0.549) \\lanes (0.531) \\intersection (0.527) \\boulevard (0.518)}
\\
\end{longtable}
\endgroup

\end{document}